\renewcommand{\cftchappagefont} 
             {\usefont{T1}{bch}{b}{n}\selectfont}
\titleformat{\chapter}[display]
{\vspace{-2em}\Large\sffamily}
{\filleft\MakeUppercase{\chaptertitlename} \Huge\thechapter}
{20pt}
{\titlerule \vspace{1ex} \filright \huge}
[\vspace{0.5ex} \titlerule \vspace{-10pt}]
\newtheoremstyle{default}
{}
{}
{\itshape}
{}
{\bfseries}
{}
{ }
{\thmname{#1}\thmnumber{ #2}\thmnote{ (#3)}}
\theoremstyle{default}
\newtheorem{theorem}{Theorem}
\newtheorem{definition}{Definition}
\newtheorem{lemma}{Lemma}
\theoremstyle{remark}
\newcommand{\origttfamily}{}
\let\origttfamily=\ttfamily
\renewcommand{\ttfamily}{\origttfamily \hyphenchar\font=`\-}
\newcommand{\coffeeerror}{\text{\ding{88}}}
\newcommand{\coffeecup}{\text{\Coffeecup}}
\newcommand{\coffeeok}{\text{\ding{51}}}
\newcommand*\Let[2]{\State #1 $\gets$ #2}
\algnewcommand\Not{\textbf{not}\xspace}
\algnewcommand\Nil{\textbf{nil}\xspace}
\algnewcommand\True{\textbf{true}\xspace}
\algnewcommand\False{\textbf{false}\xspace}
\tikzset{%
	symbol/.style={state, ellipse},
	reset/.style={draw, regular polygon, regular polygon sides=8, xscale=2, minimum size=0.9cm, label=center:reset},
	final/.style={draw,rectangle, inner sep=8pt}
}
\begin{document}
	\begin{titlepage}

\vspace*{6\baselineskip}

{\parindent 0pt

\hrule height 1pt
\vspace{2ex}
{\Huge
Active Automata Learning with Adaptive Distinguishing Sequences\par
}
\vspace{2ex}
\hrule height 1pt

\vspace{2\baselineskip}

\textbf{Markus Theo Frohme}

\vspace{8\baselineskip}

This document is closely based on the Master thesis of Markus Theo Frohme, submitted at TU Dortmund University, Germany on September 21st, 2015 and may be used as the reference for the \enquote{ADT} algorithm.

\vspace{2\baselineskip}

\textbf{Abstract}

This document investigates the integration of adaptive distinguishing sequences into the process of active automata learning (AAL).
A novel AAL algorithm \enquote{ADT} (\emph{adaptive discrimination tree}) is developed and presented.
Since the submission of the original thesis, the presented algorithm has been integrated into LearnLib \cite{DBLP:conf/cav/IsbernerHS15} -- an open-source library for active automata learning -- and has been successfully used in related fields of research \cite{10.1007/978-3-319-77935-5_24}.

}

\end{titlepage}

	\cleardoublepage
	\pagenumbering{roman}
	\chapter*{Changelog, 2019-01-23}
\label{cha:changelog}

This chapter lists all changes (in the order in which they appear) between this version and the version submitted at TU Dortmund University, Germany on September 21st, 2015.

\begin{itemize}

\item Replaced the title page of the thesis with a (retrospective) introduction.

\item Added this \enquote{Changelog} chapter.

\item Fixed a small error in the example of \Cref{sec:aalex}.

\item Addressed a small inconsistency in the example of \Cref{sec:imreplex}.

\item Added addtional/updated references in the \enquote{Future Work} chapter.

\item Removed the \enquote{Statement in Lieu of an Oath} chapter.

\end{itemize}

	\cleardoublepage
	\tableofcontents
	\cleardoublepage
	\pagenumbering{arabic}

	\chapter{Introduction}
\label{cha:intro}

The ever-growing complexity of today's soft- and hardware makes testing both an indispensable necessity and a challenging task.
At the given scale, manual testing is unfeasible, which raises the urge for automated approaches.
At the same time, the ongoing digitalization of security- and safety-centric applications requires exhaustive verification of key properties.
A field of research that tackles these problems and has yielded sophisticating results is that of model-based testing \cite{DBLP:conf/dagstuhl/2004test} and model checking \cite{Baier:2008:PMC:1373322}.

Formal verification methods, depending on the scenario, allow the automated generation of tests or the automated evaluation of test properties.
Being based on formal models, a successful verification is also able to provably guarantee certain properties of the system under testing.
Key to a successful application of these techniques is a formal specification of the target system.
This requirement, however, poses a problem to many real-world applications:
The lack of formal specifications for software or hardware hinders the employment of formal verification methods.

Creating formal specifications for soft- or hardware components is not only a tedious task but also prone to errors.
Not precisely specifying critical system behavior renders any formal verification methods redundant.
This problem is imminent in situations where e.g. third-party components, whose internal structure is often unknown, are integrated.
The question arises: How can one automatically extract a representative formal model from an unknown soft- or hardware component?

A potential answer to this question is given by the field of active automata learning.
Active automata learning describes the process of inferring a formal abstraction of an unknown black-box system based on its observable input/output behavior.
By actively interacting with the system, the learning algorithm (or simply learner) explores the structure of the system and ultimately yields an automaton -- a formal specification that is commonly used among formal verification methods -- that is behaviorally equivalent to the (abstracted) target system.

Initially, the effort that eventually led to the active learning paradigm was mainly motivated by a theoretical point of view.
As a consequence, certain specific characteristics have to be considered when employing active automata learning in real-world applications (see below).
Nonetheless, there exists several instances of successful applications in real-world environments \cite{Peled:2001:BBC:767345.767349, DBLP:conf/itc/HungarMS03, Raffelt:2008:HTW:1390832.1390833, conf/iceccs/IssarnySJBGKCITBS09,caseStudiesLBT}. 
The examples further show, that active automata learning is not only limited to the use case of formal verification methods.
The general possibility to extract formal specifications from black-box systems on a behavioral level is useful in numerous ways.
Improving tool support \cite{DBLP:conf/cav/VardhanV06,libalf,DBLP:conf/cav/IsbernerHS15} and the growing number of competitions \cite{zulu, DBLP:conf/isola/HowarSM10, stamina} focusing on the practical applicability of active learning and encouraging learning based verification techniques show the increasing interest in this field of research.

The initial active learning framework and learning algorithm $L^*$ were proposed by Angluin in \cite{Angluin:1987}.
Since then, not only the practical applicability of active automata learning has matured, but also algorithmic aspects have been subject to many extensions and improvements.
Yet, the core concepts of the initial approach can still be found in many of today's learning variants.

Conceptually, active automata learning is an iterative approach.
By continuously exploring the target system and verifying assumptions about it, the learner constructs evolving hypotheses about the target system that eventually converge against its true behavior.
The \enquote{protocol} of how a learning algorithm can interact with a system is formalized by two types of oracles: membership and equivalence oracles.
Abstracting from the concrete target system, the oracles allow two basic kinds of interaction with the system to learn:

\begin{description}

\item[Membership Queries] (MQs) form the basic instrument of communication between the learning algorithm and the system under learning (commonly abbreviated as SUL).
The learner can pose a membership query to the membership oracle containing a sequence of input stimuli.
The oracle applies these stimuli to the SUL and answers the membership query by returning the observed behavior.
The term \textit{membership} query originates from the fact that Angluins framework was initially designed for learning systems that resemble deterministic finite automata, where a membership query would answer if a word is a \textit{member} of the language induced by the SUL.

\item[Equivalence Queries] (EQs) are answered by the equivalence oracle and used to verify the assumptions of the learner about the SUL.
In most cases they are assumed to be \textit{strong} equivalence queries, meaning they not only indicate if the assumptions of the learner are correct or not but also provide an active counterexample in case of a failed verification.
The learner may then use the information of the counterexample to update its assumptions accordingly.

\end{description}

These concepts of communication are strongly connected to the procedure by which the learning algorithm explores the target system.
\Cref{fig:aal} depicts a sketch of the internal structure of this process, that many algorithms inherit.

\begin{figure}[h]
	\centering
	\begin{tikzpicture}[thick,->,>=stealth',font=\sffamily]

	\node[draw,rectangle,rounded corners,inner sep=.5cm] (learner) at(0,0) {Learner};
	\node[draw,rectangle,rounded corners,inner sep=.3cm,align=center] (mq) at(5,0) {Membership-\\Oracle};
	\node[draw,rectangle,rounded corners,inner sep=.5cm] (sul) at(8,0) {SUL};
	\node[draw,rectangle,rounded corners,inner sep=.3cm,align=center] (eq) at(0,-4) {Equivalence-\\Oracle};

	\draw (learner) edge[bend left=15] node[anchor=south] {\ding{203} MQ} (mq); 
	\draw (mq) edge[bend left=15] node[anchor=north] {\ding{204} Response} (learner); 

	\draw[<->,dashed] (mq) -- (sul); 

	\draw (learner) edge[bend left=15] node[anchor=west] {\ding{205} Hypothesis} (eq); 
	\draw (eq) edge[bend left=15] node[anchor=east,align=center] {\ding{207} Counter-\\Example} (learner); 

	\draw (eq) -- node[anchor=south] {\ding{207} Result} ++(-3,0); 
	\draw ++(-3,0) -- node[anchor=south] {\ding{202} Start} (learner); 

	\draw[<->,dashed] (eq.0) to[out=0, in=270] node[anchor=north west] {\ding{206} EQ} (sul.270);
\end{tikzpicture}
	\caption{A sketch of the internal structure of active learning algorithms}
	\label{fig:aal}
\end{figure}
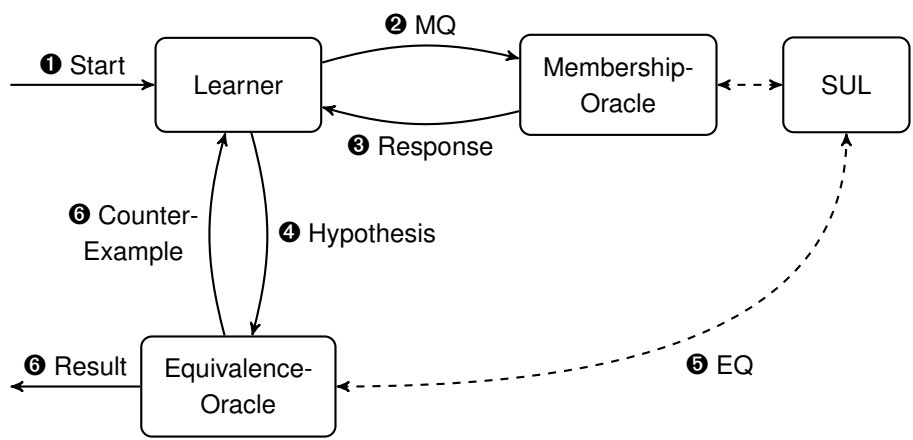

The learning process starts with the learner posing possibly multiple membership queries to the membership oracle.
By processing the answers of the membership oracle -- and therefore the answers of the SUL -- the learner constructs a hypothesis that, based on the observed behavior, resembles the target system.
Once the learner is certain, that its internal hypothesis is behaviorally equivalent to the SUL, it proposes its hypothesis to an equivalence oracle to check for true equivalence.
Depending on the result, further execution is determined.
If the equivalence query indicates true equivalence, the current hypothesis may be returned as the final result, meaning the learning process has successfully finished.
If, however, the equivalence query yields a counterexample, it is returned to the learner, which may use the counterexample and additional membership queries to refine its hypothesis and repeat the above procedure.

As stated earlier, the concepts of active automata learning were initially developed in a theoretical environment.
Transitioning these concepts to real applications hold additional challenges, as indicated by the dashed connections.

On the one hand, it is the membership oracle that connects the learner with the target system, as it translates the membership queries to concrete stimuli of the SUL.
This poses a problem, if the complexity of the learner's hypothesis and the complexity of the SUL differ.
If the hypothesis of the learner is not able to represent certain characteristics of the SUL, information will be lost when either applying the input stimuli or observing the reaction of the SUL.
Ultimately, the final model may not be able to capture the critical behavior.

Additionally, in its initial formalization, membership queries were assumed to be independent of each other.
In reality, this is often realized by \emph{resetting} the target system to a dedicated initial state.
Each membership query is then preceded by a reset, to ensure their independence.

On the other hand, it is in general not possible to construct true equivalence queries, because the equivalence problem for black-box systems is undecidable \cite{Moore56}. 
As a result, true correctness of the final model cannot be guaranteed for arbitrary black-box systems.
In practice, equivalence queries are usually approximated by a multitude of regular membership queries.
These might e.g. be randomly generated sequences of input stimuli or follow more sophisticated approaches, such as conformance testing \cite{DBLP:conf/dagstuhl/Gargantini04}.

Although these challenges seem critical at first, the mentioned use cases show that active automata learning is still a beneficial concept to domains such as model-based testing.

\section{Statement of the Problem}
\label{sec:problem}

Besides the inherent issues of applying the concepts of active automata learning to real-world systems, the approach faces another pragmatic challenge: runtime.
While being able to use a formal specification may open up possibilities for an improved workflow, the actual process of extracting such a specification introduces an additional unit of work.
In the case of active automata learning, the effective runtime depends on a variety of parameters.

The first criteria that may come to mind when analyzing the runtime of the learning process is the complexity (or efficiency) of the learning algorithm itself.
However, a fundamental part of active automata learning is the execution of membership and (approximated) equivalence queries on the target system.
Answering these queries slowly, will also affect the runtime of the learning process as a whole.
Reports \cite{Cho:2010:IAF:1866307.1866355,teachersCrowd,Choi:2013:GGT:2509136.2509552} show that under real-life conditions, it is in fact often the performance and complexity of the SUL that dominates the overall runtime.

While a learning algorithm generally has no control over these properties of the SUL, it has control over the content and the amount of membership queries it poses.
As a result, the active learning community often analyzes learning algorithms with regard to the number and the length of posed membership and equivalence queries.
This allows to abstract from technical details and focus on the query complexity of an algorithm rather than the performance of its implementation or the SUL.
While in general the rule \textit{the fewer, the better} is reasonable, this thesis takes a closer look at the performance of the learning process with special focus on resets.

As stated earlier, membership queries are assumed to be independent from each other, which is often realized by preceding each membership query with a special reset stimuli.
There are however application domains (cf. \cite{Choi:2013:GGT:2509136.2509552}) where resets form an expensive operation.
Especially software for embedded devices or smartphones is usually developed and tested in simulated environments which are easier to manage than the actual hardware device.
A straight-forward implementation of a reset could therefore be realized by restarting the simulator.
However, given today's complexity of hardware platforms, the simulator may take a significant amount of time until it has restarted and is able to process input stimuli again.
Regarding the total runtime, it would be beneficial to reduce the amount of resets even at the expense of possibly longer membership queries.

There has been research \cite{rivest1993inference,Freund:1993} on learning algorithms that follow a complete no-reset approach.
While originally motivated by the problem of a non-reliable or absent reset, such approaches may also pose improvements to the scenario described above.
On the contrary, to work correctly, such algorithms often require the target system to be strongly connected, which may drastically reduce their applicability to many real-world systems.

\section{Scope of This Thesis}

To tackle the aforementioned problem, this thesis elaborates an approach that aims at reducing the total amount of executed resets during the learning process and therefore improving the performance of active automata learning for applications with expensive resets.
This approach -- and hence the scope of this thesis -- will however be limited to reactive target systems, for which inferring a regular abstraction is possible.
Furthermore it is assumed that the target system has a reliable reset mechanism.

The core idea of the developed approach is to combine the concept of \emph{adaptive distinguishing sequences} -- a well-studied method from the field of model-based testing -- with active automata learning.
Both approaches face the same challenge: the problem of state identification.
In the case of active automata learning, the solution to the state identification problem is mainly driven by the iterative separation of (generally unknown) system states using multiple membership queries and therefore multiple resets.
Adaptive distinguishing sequences aim at identifying states with a single (adaptive) input sequence, however, requiring that all states are known beforehand.
Using the knowledge acquired throughout the learning process may allow to utilize their benefits in the active learning environment:
Adaptive distinguishing sequences do not require resets, which makes them a more favorable solution for the given state identification problem.
However, this beneficial property comes at the cost of their potential non-existence.

To successfully combine the two approaches, this thesis will present an active learning algorithm that allows the integration of the concepts of adaptive distinguishing sequences.
It will discuss the impact of using adaptive distinguishing sequences in the active learning process and propose a set of heuristics that aim at reducing the number of executed resets.
Furthermore an analysis of the developed heuristics and a comparison with competing state-of-the-art active learning algorithms will be presented.

\section{Outline}
\label{sec:outlook}

In detail, the chapters of this thesis cover the following topics:

\begin{description}

\item[\Cref{cha:preliminaries}] starts with the introduction of the basic notation and assumptions used throughout this thesis.
Based on that, it presents related work and their results, which are used as a foundation for the developed concepts of this thesis.

\item[\Cref{cha:aal}] continues to present the initial base algorithm.
While not yet using any concepts of adaptive distinguishing sequences specifically, it provides an environment that will allow their seamless integration.

\item[\Cref{cha:replacements}] presents the approach by which adaptive distinguishing sequence are integrated into the active learning process and discusses the effects of doing so.

\item[\Cref{cha:heuristics}] presents a set of heuristics that actively employ adaptive distinguishing sequences to reduce the amount of resets and therefore potentially improve the performance of the active learning process.

\item[\Cref{cha:ads}] briefly discusses the elaborated approaches to compute different kinds of adaptive distinguishing sequences used throughout the heuristics.

\item[\Cref{cha:eval}] inspects the proposed algorithm and techniques with regard to its theoretical complexity and empirical performance.
While synthetic benchmarks allow to expose certain characteristics, a set of real-life examples is used to show its applicability to real-life scenarios.

\item[\Cref{cha:future}] concludes the thesis with a final resume and an outlook on possible further research.

\end{description}

	\chapter{Preliminaries and Related Work}
\label{cha:preliminaries}

This chapter gives a preliminary overview of the key concepts that allow combining active automata learning and adaptive distinguishing sequences for learning reactive systems.
It introduces the basic notation used throughout this thesis and presents ideas and results of related fields of research.
However, most of the discussed concepts will only be sketched, as an in-depth analysis would exceed the scope of this chapter and is already covered in the referenced literature.

\section{Running Example}
\label{sec:runex}

For a better understanding, this and the following chapters will utilize a running example.
For explaining the concepts of active automata learning and adaptive distinguishing sequences and -- later -- visualizing the execution of the developed algorithms, the exemplary target system to learn will be represented by a real-life application: a coffee machine \cite{practical2011}.
The behavior of this coffee machine can be described as follows:

\begin{itemize}

\item The coffee machine has three components, which offer a direct way of interaction:

\begin{itemize}
\item $water$ describes the action of filling the water tank of the coffee machine with water,
\item $pod$ describes the action of putting a coffee pod the intended compartment and
\item $button$ describes the action of starting the coffee machine.
\end{itemize}

Additionally, the user may $clean$ the coffee machine by clearing the coffee pod and emptying the water tank.

\item Repeatedly filling the water tank, (re-)placing the coffee pod or cleaning the coffee machine, has no (observable) effect.

\item If the coffee machine is turned on, two possible situations may occur:

\begin{itemize}
\item If the water tank was filled with water and a coffee pod was added, the coffee machine will produce coffee.
\item If, however, any of the two requirements were not met, the coffee machine will break irreparably.
\end{itemize}

\item After successfully brewing a cup of coffee, it is necessary to clean the machine.
Any other interaction will break the machine again.

\end{itemize}

\section{Formal Definitions}
\label{sec:formaldef}

In order to formalize algorithms and prove certain properties, a formal way for representing the behavior of the target system is of special interest.
A formal model that has successfully been used to especially represent the behavior of reactive systems, is that of Mealy machines \cite{mealy1955method}.

\begin{definition}[Mealy machines]\label{def:mealy}
A Mealy machine $\mathcal{M}$ is a tuple $\langle S, s_0, I, O, \delta, \lambda \rangle$, where 
\begin{compactitem}
\item $S$ denotes a non-empty set of \emph{states},
\item $s_0 \in S$ denotes the \emph{initial state},
\item $I$ denotes a finite set of \emph{input symbols},
\item $O$ denotes a finite set of \emph{output symbols},
\item $\delta : S \times I \rightarrow S$ denotes a \emph{state transition function} and
\item $\lambda : S \times I \rightarrow O$ denotes an \emph{output function}.
\end{compactitem}
In certain situations, a component may be annotated with a subscript (e.g. $\delta_{\mathcal{M}}$) to refer to a component of a certain Mealy machine $\mathcal{M}$.
If the Mealy machine, which is referred to, is clear from the context, the annotation is omitted.
\end{definition}

Semantically, a Mealy machine starts in its initial state $s_0$.
Upon receiving an input symbol $i \in I$ it transitions from its current state $s_i \in S$ into a successor state $s_j \in S$ as defined by its state transition function $\delta(s_i, i) = s_j$, while emitting an output symbol $o \in O$ as defined by its output function $\lambda(s_i, i) = o$.
The behavior of such reactive systems is then defined by the sequence of observed output symbols after applying a sequence of input symbols.

Syntactically, a Mealy machine may be displayed in text form by enumerating the elements of the respective sets and providing $\delta$ and $\lambda$ as look-up tables.
However, a more convenient notation is the representation as a graph, where the states of a Mealy machine model the nodes of the graph and the transition and output function are represented by labeled edges.
The input and output alphabet of the displayed Mealy machine is then given implicitly by the union of all input/output symbols over all edges.
An example for this kind of visualization is given in \Cref{fig:coffeemachine}, which shows a potential Mealy machine abstraction of the coffee machine of \Cref{sec:runex}.

\begin{figure}[ht]
	\centering
	\resizebox{0.9\textwidth}{!}{
		\begin{tikzpicture}[thick, ->, >=stealth']

	\clip (-8.2,1) rectangle (9.6,-10.6);
	\node[state, initial above, initial text=] (a) {a};
	\node[state] (b) at(-5,-3) {b};
	\node[state] (c) at( 5,-3) {c};
	\node[state] (d) at(-5,-7) {d};
	\node[state] (d') at( 5,-7) {d'};
	\node[state] (e) at( 0,-7) {e};
	\node[state] (f) at( 9,-9) {f};

	\draw (a) edge[bend right=10] node[anchor=south east] {$pod$ / \coffeeok} (b);
	\draw (a) edge[bend left=10] node[anchor=south west] {$water$ / \coffeeok} (c);
	\draw (a) edge[out=0,in=90, looseness=1.8] node[pos=0.2,anchor=south] {$button$ / \coffeeerror} (f);
	\draw (a) edge[loop left] node[anchor=east] {$clean$ / \coffeeok} (a);

	\draw (b) edge[loop below] node[anchor=north west] {$pod$ / \coffeeok} (b);
	\draw (b) edge[bend right] node[anchor=west] {$water$ / \coffeeok} (d);
	\draw (b) edge[out=225, in=180, looseness=1.9] node[anchor=north] {$button$ / \coffeeerror} (f);
	\draw (b) edge[out=0,in=270,looseness=1] (a);

	\draw (c) edge[bend left] node[anchor=west] {$pod$ / \coffeeok} (d');
	\draw (c) edge[loop below] node[anchor=north east] {$water$ / \coffeeok} (c);
	\draw (c) edge[out=0,in=90,looseness=1.5] node[anchor=south,pos=0.15] {$button$ / \coffeeerror} (f);
	\draw (c) edge[out=180,in=270,looseness=1] (a);

	\draw (d) edge[loop below] node[anchor=north] {$\{pod, water\}$ / \coffeeok} (d);
	\draw (d) edge node[anchor=south] {$button$ / \coffeecup} (e);
	\draw (d) edge[out=45,in=270,in looseness=1.5,out looseness=0] (a);

	\draw (d') edge[loop below] node[anchor=north] {$\{pod, water\}$ / \coffeeok} (d');
	\draw (d') edge node[anchor=south] {$button$ / \coffeecup} (e);
	\draw (d') edge[out=135,in=270,out looseness=0,in looseness=1] (a);

	\draw (e) edge node[anchor=west,pos=0.9] {$clean$ / \coffeeok} (a);
	\draw (e) edge[out=270,in=180,out looseness=0.5,in looseness=0] node[anchor=north east,pos=0.15] {$I \setminus \{clean\} / $\coffeeerror} (f);

	\draw (f) edge[loop below] node[anchor=north east] {$I / $\coffeeerror} (f);
\end{tikzpicture}
	}
	\caption{A (potential) Mealy abstraction of the coffee machine example}%
	\label{fig:coffeemachine}
\end{figure}
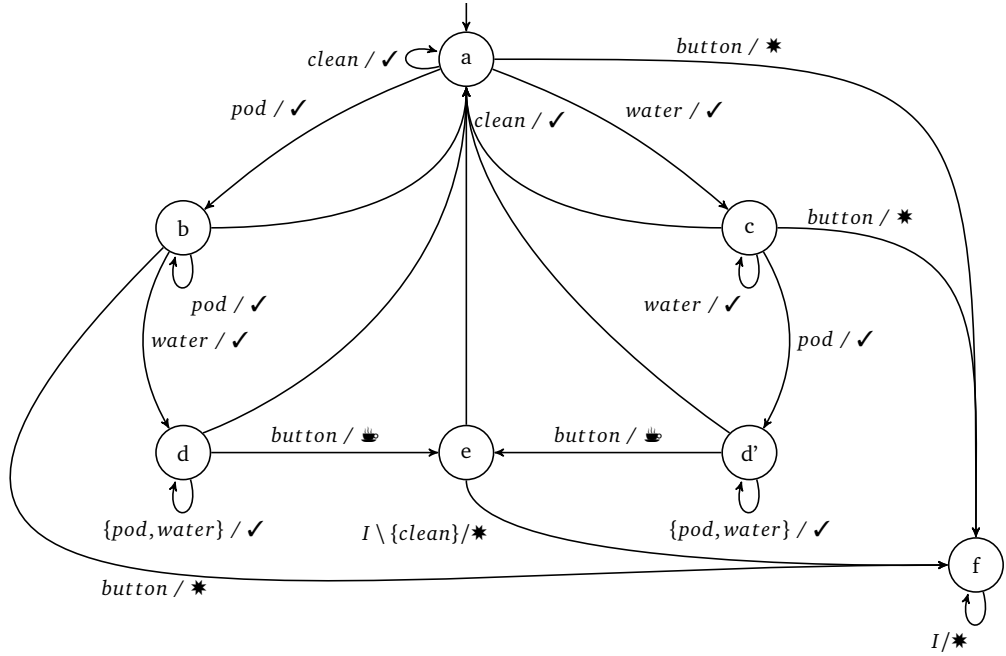

\Cref{def:mealy} gives a very general definition of Mealy machines, as it does not impose any constraints on the different components of a Mealy machine.
However, certain assumptions and restrictions prove useful to make the definition of algorithms easier and are in some cases even required for proofs of correctness and termination.
Therefore throughout this thesis, unless specified otherwise, Mealy machines -- and the abstractions of reactive systems they represent -- are assumed to have the following properties:

\begin{definition}[Finiteness]
A Mealy machine $\mathcal{M}$ is called finite iff its set of states is finite, i.e. $\vert S \vert < \infty$.
\end{definition}

Finiteness is property that restricts the complexity of a Mealy machine.
In formal language theory (cf. \cite{Hopcroft:2006:IAT:1196416}), the finiteness of a deterministic finite automaton ensures that the corresponding language it accepts only has a finite number of equivalence classes, which constitutes its regularity \cite{nerode58}.
The idea of regularity can be extended to the case of Mealy machines \cite{practical2011}.

Regularity does not directly impose restrictions on the SUL itself. 
It does, however, limit the complexity of the inferable abstraction that is intended to represent the behavior of the SUL.
For many use cases it is possible to find reasonable regular abstractions that cover the key behavioral aspects of interest.
However, there exist applications where the requirement of regularity may make capturing essential behavioral aspects harder, or worse, prevent it.
There has been research \cite{registerautomata,rmm} on extending the concepts of active automata learning to more complex models to tackle these issues.
The approaches presented in this thesis will however be limited to inferring a regular (Mealy) abstraction of the SUL.

Regarding the learning of such abstractions, most learning algorithms explore the states (or more precise, the equivalence classes) of the abstracted target system.
Limiting this property to a finite domain allows to prove the termination of the learning algorithm after a finite amount of time.

\begin{definition}[Determinism]
A Mealy machine $\mathcal{M}$ is called transition- (output-) deterministic iff its transition (output) function $\delta$ ($\lambda$) maps each input tuple $(s, i) \in S \times I$ to at most one successor state (output).
A Mealy machine $\mathcal{M}$ is called deterministic iff it is transition- and output-deterministic.
\end{definition}

Technically, \Cref{def:mealy} already enforces determinism, because in non-deterministic Mealy machines $\delta$ and $\lambda$ can map into the powerset $2^S$ and $2^O$ respectively.
While there exist non-deterministic Mealy machines for which no behaviorally equivalent deterministic Mealy machine can be found, determinism is often a question of abstraction when integrating the SUL into a learning environment (i.e. creating an interface for membership queries).
There exists research for learning non-deterministic Mealy machines \cite{DBLP:conf/icgi/KhaliliT14} and computing adaptive distinguishing sequences for non-deterministic Mealy machines \cite{kushik13}, but for the scope of this thesis, determinism is assumed.

\begin{definition}[Completeness]
A Mealy machine $\mathcal{M}$ is called transition- (output-) complete iff its transition (output) function $\delta$ ($\lambda$) is total, i.e. defined for every possible combination of state and input symbol.
A Mealy machine $\mathcal{M}$ is called complete iff it is transition- and output-complete.
\end{definition}

Completeness is a property that is useful for the definition of algorithms as there is no need for special treatment of undefined behavior.
It is a property that can be artificially added to the SUL when creating the Mealy abstraction, as can be seen in the coffee machine example from \Cref{sec:runex}: 
Once the coffee machine is in an erroneous state, it may not even react to certain input signals and come to a complete halt.
The Mealy abstraction however may still be able to process further input signals, as it has access to the real coffee machine and may return the \coffeeerror-symbol if the erroneous state is detected.

Within the abstraction, completeness can generally be realized by adding an additional sink-state that loops every input symbol to itself. 
Every undefined transition in the original system maps into the designated sink-state by optionally emitting a special \textit{error} or \textit{undefined} symbol.
Hence, the completeness requirement does not affect possible target systems.

\begin{definition}[Minimality]
A Mealy machine $\mathcal{M}$ is called minimal iff there exists no equivalent Mealy machine $\mathcal{M}'$ with fewer states than $\mathcal{M}$.
Two (deterministic) Mealy machines are called equivalent iff they produce the same output sequence for every possible input sequence.
\end{definition}

The concept of minimality is rather a theoretical motivation than a restriction on the target system, as can be seen by the coffee machine example from \Cref{fig:coffeemachine} again.
The depicted Mealy machine is not minimal, because the states $d$ and $d'$ are equivalent.
While this structure may correspond to the true implementation of the target system, the two states cannot be distinguished by any observable behavior.
As a consequence, a learning algorithm will only be able to distinguish six distinct states, opposed to the original seven.
The inferred model will still be equivalent (as defined above), though not necessarily isomorphic to the target abstraction.

However, the minimality of the target abstraction is a necessary property to prove the exact (up to isomorphism) model inference of learning algorithms.
Furthermore, is it a requirement for the application and computation of adaptive distinguishing sequences (cf. \Cref{sec:ads}).
In situations where necessary, the abstraction of the target system can be minimized in polynomial time \cite{Hopcroft:1971:NLN:891883}.

Besides semantic properties, Mealy machines may also be extended syntactically, mainly by overloading the transition- and output functions, which allows for more convenient notations in certain situations.
The main concept of this extension is to define the behavior on input-\emph{words}, the concatenation of multiple input symbols.
The established syntax is, however, also applicable to output-words.

\begin{definition}[Words over an alphabet]
Let $\Sigma$ be an input (or output) alphabet of a Mealy machine.
A word $w$ of length $n \in \mathbb{N}_0$ is defined as the concatenation of $n$ input (or output) symbols $i_j \in \Sigma, 1 \leq j \leq n$.
For concatenation, the following syntax is used:
\begin{align*}
w &= w_1 \cdot ... \cdot w_n = w_1 ... w_n & \forall w \in \Sigma^n, n \in \mathbb{N}_0
\end{align*}
The special case $n = 0$ denotes the empty word $\varepsilon$.
A similar syntax is used to denote the concatenation of words:
\begin{align*}
uv &= u \cdot v = u_1 \cdot ... \cdot u_n \cdot v_1 \cdot ... \cdot v_m & \forall u \in \Sigma^n, v \in \Sigma^m 
\end{align*}

In certain situations it is useful to extract certain subsequences of words, for which the following syntax is used:
Let $w \in \Sigma^n$, then 
\begin{align*}
w_{i:j} &= w_i \cdot ... \cdot w_j & \forall i \leq j \in \{1, ..., n\} 
\end{align*}
denotes the syntax for the sub-word starting at index $i$ and ending at index $j$.
Note that $w_{i:j} \in \Sigma^{j-i+1}$.
For $i > j$, $w_{i:j}$ denotes the empty word $\varepsilon$.

For retrieving the length of a word, the following syntax is used:
\begin{align*}
\vert w \vert &= n & \forall w \in \Sigma^n, n \in \mathbb{N}_0
\end{align*}
\end{definition}

With the given syntax, the transition- and output function may then be extended to operate on words.

\begin{definition}[Extension of transition- and output functions of Mealy machines]
Let $\mathcal{M} = \langle S, s_0, I, O, \delta, \lambda \rangle$ denote a Mealy machine and $s \in S, w \in I^*$ a state and an input word of arbitrary length.
The extension of transition- and output-function $\delta$ and $\lambda$ to the domain $S \times I^*$ is defined as follows:
\begin{align*}
\delta(s, w) &= 
	\begin{cases} 
		\delta(\delta(s, w_1), w_{2:\vert w \vert}) & \textrm{ if } \vert w \vert > 0\\
		s & \textrm{ if } \vert w \vert = 0
	\end{cases}\\
\lambda(s, w) &=
	\begin{cases}
		\lambda(s, w_1) \cdot \lambda(\delta(s, w_1), w_{2:\vert w \vert}) & \textrm{ if } \vert w \vert > 0\\
		\varepsilon & \textrm{ if } \vert w \vert = 0
	\end{cases}
\end{align*}

For simply tracing an input sequence $w \in I^*$, the following syntax is used:
\begin{align*}
\delta(w) = \delta(s_0, w)\\
\lambda(w) = \lambda(s_0, w)
\end{align*}

\end{definition}

Throughout the learning process, a learning algorithm typically interacts with the SUL by posing membership queries.
These queries are answered by the membership oracle, which provides an abstracted and the only interface to the application.
Therefore, if in the following the term \emph{target system} or \emph{SUL} is used, it is usually referred to the (regular) Mealy abstraction of the actual application.

For representing the acquired knowledge about the behavior of the target system, many learning algorithms use temporary Mealy machines.
To emphasize the distinction between temporary models and the target system, interaction with the real system (via membership queries) will be formalized by the membership query function.

\begin{definition}[Membership Query Function]
Let $I$ denote an input alphabet and $O$ an output alphabet.
For an input word $w \in I^n$ of length $n \in \mathbb{N}_0$, $mq: I^n \rightarrow O^n$ denotes the membership query function that poses the given input word to the target system and returns the observed behavior.
Additionally, for input words $w \in I^n, w' \in I^m$, let $mq: I^n \times I^m \rightarrow O^m$ denote the overloaded function, that allows to specify a prefix input word, whose output is ignored.
Given the traditional membership query function, this can be defined as:
\begin{align*}
mq(u, v) = mq(u \cdot v)_{\vert u \vert + 1: \vert u \cdot v \vert}
\end{align*}
\end{definition}

\section{Active Automata Learning}
\label{sec:aal}

With the above definitions, the goal of active automata learning for the given scenario can be described as \textit{inferring an unknown, finite, deterministic, complete and minimal Mealy machine based on its observable behavior.}
An approach that many active learning algorithms pursue is given by the idea of partition refinements:

The design of Mealy machines suggests that its behavior is inherently defined by means of its output traces, i.e. sequences of input symbols and the emitted sequences of output symbols.
However, it can be shown, that Mealy machines can be completely characterized by a functional $P: I^* \rightarrow O$, that only returns the last observation after applying an input sequence.
With the introduced syntax, the functional can be defined as $P(w) = \lambda(w)_{\vert w \vert}$.

For a functional $P$ of a Mealy machine, a relation $\equiv_P$ on its input arguments can be defined, that transfers the concepts of the Nerode relation \cite{nerode58} of formal language theory to Mealy machines:

\begin{definition}[Equivalence of words with respect to P \cite{practical2011}]
Two words $u, v \in I^*$ are equivalent with respect to $\equiv_P$, iff for all continuations $w \in I^*$ the concatenated words $uw$ and $vw$ are mapped to the same output by $P$:

$$u \equiv_P v \Leftrightarrow \forall w \in I^*: P(uw) = P(vw)$$
\end{definition}

It is easy to see, that $\equiv_P$ resembles an equivalence relation, as the equality of the returned output symbols is reflexive, symmetric and transitive.
For further notation, the equivalence class of an input word $w$ with respect to $\equiv_P$ will be denoted as $[w]_{\equiv_P}$.

As a consequence of employing techniques of the well-studied field of formal language theory, one can also transfer its results (Myhill-Nerode theorem).
By definition, the target system under learning (i.e. its abstraction) is assumed to be finite.
This means, the index of $\equiv_P$ (i.e. the number of its equivalence classes) is finite as well.
Furthermore does the minimality of the target system allow to conclude, that each equivalence class directly corresponds to a distinct state of the target system.
An equivalence class therefore consists of all possible input sequences that lead to the state, the equivalence class represents.

Given full information about the functional $P$ and its induced equivalence relation $\equiv_P$, one can therefore construct an equivalent automaton, that is isomorphic to the target system:

\begin{definition}[Construction of the canonical automaton]\label{def:canonical}
Given a functional $P: I^* \rightarrow O$ and the induced equivalence relation $\equiv_P$, the canonical automaton $\mathcal{M}^c = \langle S^c, s_0^c, I^c, O^c, \delta^c, \lambda^c \rangle$ can be constructed as follows:

\begin{compactitem}
\item $S^c = $ the set of equivalence classes of $\equiv_P$,
\item $s^c_0 = [\varepsilon]_{\equiv_P}$,
\item $I^c = I$,
\item $O^c = O$,
\item $\delta^c([w]_{\equiv_P}, i) = [wi]_{\equiv_P} ~\forall [w]_{\equiv_P} \in S^c, i \in I^c$ and
\item $\lambda^c([w]_{\equiv_P}, i) = P(wi) ~\forall [w]_{\equiv_P} \in S^c, i \in I^c$
\end{compactitem}
\end{definition}

Inferring the canonical automaton therefore meets the requirements of the active learning process.
However, while an active learning algorithm generally has access to the functional $P$ by means of the membership oracle (i.e. $P(w) = mq(w)_{\vert w \vert}$), it lacks information about the equivalence classes of $\equiv_P$.
At this point, the idea of partition refinements materializes.

The learning process starts with the assumption of a single equivalence class and constructs a local hypothesis based on this assumption.
In most cases however, unless the target system in fact consists of a one-state automaton, a single equivalence class is too coarse as it unifies all true equivalence classes of the target system.
By posing an equivalence query, the learner receives information about input sequences for which the output of the local hypothesis and the true target system differ -- a clear indication, that an equivalence class of the local hypothesis is too coarse and needs to be refined.
By alternating membership queries -- to construct local canonical automata -- and equivalence queries -- to possibly refine the local hypothesis -- the active learning algorithm successively refines discovered partitions until eventually convergence against the (true) canonical automaton is achieved.

To further formalize this process and to present the conceptual base algorithm used for the approaches developed in this thesis, the following section will present a variation of the Discrimination Tree learning algorithm.

\subsection{Discrimination Tree Algorithm}
\label{sec:dt}

The Discrimination Tree algorithm is an active learning algorithm proposed by Kearns and Vazirani \cite{Kearns:1994:ICL:200548}.
Key to its design are two core data structures:

\begin{compactitem}

\item a tentative hypothesis that stores information about the discovered equivalence classes and represents the learners assumptions about the structure of the target system and 
\item the discrimination tree, a tree data structure that stores input sequences, that allow to distinguish equivalence classes of the target system.

\end{compactitem}

\noindent
For a better distinction between the local hypothesis and the target system, henceforth the local hypothesis will be labeled with $\mathcal{H}$, whereas the target system will be labeled with $\mathcal{M}$.

As described in the previous section, the algorithm starts with the initial assumption of a single equivalence class.
The situation for the coffee machine example is depicted in \Cref{fig:dt_1}:

\begin{figure}[ht]
	\centering
	\centering
\subcaptionbox
{Initial hypothesis}
[0.5\textwidth]
{
	\begin{tikzpicture}[->,thick,>=stealth']
		\node[state,initial, initial text={}] (q0) at(0,0) {$s_0$};
		\draw (q0) edge[loop above] node[anchor=south,align=center] {$I \setminus \{button\}$ / \coffeeok\\$button$ / \coffeeerror} (q0);
	\end{tikzpicture}
}%
\subcaptionbox
{Initial discrimination tree}
[0.5\textwidth]
{
	\begin{tikzpicture}
		\node[final] (q1) {$s_0$};
	\end{tikzpicture}
}
	\caption{Initial hypothesis and discrimination tree}%
	\label{fig:dt_1}
\end{figure}
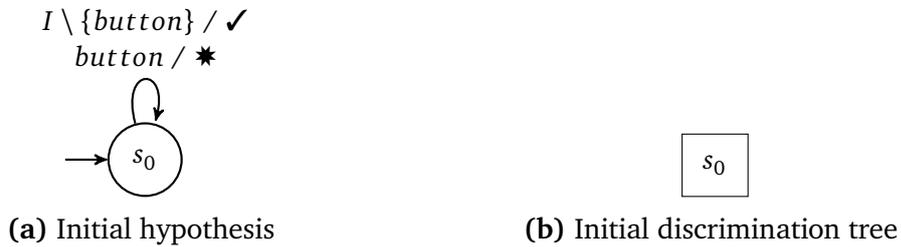

The initial hypothesis consists of a single-state automaton.
Its access sequence ($\varepsilon$) is stored as the representative of the corresponding equivalence class $[\varepsilon]_{\mathcal{M}}$ of the target system.
(Recall, that an equivalence class consists of all access sequences of its represented state).
At the same time, let $[s_0]_{\mathcal{H}}$ denote the access sequence that $s_0$ represents.

The outputs for the transitions are determined according the construction of the canonical automaton.
For example, the output for the $pod$-transition can be determined as follows:

$$\lambda_{\mathcal{H}}(s_0, pod) = P([s_0]_{\mathcal{H}} \cdot pod) = P(\varepsilon \cdot pod) = \lambda_{\mathcal{M}}(\varepsilon \cdot pod)_1 = \coffeeok$$

For determining the successors of a transition, the discrimination tree is consulted.
However, under the initial assumptions, there is no equivalence class other than $[\varepsilon]_{\mathcal{M}}$ represented.
Hence,

$$\delta_{\mathcal{H}}(s_0, pod) = [\varepsilon]_{\mathcal{M}} = s_0$$

After finishing the construction of the remaining hypothesis, the learning algorithm returns its local hypothesis as the, what is assumed, final result. 
Following the active learning process (cf. \Cref{fig:aal}), the hypothesis is then presented to the equivalence oracle, to check for the equivalence with the true target system.
It is easy to see, that the two models are not equivalent yet.
As a result, the equivalence oracle may return the counterexample $\hat c = button \cdot water$ for which the tentative hypothesis outputs $\coffeeerror \cdot \coffeeok$, whereas the target system outputs $\coffeeerror \cdot \coffeeerror$.

Rivest and Shapire have shown \cite{rivest1993inference} (for hypotheses constructed as below), that each counterexample $\hat c$ can be decomposed into a triple $\hat c = uav$ with $\langle u, a, v\rangle \in I^* \times I \times I^+$ such that

$$P([\delta_{\mathcal{H}}(u)]_{\mathcal{H}}av) \neq P([\delta_{\mathcal{H}}(ua)]_{\mathcal{H}}v)$$

This means, the state $\delta_{\mathcal{H}}(ua)$ represents too many access sequences (namely $[\delta_{\mathcal{H}}(u)]_{\mathcal{H}} \cdot a$) because there exists a distinguishing suffix $v$, that proves $[[\delta_{\mathcal{H}}(u)]_{\mathcal{H}}a]_{\mathcal{M}} \neq [[\delta_{\mathcal{H}}(ua)]_{\mathcal{H}}]_{\mathcal{M}}$.
For the given counterexample $\hat c$, such a decomposition is given by $u = \varepsilon, a = button, v = water$.
As a consequence, the current tentative hypothesis needs to be refined.

Effectively, the $a$-successor of the state $\delta_{\mathcal{H}}(u)$ needs to represent the newly discovered equivalence class $[[\delta_{\mathcal{H}}(u)]_{\mathcal{H}}a]_{\mathcal{M}}$.
This can be achieved by adding a new state $n$ to the local hypothesis and defining $\delta_{\mathcal{H}}(\delta_{\mathcal{H}}(u), a) = n$.
Furthermore, the learning algorithm now needs to distinguish between the old (too coarse) equivalence class and the newly discovered one.
To do so, the learner can use the obtained discriminator $v$ to split the leaf of the discrimination tree referencing the hypothesis state that represented the old equivalence class and insert a \enquote{decision point} that distinguishes between the old and new equivalence class.

The updated structures for the given counterexample are shown in \Cref{fig:dt_2}.

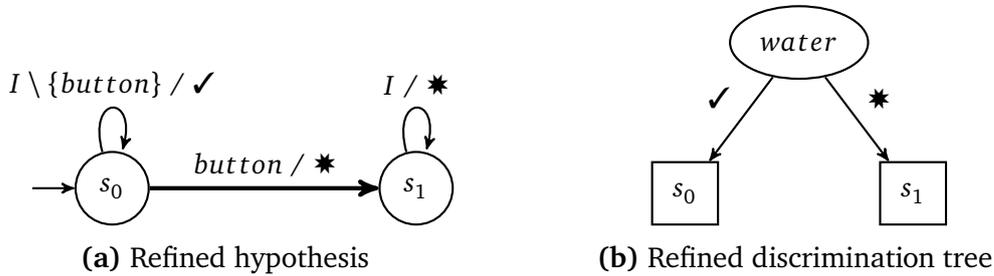
\begin{figure}[ht]
	\centering
	\centering
\subcaptionbox
{Refined hypothesis}
[0.5\textwidth]
{
	\begin{tikzpicture}[->,thick,>=stealth']
		\node[state,initial, initial text={}] (q0) at(0,0) {$s_0$};
		\draw (q0) edge[loop above] node[anchor=south] {$I \setminus \{button\}$ / \coffeeok} (q0);

		\node[state] (q1) at(4,0) {$s_1$};
		\draw (q0) edge[ultra thick] node[anchor=south] {$button$ / \coffeeerror} (q1);
		\draw (q1) edge[loop above] node[anchor=south] {$I$ / \coffeeerror} (q1);
	\end{tikzpicture}
}%
\subcaptionbox
{Refined discrimination tree}
[0.5\textwidth]
{
	\begin{tikzpicture}[->,thick,>=stealth']
		\node[symbol] (q0) at(0,0) {$water$};
		\node[final] (q1) at(-1.5,-2) {$s_0$};
		\node[final] (q2) at(1.5,-2) {$s_1$};

		\draw (q0) edge node[anchor=south east] {\coffeeok} (q1);
		\draw (q0) edge node[anchor=south west] {\coffeeerror} (q2);
	\end{tikzpicture}
}
	\caption{Refined hypothesis and discrimination tree after the first counterexample}%
	\label{fig:dt_2}
\end{figure}

A new state -- $s_1$ -- representing $[[\delta_{\mathcal{H}}(u)]_{\mathcal{H}}a]_{\mathcal{M}} = [button]_{\mathcal{M}}$ has been added to the hypothesis.
Furthermore, was the old leaf $s_0$ of the discrimination tree split and replaced by the discriminator $water$, which distinguishes between $s_0 (\hat = [\varepsilon]_{\mathcal{M}})$ and $s_1 (\hat = [button]_{\mathcal{M}})$ according to the behavior of the target system.
The labels of the edges were obtained by evaluating $P([s_0]_{\mathcal{H}} \cdot water)$ and $P([s_1]_{\mathcal{H}} \cdot water)$ respectively.

With the updated information, the new canonical hypothesis can be constructed.
While the definition of the output function $\lambda_{\mathcal{H}}$ remains similar to the scenario described above, the definition of $\delta_{\mathcal{H}}$ now needs to consider multiple possible target states.
The construction of a hypothesis that adheres to the behavior of the target system can be realized by the concept of sifting access sequences through the discrimination tree.

By definition, the value of e.g. $\delta_{\mathcal{H}}(s_1, pod)$ is defined by $[[s_1]_{\mathcal{H}} \cdot pod]_{\mathcal{M}}$.
Given the current knowledge, this equivalence class either coincides with $[\varepsilon]_{\mathcal{M}}$ or $[button]_{\mathcal{M}}$, whose elements can be distinguished by the input sequence $water$.
Therefore, by evaluating $P(button \cdot pod \cdot water)$ and choosing the corresponding child node in the discrimination tree, the proper representative can be determined.
Applying this concept to the remaining transitions (i.e. sifting access sequences of states through the discrimination tree) the hypothesis shown in \Cref{fig:dt_2} can be constructed.
It may then be proposed to an equivalence oracle again and further refinement steps may be triggered.
Repeating this process until eventually all equivalence classes of the true target system are discovered, allows to infer the true canonical automaton.

Special focus should be denoted to the highlighted transition $\langle s_0, button \rangle$.
During the learning process, most of the transitions are defined by sifting the corresponding access sequences through the discrimination tree.
This means the successor states are solely determined by the output behavior of the target system.
However, this \enquote{knowledge} is not certain until all equivalence classes are discovered, because future refinement steps may split hypothesis states and alter transitions.

An exception to that are the so called spanning-tree transitions.
When refining the hypothesis, the counterexample decomposition yields a representative $[\delta_{\mathcal{H}}(u)]_{\mathcal{H}}$, whose one-letter extension $[\delta_{\mathcal{H}}(u)]_{\mathcal{H}} \cdot a$ corresponds to a new equivalence class.
This stepwise construction of representatives makes the set of all representatives prefix-closed.
Correspondingly, the transitions representing these one-letter extensions (i.e. the transitions leading into newly added states) form a spanning tree of the hypothesis.

A useful property of these spanning-tree transitions is, that their induced behavior coincides with the behavior of the target system.
The prefix-closed set of representatives resembles a spanning tree of the target system as well and the individual representatives resemble access sequences to the states of the target system.
Hence, tracing the representatives of the discovered equivalence classes in the hypothesis -- which iterates over the spanning-tree transitions -- is sufficient to obtain the true behavior of the target system.

Since its proposal, the Discrimination Tree algorithm has been subject to further improvements and extensions \cite{ttt}.
Its core data structure -- the discrimination tree -- offers a flexible concept that allows to integrate techniques from different fields of research, such as model-based testing.
An example is given by the developed base algorithm (cf. \Cref{cha:aal}), which will use the core principles of the Discrimination Tree algorithm and extend its ideas to an adaptive scenario, which will allow the utilization of the second conceptual influence: adaptive distinguishing sequences.

\section{Adaptive Distinguishing Sequences}
\label{sec:ads}

Model-based testing has in recent years, similar to active automata learning, gained more and more attention from real-world applications.
Beneficial to their success is the fact, that the two fields share a lot of concepts and ideas \cite{correspondence}, which means both fields contribute to each other's success.
Of particular interest for the topic of this thesis is the common problem of \emph{state identification}.

The classic question for the state identification problem is as follows:
Given a system that is in an unknown state and the possibility to interact with said system.
After applying a sequence of inputs and observing a sequence of outputs: In which state was the target system initially (i.e. before applying the sequence of input symbols)?

In active automata learning, this question is usually answered by posing a series of membership queries.
Recall from \Cref{sec:dt} the approach to determine the successor of a transition:
For determining the value of $\delta_{\mathcal{H}}(s_1, pod)$ the target system was first transitioned into the state of interest by applying the input sequence $button \cdot water$.
Then the input sequence of a discriminator was applied to decide, based on the observed output, which initial states can be disregarded.
The process is repeated, until the set of possible initial states is narrowed down to a single state.

In conformance testing, the classic approach involves \emph{distinguishing sequences}, which come either in a preset or adaptive form.
A preset distinguishing sequence (PDS) is a single input sequence that, once applied, produces a unique output sequence for each state of the target system.
This means, by only observing the single output sequence, one can determine the initial state.
The term \emph{preset} comes from the fact that the complete input sequence is determined beforehand and applied as a whole, so it does not change during application.
In contrast to that, an \emph{adaptive} distinguishing sequence (ADS) is applied symbol-wise.
After each input symbol the reaction of the target system is observed and depending on the output, the next input symbol to query is selected.
This gives, albeit being called a sequence, most ADSs the form of a decision tree.

Thus, where active automata learning needs a multitude of membership queries -- each requiring a reset to ensure their independence -- the same problem can be (potentially) solved by a single distinguishing sequence.
This raises the question, if distinguishing sequences may replace certain sets of membership queries and therefore improve the performance of the learning process with regard to the number of required resets.
A question, this thesis will investigate.

Regarding practicability, ADSs are more preferable than PDSs, because every PDS can be transformed into its adaptive counter part \cite{gill1961state}.
On the contrary however, there exist systems that only have an ADS but no PDS \cite{krichenStateIdentification}.
Yet, even the existence of an ADS cannot be guaranteed for every system, as there are examples of systems that neither have a PDS nor an ADS \cite{gill1961state,krichenStateIdentification}.
In the context of active learning however, the potential absence of a distinguishing sequence does not pose a problem, because one can always resort to the initial membership query based approach to solve the state identification problem.

Regarding performance, ADSs again yield better results.
Lee and Yannakakis have shown in \cite{lee1994testing} that the computation of a PDS is $PSPACE$-complete and that there exist systems, whose PDS is of exponential length\footnote{Exponential in the number of states of the automaton.}.
For ADSs, Rystsov \cite{rystsov76} proved a (tight) quadratic upper bound for the length of an ADS (i.e. depth of the decision tree) while again Lee and Yannakakis proposed a polynomial (quadratic) time algorithm to compute an (quadratically bound) ADS \cite{lee1994testing}.

Aside from the \enquote{normal} use case, computing an ADS is hard.
The computation of an optimal ADS with regard to certain measures is $NP$-complete \cite{optiADS}.
Furthermore, the general consensus when talking about distinguishing sequences is, to distinguish between all states of the target system.
Within this thesis there will often occur the situation, where an ADSs is only needed for a subset of states of the target system.
While for two states, the problem breaks down to the state equivalence problem, which can be solved in polynomial time and with a linear bound ADS \cite{Moore56}, the situation for $2 < m < \vert S \vert$ states is worse.
Lee and Yannakakis have shown that for an arbitrary set of states (of size $m$), the computation of an ADS is $PSPACE$-complete.
Regarding the length of these ADSs, only exponential bounds \cite{sokolovskii,zbMATH03423955,gobershtein} are known to the author.

	\chapter{Active Automata Learning in an Adaptive Environment}
\label{cha:aal}

This chapter introduces the \emph{ADTLearner}, the adaptive extension of the Discrimination Tree algorithm that allows the integration of adaptive distinguishing sequences.
To allow its specification, \Cref{sec:aalpre} presents preliminary concepts that are required by the developed approach.
\Cref{sec:base} then formalizes the algorithm, shows an exemplary execution and proves that the presented base version alone, represents a fully functional learning algorithm for inferring regular Mealy machines.

\section{Preliminaries}
\label{sec:aalpre}

Key to the developed approach is a refined method of interacting with the target, which will allow the utilization of adaptive queries, as required by adaptive distinguishing sequences.

\subsection{Symbol Query Oracle}

As pointed out in \Cref{sec:ads}, an adaptive distinguishing sequence is essentially a decision tree with input symbols labeling inner nodes.
The adaptiveness comes from the possibility to dynamically decide which symbol to query next, based on former observations.
This however poses a problem to the classic communication mechanism provided by a membership oracle.
\Cref{fig:sqo} shows such a problematic case:

\begin{figure}[h]
	\centering
	\begin{tikzpicture}[thick, ->, >=stealth', level/.style={sibling distance=40mm/#1}]

	\node [state] {a}
	child {
		node[state] {b}
		child {
			node[draw,rectangle, inner sep=8pt] {$s_0$}
			edge from parent
			node[anchor=south east] {0} 
		}
		child {
			node[draw,rectangle, inner sep=8pt] {$s_1$}
			edge from parent
			node[anchor=south west] {1} 
		}
		edge from parent
		node[anchor=south east] {0}
	}
	child {
		node[state] {c}
		child {
			node[draw,rectangle, inner sep=8pt] {$s_2$}
			edge from parent
			node[anchor=south east] {0} 
		}
		child {
			node[draw,rectangle, inner sep=8pt] {$s_3$}
			edge from parent
			node[anchor=south west] {1} 
		}
		edge from parent
		node[anchor=south west] {1}
	};

\end{tikzpicture}
	\caption{An example of an adaptive query}
	\label{fig:sqo}
\end{figure}

Since classic membership queries are preset, the query of \Cref{fig:sqo} cannot be answered by a single membership query, because one would have to query two distinct symbols simultaneously.
One could answer the query by posing two separate membership queries -- namely $mq(ab)$ and $mq(ac)$ -- and evaluate the complete answer to decide for the possible state.
From a performance point of view however, this approach is not acceptable due to its significant overhead.
In the worst case, every pair of leaves would require a separate membership query.
Hence, the developed approach introduces a new kind of oracle: the \emph{symbol query oracle}.

A symbol query oracle allows two kinds of possible queries: reset and symbol queries.

\begin{itemize}

\item A reset query is a query that resets the SUL into its initial state and has no return value.
As classic membership oracles also require a certain reset mechanism to ensure the independence of membership queries, a reset query does not pose new requirements to a SUL.

\item A symbol query executes a single input symbol on the SUL and returns the observed output symbol of the SUL.
In contrast to a classic membership queries, no reset query precedes a symbol query, which makes a series of symbol queries generally dependent on each other.
Since the SUL (or rather its abstraction) is assumed to be a reactive system (i.e. representable by a Mealy machine), this only changes the way of communication and does not impose further restrictions.

\end{itemize}

As for interoperability, a membership oracle can always be simulated by a symbol query oracle.
A membership query can always be answered by executing a reset query followed by subsequent symbol queries processing the input symbols of the membership query.
Depending on the use case, either the complete response or simply the last symbol of the answer may be extracted.
Therefore, if in the following the term membership query is used, it usually refers to the simulated version.

\subsection{Adaptive Discrimination Tree}
\label{sec:adt}

The core data structure of the developed approach to model the progress of the learning process is the \emph{adaptive discrimination tree}, or short ADT.
An ADT is a mixture-model of an adaptive distinguishing sequence and a discrimination tree.
One can think of it as an adaptive distinguishing sequence enhanced with the possibility to reset during a sifting operation or an discrimination tree with the ability to dynamically change the query of an inner node.

The structure of an adaptive discrimination tree is subject to a certain set of constraints:

\begin{definition}[Adaptive Discrimination Tree]
Let $I$ denote an input- and $O$ denote an output alphabet.
An adaptive discrimination tree is a rooted tree, that can contain three kinds of nodes: symbol, reset and final nodes.

\begin{compactitem}

\item An ADT consists either of a single or multiple nodes.

\begin{compactitem}
\item If the tree consists of a single node, this node must be a final node.
\item If the tree consists of multiple nodes, the root node must be a symbol node.
\end{compactitem}

\item Each symbol node references exactly one input symbol $i \in I$.
A symbol node must have at least one but may have multiple children, which can be of any kind.
A symbol node and its children are connected by a labeled edge, with label $o \in O$.

\item Reset nodes must have exactly one child.
A reset node and its child are connected by an unlabeled edge.

\item Final nodes must not have any children.
A final node references exactly one state of the tentative hypothesis. 
Every leaf of an ADT must be a final node.

\end{compactitem}
\end{definition}

During the learning process an ADT may be subject to changes.
It may be expanded due to new observations or certain subtrees might be updated to utilize adaptive distinguishing sequences (cf. \Cref{cha:replacements}).
In order to allow proofs for correctness, the property of a \emph{verified} ADT is introduced.
For the ease of notation, let us first introduce the idea of \emph{paths} and \emph{traces}.

\begin{definition}[Path of a node]\label{def:paths}
Let $\mathcal{ADT}$ denote an adaptive discrimination tree and $n$ a node of $\mathcal{ADT}$.
Then $path_{\mathcal{ADT}}(n)$ is defined as the sequence $\langle e_1, v_1 \rangle, ..., \langle e_k, v_k \rangle$ with following properties:
$e_{i+1}$ is the edge connecting the node $v_i$ with its parent node $v_{i+1}$.
In particular, $v_1$ is the parent node of $n$ and $v_k$ is the root node of $\mathcal{ADT}$.
If $n$ is the root node of $\mathcal{ADT}$, its path is defined as the empty sequence.
\end{definition}

\begin{definition}[Traces of a node]\label{def:traces}
Let $\mathcal{ADT}$ denote an adaptive discrimination tree, $n$ a node of $\mathcal{ADT}$, $r$ the root node of $\mathcal{ADT}$ and $path_{\mathcal{ADT}}(n) = \langle e_1, v_1 \rangle, ..., \langle e_k, v_k \rangle$ the path of $n$.
Furthermore let $s()$ denote a function that extracts the label (symbol) of an edge (node), and $rn()$ an indicator function that returns true, if a node is a reset node.
Then the traces of $n$ are defined as follows:

\begin{align*}
trace_{\mathcal{ADT}}(n) &=
\begin{cases}
\langle \varepsilon, \varepsilon \rangle & \text{if } n = r\\
\langle \varepsilon, \varepsilon \rangle & \text{if } rn(v_1)\\
\langle s(v_k) \cdots s(v_1), s(e_k) \cdots s(e_1) \rangle & \text{if } \neg rn(v_i), 1 \leq i \leq k\\
\langle s(v_{m-1}) \cdots s(v_1), s(e_{m-1}) \cdots s(e_1) \rangle, m = \min\limits_i rn(v_i) & \text{otherwise}
\end{cases}\\
traces_{\mathcal{ADT}}(n) &=
\begin{cases}
\emptyset & \text{if } n = r\\
traces_{\mathcal{ADT}}(v_1) & \text{if } rn(v_1)\\
\{trace_{\mathcal{ADT}}(n)\} \cup traces_{\mathcal{ADT}}(v_m), m = \min\limits_i v_i = r \lor rn(v_i) & \text{otherwise}
\end{cases}
\end{align*}

\end{definition}

\Cref{fig:adt_traces} shows an example of an adaptive discrimination tree as well as evaluations of the trace functions for its final nodes.

\begin{figure}[ht]
	\begin{minipage}{0.58\textwidth}
	\centering
	\begin{tikzpicture}[thick, ->, >=stealth', level/.style={sibling distance=40mm/#1}]

		\node[symbol] (a) at(0,0) {a};

		\node[final] (s1) at(-2, -2) {$s_0$};
		\node[symbol] (b) at(2, -2) {b};

		\node[reset] (r) at(2,-4) {};

		\node[symbol] (c) at (2,-6) {c};

		\node[final] (s2) at (1, -8) {$s_1$};
		\node[final] (s3) at (3, -8) {$s_2$};

		\draw (a) -- node[anchor=south east] {0} (s1);
		\draw (a) -- node[anchor=south west] {1} (b);

		\draw (b) -- node[anchor=west] {0} (r);

		\draw (r) -- (c);

		\draw (c) -- node[anchor=south east] {0} (s2);
		\draw (c) -- node[anchor=south west] {1} (s3);
	\end{tikzpicture}
\end{minipage}%
\begin{minipage}{0.48\textwidth}
	\centering
	\begin{itemize}
		\item $trace(s_0) = \langle a, 0 \rangle$
		\item $traces(s_0) = \{ \langle a, 0 \rangle \}$
		\item $trace(s_1) = \langle c, 0 \rangle$
		\item $traces(s_1) = \{ \langle c, 0 \rangle, \langle ab, 10 \rangle \}$
		\item $trace(s_2) = \langle c, 1 \rangle$
		\item $traces(s_2) = \{ \langle c, 1 \rangle, \langle ab, 10 \rangle \}$
	\end{itemize}
\end{minipage}
	\caption{An example of an ADT and its traces}
	\label{fig:adt_traces}
\end{figure}

Traces allow to extract the behavioral information that are stored in an adaptive discrimination tree.
With these information, the concept of a verified ADT can be defined.

\begin{definition}[Verified Adaptive Discrimination Tree]
Let $\mathcal{ADT}$ denote an adaptive discrimination tree and $rep_f$ the input sequence $[s_f]_{\mathcal{H}}$ for the referenced hypothesis state $s_f$ in the final node $f$.
$\mathcal{ADT}$ is called verified iff for all its final nodes $f$, we have:
$$\forall \langle i, o \rangle \in traces(f): mq(rep_f, i ) = o$$
\end{definition}

Intuitively a verified ADT only describes true behavior of the SUL as it has been verified by membership queries.
For verified ADTs one can now show that the referenced hypothesis states in the leaves of the ADT truly represent distinct equivalence classes of the SUL.

\begin{theorem}[Correctness of ADT]\label{thm:adt}
Given a verified adaptive discrimination tree $\mathcal{ADT}$, every final node of $\mathcal{ADT}$ represents a distinct (set of) equivalence class(es) of the SUL.
\end{theorem}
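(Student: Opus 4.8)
The plan is to prove the statement at the level of representatives: I will show that for any two \emph{distinct} final nodes $f \neq g$ of $\mathcal{ADT}$, the referenced representatives satisfy $rep_f \not\equiv_P rep_g$. By the Myhill--Nerode-style correspondence recalled in \Cref{sec:aal} this means $[rep_f]_{\equiv_P} \neq [rep_g]_{\equiv_P}$, i.e. distinct final nodes reference distinct equivalence classes of $\mathcal{M}$, which is the claim. To exhibit a distinguishing suffix for $rep_f$ and $rep_g$, I first look at the lowest common ancestor $v$ of $f$ and $g$ in the tree. Since reset nodes have a unique child and final nodes are leaves, $v$ cannot be a reset or a final node, so $v$ is a symbol node with at least two children, and $f$ and $g$ descend from $v$ through two different children; as the outgoing edges of a symbol node carry pairwise distinct output labels, the edge of $v$ on the branch towards $f$ is labeled with some $o_f$ and the one towards $g$ with some $o_g \neq o_f$.

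Next I would isolate the reset-block of $\mathcal{ADT}$ containing $v$: let $q \in I^*$ be the input word read along the path from the nearest reset node strictly above $v$ (or from the root, if there is no such reset) down to and including $v$, so that $s(v)$ is the last symbol of $q$ and $\ell := |q|$ its length. Unrolling \Cref{def:traces}, the set $traces(f)$ contains exactly one pair per reset-block met on the path from the root to $f$, and in particular the pair $\langle \mathbf{i}_f, \mathbf{o}_f \rangle$ coming from the block of $v$, truncated at the first reset below $v$ on the branch towards $f$ (or ending at the parent of $f$ if there is none). By construction $q$ is a prefix of $\mathbf{i}_f$, the $\ell$-th symbol of $\mathbf{i}_f$ is $s(v)$, and the $\ell$-th symbol of $\mathbf{o}_f$ is the label of the edge leaving $v$ towards $f$, hence $o_f$. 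The same argument for $g$ yields $\langle \mathbf{i}_g, \mathbf{o}_g \rangle \in traces(g)$ with $q$ a prefix of $\mathbf{i}_g$ and $(\mathbf{o}_g)_\ell = o_g$.

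Now I would invoke the hypothesis that $\mathcal{ADT}$ is verified. Applied to $\langle \mathbf{i}_f, \mathbf{o}_f \rangle$ it gives $mq(rep_f, \mathbf{i}_f) = \mathbf{o}_f$; restricting to the common prefix $q$ (using $mq(u,w)_{1:k} = mq(u, w_{1:k})$, which is immediate from the definition of $mq$ and the word-extension of $\lambda$) gives $mq(rep_f, q) = (\mathbf{o}_f)_{1:\ell}$, and reading off the last component together with $P(w) = \lambda(w)_{|w|}$ yields $P(rep_f \cdot q) = mq(rep_f, q)_\ell = o_f$. Likewise $P(rep_g \cdot q) = o_g$. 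Since $o_f \neq o_g$, the word $q$ witnesses $rep_f \not\equiv_P rep_g$, so $[rep_f]_{\equiv_P} \neq [rep_g]_{\equiv_P}$, and the argument is complete.

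The routine parts are the lowest-common-ancestor observation and the bookkeeping translating between the $trace$/$mq$ notation and the $P$/$\equiv_P$ notation. The step I expect to require the most care is the precise unrolling of the recursive definition of $traces$ in the presence of reset nodes — specifically, verifying that the reset-block containing $v$ indeed contributes a pair to both $traces(f)$ and $traces(g)$ with $q$ as a common prefix, and pinning down the index $\ell$ at which the recorded outputs $o_f$ and $o_g$ occur. Everything else (distinctness of sibling edge labels, the prefix identity for $mq$, and $P(w)=\lambda(w)_{|w|}$) follows directly from the definitions given earlier.
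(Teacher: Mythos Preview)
Your proposal is correct and follows essentially the same approach as the paper: both locate the lowest common ancestor of the two final nodes, observe that it must be a symbol node, and use the input trace of its enclosing reset-block extended by the ancestor's symbol as a distinguishing suffix, appealing to the verified property for the contradiction. Your version is somewhat more detailed in unrolling the recursive $traces$ definition to pin down the exact index at which the outputs diverge, whereas the paper simply asserts $mq(rep_{f_1}\cdot i)\neq mq(rep_{f_2}\cdot i)$ without spelling out how this follows from the per-final-node verified condition; but this is a difference in exposition, not in strategy.
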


\begin{proof}
Assume for contradiction, that two distinct final nodes $f_1, f_2$ represent the same equivalence class of the SUL.
Given that $\mathcal{ADT}$ follows a tree structure, $f_1$ and $f_2$ must have a lowest common ancestor $n_{lca}$.
By definition, $n_{lca}$ must be a symbol node, because only symbol nodes are allowed to have multiple children.
Let $\langle i_{lca}, o_{lca} \rangle = trace_{\mathcal{ADT}}(n_{lca})$ denote the behavioral information of $n_{lca}$, $i = i_{lca} \cdot s(n_{lca})$ denote a discriminating input sequence and $rep_{f_i}$ denote the representative $[s_{f_i}]_{\mathcal{H}}$ of the hypothesis state $s_{f_i}$ referenced in the final node $f_i$.
Since $\mathcal{ADT}$ is a verified adaptive discrimination tree, the following holds:

$$mq(rep_{f_1} \cdot i) \neq mq(rep_{f_2} \cdot i)$$

This contradicts the assumption, that $f_1$ and $f_2$ represent the same equivalence class, because there exists an input sequence that results in different output behavior.
\end{proof}

To utilize the knowledge about the (distinct) equivalence classes an adaptive discrimination tree represents, it provides access to a \textsc{sift} operation, which takes an input word as a parameter and returns a final node of the ADT.
Starting at the root node of the ADT, the sift operation iterates over a sequence of nodes.
If the current node is a symbol node, a symbol query with the referenced symbol is executed.
Depending on the observed output, the child with the correspondingly labeled edge is selected as the next node.
If during the iteration a reset node is encountered, a reset query followed by a sequence of symbol queries representing the initial input parameters is executed.
Once a final node is reached, it will be returned.

During the sift operation it might occur, that for a certain observed output symbol, the current symbol node has no defined successor.
In this case a new final node will be added to the ADT and set as the previously missing successor.

Regarding nomenclature, the term \enquote{subtree} may denote any tree rooted in a specific node of the adaptive discrimination tree.
However, in most situations, the distinguishing property of an incorporated discriminator is needed. 
Therefore, the term subtree (or sub-ADT) will usually refer to subtrees rooted in a symbol node that succeeds a reset node.
For example, in \Cref{fig:adt_traces}, the symbol node $c$ may be referred to as a sub-ADT.

Furthermore does an adaptive discrimination tree generalize the concepts of an adaptive distinguishing sequence.
However, the two terms will be used to emphasize the presence of reset nodes:
When referring to an adaptive distinguishing sequence, the absence of reset nodes is assumed, while adaptive discrimination trees usually contain reset nodes.

\section{Base Algorithm}
\label{sec:base}

This section presents the adaptive base algorithm.
While not yet including any specific techniques to include adaptive distinguishing sequences, the base version will serve as a sound basis that allows proofs for termination and correctness and introduces certain components and concepts that will be referenced in the subsequent chapters.
Adjusting to the workflow of the active learning loop (cf. \Cref{fig:aal}) the algorithm will be specified by means of an \emph{initialization} procedure, that will be executed once at the beginning of the learning process and a \emph{refinement} procedure, that will receive counterexamples returned by the equivalence oracle.

During its execution, the learner instance needs to access shared data structures in order to retrieve information from previous refinement steps.
The following enumeration lists the shared variables that are expected to be available at a global scope.

\begin{description}

\item[hypothesis]
The internal hypothesis $\mathcal{H} = \langle S, s_0, I, O, \delta, \lambda \rangle$ that represents the current approximation of the system to learn.
For the ease of notation, the states $S$ are assumed to be integers to allow using them as indexes in array-like structures.
In most occasions, the specific components of the hypothesis (e.g. $\delta$, ...) will be used directly.

\item[accessSequences]
An array-like structure which stores the representatives $[s]_{\mathcal{H}} \in I^*$ for a given hypothesis state $s \in S$.

\item[sqo]
The symbol query oracle that allows the learning algorithm to post parameterized symbol- and reset-queries to retrieve information about the target system.
For convenience, the query function may also receive input \emph{words}, which results in subsequent symbol queries.

\item[adt]
The adaptive discrimination tree, as described in \Cref{sec:adt}.

\item[openTransitions]
A queue-like structure that holds descriptors of the hypothesis transitions.
A transition will be described by a tuple $\langle source, input, output, target \rangle$, describing a transition that originates in state $source$ and transitions into state $target$ on input $input$ while emitting $output$.
In instances where the parameters $output$ and $target$ are irrelevant, the shorthand notation $\langle source, input \rangle$ is used.

\item[openCounterExamples]
A queue-like structure that holds potential counterexamples.
A counterexample will be described by a tuple $\langle in, out \rangle$, that contains an input sequence $in$ leading to the output sequence $out$ in the system under learning.

\end{description}

For interacting with the global variables, an object-oriented visualization will be used, meaning methods will be invoked \emph{on} objects.
This is for example the case for the queue-like structures \textbf{openTransitions} and \textbf{openCounterExamples}, for which the following methods are assumed to be available:

\begin{description}

\item[add]
Adds a new element to the end of the queue.

\item[pop]
Retrieves and removes the first element of the queue.

\item[isEmpty]
returns \texttt{true} if the queue is empty and \texttt{false} otherwise.

\end{description}

The initialization step of the ADTLearner is described in \Cref{alg:adtinit}.

\begin{algorithm}[p]
	\begin{algorithmic}[1]
	\Function{initialize}{}
	\Let{$s_0$}{$0$} 
	\Let{$accessSequences[s_0]$}{$\varepsilon$}
	\State{$\Call{intializeADT}{s_0}$}
	\ForAll{$i \in I$}
	\State{$openTransitions.add(\langle s_0, i\rangle)$}
	\EndFor
	\State{\Call{closeTransitions}{}}
	\EndFunction
\end{algorithmic}

	\caption{ADTLearner: Initialization}
	\label{alg:adtinit}
\end{algorithm}

The initial state of the hypothesis as well as its (trivial) representative is set.
The \textsc{initializeADT} call initializes the adaptive discrimination tree with a single final node, referencing the initial hypothesis state $s_0$.
The outgoing transitions of the initial state are then added to the \textbf{openTransitions} queue, for which the \textsc{closeTransitions} procedure (cf. \Cref{alg:adtclose}) will determine output and successor values.
Since this procedure will be used by the refinement step as well, let us first describe the refinement procedure to explain the closing of transitions in the context of both use cases.
The refinement step is separated in two functions, displayed in \Cref{alg:adtref}.

\begin{algorithm}[p]
	\begin{algorithmic}[1]
	\Function{refineHypothesis}{$ce = \langle in, out \rangle$}
	\State{$openCounterExamples.add(ce)$}
	\While{$\Not~openCounterExamples.isEmpty()$}
	\While{$\Not~openCounterExamples.isEmpty()$}
	\Let{$currentCE$}{$openCounterExamples.pop()$}
	\While{$\Call{refineHypothesisInternal}{currentCE}$}
	\EndWhile
	\EndWhile
	\State{\Call{ensureADTConsistency}{}}
	\EndWhile
	\EndFunction
	\Statex
	\Function{refineHypothesisInternal}{$ce = \langle in, out \rangle$}
	\If{$\lambda(in) = out$}
	\State \Return{\False}
	\EndIf
	\Let{$\langle u, a, v \rangle$}{\Call{decomposeCounterExample}{$in$}}
	\Let{$n$}{$\vert S \vert$} \Comment{The new state}
	\Let{$S$}{$S \cup \{n\}$}
	\Let{$accessSequences[n]$}{$accessSequences[\delta(u)] \cdot a$}
	\Let{$\delta(\delta(u), a)$}{$n$}
	\State{$\Call{splitLeaf}{adt, \delta(ua), n, v}$} 
	\Let{$openTransitions$}{$\{\langle n, i\rangle \vert i \in I\} \cup \{\langle s, i\rangle \vert s \in S, i \in I, \delta(s, i) = \delta(ua)\}$}
	\State \Call{closeTransitions}{}
	\State \Return{\True}
	\EndFunction
\end{algorithmic}

	\caption{ADTLearner: Handling of counterexamples}
	\label{alg:adtref}
\end{algorithm}

The \enquote{official} refinement procedure is merely a wrapper for managing counterexamples and ensuring consistency with the current adaptive discrimination tree.
While each true counterexample discovers a new equivalence class and therefore alters the hypothesis, it may happen that the refined hypothesis is still not consistent with past observations.
For example, even \emph{after} a refinement triggered by a counterexample $ce = \langle in, out\rangle$, it may still hold that $\lambda_{\mathcal{H}}(in) \neq out$.
Therefore, at first, a counterexample is reevaluated until it is no longer a valid counterexample.
The motivation to wrap these operations in an additional while loop is given by future extensions (cf. \Cref{sec:immediateRepl}) which may detect additional counterexamples during internal refinement steps.

A similar effect can occur for the discriminators stored in the current adaptive discrimination tree.
Therefore, after the internal refinement steps finished, the \textsc{ensureADTConsistency} procedure checks for every final node $f$ of the current adaptive discrimination tree if the hypothesis exhibits the behavior described in $traces_{adt}(f)$.
If any diverging behavior is observed, a counterexample is added to the \textbf{openCounterExamples} queue and the outer while-loop refines the hypothesis as needed.
The main refinement step that updates the hypothesis, is given by the \textsc{refineHypothesisInternal} procedure.

At first it is checked, if the current hypothesis already outputs the expected output of the current counterexample.
If so, the function returns \texttt{false} to indicate that no additional information can be extracted from the given observations.
If, however, the counterexample is still valid, the hypothesis needs refinement.

Therefore the counterexample is decomposed into a triplet $\langle u, a, v\rangle$ as described in \Cref{sec:dt}.
A new state is added to the hypothesis in order to reflect the partition refinement and the additional data structures are updated.
The access sequence of the new state is set to the one symbol-extension of the access sequence represented by state $\delta(u)$ and the transition upon receiving the input symbol $a$ is set to the new state in order to represent the newly discovered equivalence class $[accessSequences[\delta_{\mathcal{H}}(u)] \cdot a]_{\mathcal{M}}$.

In the following, the adaptive discrimination tree needs to be updated to ensure the correct distinction between the (newly) discovered equivalence classes.
In the presented base version of the learner, this may simply be accomplished by replacing the leaf referencing the node to be split ($\delta(ua)$) with a reset node.
The reset node is then followed by a sequence of symbol nodes representing the distinguishing suffix $v$.
The labels for the intermediate edges are determined by the outputs of the target system, that are given by $mq(accessSequences[\delta(ua)], v)$ for the old, too coarse representative and by \linebreak $mq(accessSequences[n], v)$ for the newly added representative.
Since $v$ is a true discriminator for the two equivalence classes, the outputs of the two membership queries will differ at one point.
This very symbol node may then be succeeded by two final nodes (with correspondingly labeled edges), referencing the old, refined hypothesis state $\delta(ua)$ and the new state $n$.
Using this kind of replacement maintains the property of a verified adaptive discrimination tree, which therefore completes the update procedure of the ADT.

In order to reflect the required changes in the hypothesis and maintain canonicity, certain transitions need to updated.
It is easy to verify, that it is sufficient to only update the incoming transitions of $\delta(ua)$ and the outgoing transitions of $n$.
For all other transitions, the updated ADT would determine the same target states as before.

This closing of a transition is displayed in \Cref{alg:adtclose}.

\begin{algorithm}[t]
	\begin{algorithmic}[1]
	\Function{closeTransitions}{}
	\While{\Not $openTransitions.isEmpty()$}
	\Let{$t$}{$openTransitions.pop()$}
	\State{$\Call{closeTransition}{t}$}
	\EndWhile
	\EndFunction
	\Statex
	\Function{closeTransition}{$t = \langle s, i \rangle$}
	\Let{$as$}{$accessSequences[s]$}
	\State{$sqo.reset()$}
	\For{$i = 1 ~\textbf{to}~ \vert as \vert$}
	\State{$sqo.query(as_i)$}
	\EndFor
	\Let{$\lambda(s, i)$}{$sqo.query(i)$} 
	\Let{$lp$}{$as \cdot i$}
	\Let{$leaf$}{$adt.sift(lp)$}
	\If{$leaf.reference = \Nil$} \label{lst:adt_newequiv} \Comment{New equiv. class}
	\Let{$n$}{$\vert S \vert$}
	\Let{$S$}{$S \cup \{n\}$}
	\Let{$openTransitions$}{$openTransitions \cup ~ \{\langle n, i\rangle \vert i \in I\}$}
	\Let{$leaf.reference$}{$n$}
	\Let{$accessSequences[n]$}{$lp$}
	\Let{$\delta(s, i)$}{$n$}
	\Else
	\Let{$\delta(s, i)$}{$leaf.reference$}
	\EndIf
	\EndFunction
\end{algorithmic}

	\caption{ADTLearner: Closing the open transitions}
	\label{alg:adtclose}
\end{algorithm}

For each transition, the system under learning is first brought to a state representing the source equivalence class by resetting the system and then applying the access sequence of the source state.
Afterwards the input symbol of the transition is queried to determine its output.
After querying said input symbol, the system under learning is in a state that is relevant for determining the value of $\delta(s, i)$.
In order to reconstruct this situation, a temporary access sequence is stored in the local variable $lp$, which is then used in the sifting method of the ADT to determine the local hypothesis state.

As stated in \Cref{sec:adt}, the sift operation may return a (newly constructed) final node which does not reference any hypothesis state if undefined behavior is observed during the process of sifting.
If this scenario is encountered (cf. line~\ref{lst:adt_newequiv}) a new state is added to the hypothesis and its outgoing transitions are added to the \textbf{openTransitions} queue.
Note that in this case, the adaptive discrimination tree is still verified and it is sufficient to only add the outgoing transitions of the new state, since no other state could have possibly referenced this state before.
Otherwise the reference of the returned leaf is used to determine the target state for the current transition.

\subsection{Example}
\label{sec:aalex}

For a better understanding, this section will visualize the procedures and internal states of the proposed base algorithm by presenting the first iterations for the given running example.
Besides the summarizing aspect of this description, a special focus should be attributed to the state of the hypothesis and the state of the adaptive distinguishing tree during refinement steps.
With regard to \Cref{cha:heuristics}, the interaction of these two components will determine which heuristics can be applied at which point of time during the learning process.

The first interaction takes place during the initialization phase, where the initial hypothesis and the initial adaptive discrimination tree are constructed.
However, since the initial discrimination tree does not distinguish between any equivalence classes -- there does not exist any evidence for multiple equivalence classes --, the initial hypothesis will always result in the single-state automaton and therefore resemble the situation depicted in \Cref{fig:dt_1}.
As a result, let us continue with the encounter of the first counterexample.

The situation after decomposing $\hat c = button \cdot water$, and updating the adaptive discrimination tree is depicted in \Cref{fig:learning_ex2}.

\begin{figure}[h]
	\subcaptionbox
{Tentative hypothesis}
[0.6\textwidth]
{
	\begin{tikzpicture}[thick, ->, >=stealth']
		\tikzstyle{edge} = [dashed, decorate, decoration={snake, segment length=8mm}]
		\node[initial, initial text={}, state] (s0) at(0,0) {$s_0$};
		\node[state] (s1) at(4,0) {$s_1$};

		\draw[edge] (s0) -- node[anchor=east] {$water$ / \coffeeok} ++(90:2cm);
		\draw[edge] (s0) -- node[anchor=west] {$pod$ / \coffeeok} ++(50:2cm);
		\draw (s0) -- node[anchor=south] {$button$ / \coffeeerror} (s1);
		\draw[edge] (s0) -- node[anchor=north east] {$clean$ / \coffeeok} ++(-70:2cm);

		\draw[edge] (s1) -- node[anchor=south east] {$water$ / ?} ++(70:2cm);
		\draw[edge] (s1) -- node[anchor=north west] {$pod$ / ?} ++(30:2cm);
		\draw[edge] (s1) -- node[anchor=south west] {$button$ / ?} ++(-30:2cm);
		\draw[edge] (s1) -- node[anchor=north east] {$clean$ / ?} ++(-70:2cm);
	\end{tikzpicture}
}%
\subcaptionbox
{Adaptive discrimination tree}
[0.4\textwidth]
{
	\begin{tikzpicture}[thick, ->, >=stealth']
		\tikzstyle{symbol} = [state, ellipse]
		\tikzstyle{final} = [draw,rectangle, inner sep=8pt]

		\node[symbol] (w) at(0,0) {$water$};
		\node[final] (s0) at(-1, -2) {$s_0$};
		\node[final] (s1) at(1, -2) {$s_1$};

		\draw (w) -- node[anchor=east] {\coffeeok} (s0);
		\draw (w) -- node[anchor=west] {\coffeeerror} (s1);
	\end{tikzpicture}
}
	\caption{Tentative hypothesis and adaptive discrimination tree after the first counterexample analysis}
	\label{fig:learning_ex2}
\end{figure}
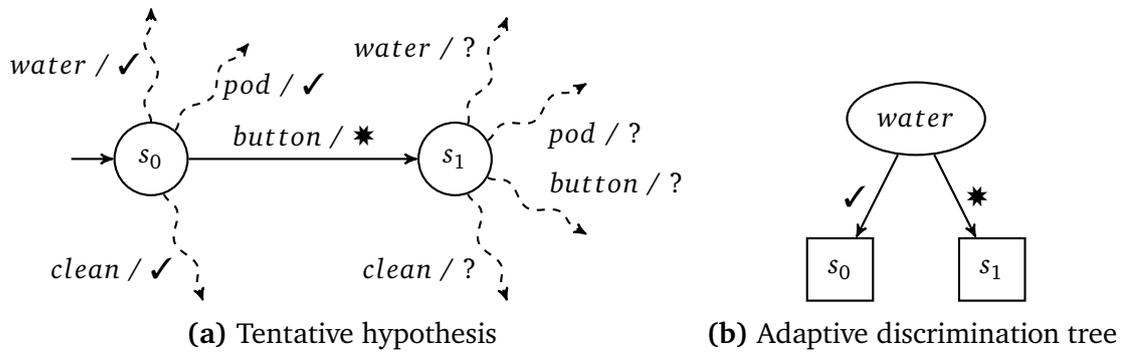

The only certain knowledge the learner has, is that the transition $\langle s_0, button \rangle$ transitions the target system into a state truly different from $s_0$.
However, for all remaining transitions, the successor is unknown.
Only by sifting the corresponding access sequences of their target states, this knowledge can be obtained.
The closing of the remaining (dashed) transitions does not result in any further discoveries of new equivalence classes.
After terminating the \textsc{closeTransitions} procedure, the hypothesis returned by the learning process therefore resembles the hypothesis already shown in \Cref{fig:dt_2}.

When comparing the new hypothesis with the true target system (cf. \Cref{fig:coffeemachine}) it is apparent that the two models are still not equivalent yet.
An equivalence query might therefore yield the counterexample $\hat c = pod \cdot water \cdot pod \cdot water \cdot button$, for which the hypothesis outputs $\coffeeok \cdot \coffeeok \cdot \coffeeok \cdot \coffeeok \cdot \coffeeerror$, whereas the target system outputs $\coffeeok \cdot \coffeeok \cdot \coffeeok \cdot \coffeeok \cdot \coffeecup$. 
Therefore $\lambda_{\mathcal{H}}(\hat c) \neq \lambda_{\mathcal{M}}(\hat c)$ and the \textsc{refineHypothesisInternal} procedure will continue with a true refinement step.

This time, the counterexample decomposes into the triplet $\langle \varepsilon, pod, water \cdot button \rangle$ meaning, the action $pod$ transitions the hypothesis and the target system from the state reached by $\varepsilon$ ($s_0$) to different successor states, as indicated by the diverging output upon applying $water \cdot button$.
Therefore the state $s_0$ represents two access sequences ($\varepsilon$ and $pod$) that belong to provably different equivalence classes of the target system.
To reflect this information, a new state -- the $pod$-successor of $s_0$ -- is added to the hypothesis, which represents the equivalence class of which $pod$ is a member.
Similar to the previous refinement steps, the corresponding data structures are updated.
Furthermore, by posing the membership queries $mq(\varepsilon, water \cdot button)$ and $mq(pod, water \cdot button)$ it is possible to split the final node $s_0$ of the current adaptive discrimination tree and correctly distinguish between $s_0$ and the new state $s_2$.
But what impact has this refinement step on the transitions?

First of all, all outgoing edges of the new state $s_2$ are undefined:
Since $pod$ (i.e. $\varepsilon \cdot pod$) has only been a one-symbol extension of an access sequence so far, it was never resolved in which target state the access sequences $pod \cdot water, pod \cdot pod,$ etc. led.
Additionally, all incoming transitions of the too coarse state $s_0$ need to be refined:
Previous sifting operations only revealed, that e.g. the input symbol $water$ leads to a state that emits a \coffeeok-symbol when receiving another $water$ symbol query.
However, this is now true for both the states $s_0$ and $s_2$.
In order to correctly single out the specific target state, the transitions, i.e. the access sequence of their source states concatenated with their corresponding input symbol, need to be sifted through the subtree representing the new discriminator $water \cdot button$.

The situation of the tentative hypothesis and the adaptive discrimination tree after splitting the too coarse state $s_0$ are displayed in \Cref{fig:learning_ex3}.

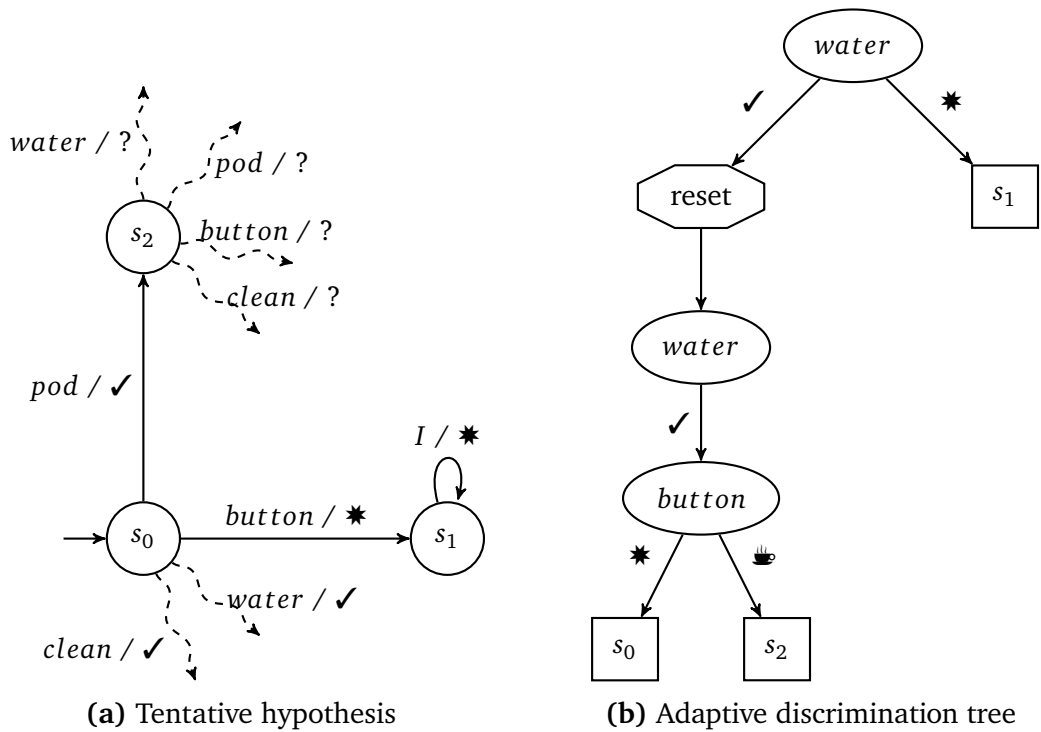
\begin{figure}[ht]
	\subcaptionbox
{Tentative hypothesis}
[0.5\textwidth]
{
	\begin{tikzpicture}[thick, ->, >=stealth']
		\tikzstyle{edge} = [dashed, decorate, decoration={snake, segment length=8mm}]
		\node[initial, initial text={}, state] (s0) at(0,0) {$s_0$};
		\node[state] (s1) at(4,0) {$s_1$};
		\node[state] (s2) at(0,4) {$s_2$};

		\draw (s0) -- node[anchor=east] {$pod$ / \coffeeok} (s2);
		\draw (s0) -- node[anchor=south] {$button$ / \coffeeerror} (s1);

		\draw[edge] (s0) -- node[anchor=west] {$water$ / \coffeeok} ++(-40:2cm);
		\draw[edge] (s0) -- node[anchor=north east] {$clean$ / \coffeeok} ++(-70:2cm);

		\draw (s1) edge[loop above] node[anchor=south] {$I$ / \coffeeerror} (s1);

		\draw[edge] (s2) -- node[anchor=east] {$water$ / ?} ++(90:2cm);
		\draw[edge] (s2) -- node[anchor=west] {$pod$ / ?} ++(50:2cm);
		\draw[edge] (s2) -- node[anchor=south, xshift=1em] {$button$ / ?} ++(-10:2cm);
		\draw[edge] (s2) -- node[anchor=west] {$clean$ / ?} ++(-40:2cm);

	\end{tikzpicture}
}%
\subcaptionbox
{Adaptive discrimination tree}
[0.5\textwidth]
{
	\begin{tikzpicture}[thick, ->, >=stealth']

		\node[symbol] (w1) at(0,0) {$water$};

		\node[final] (s1) at(2, -2) {$s_1$};
		\node[reset] (r) at(-2, -2) {};

		\node[symbol] (w2) at(-2,-4) {$water$};

		\node[symbol] (b) at (-2,-6) {$button$};

		\node[final] (s2) at (-1, -8) {$s_2$};
		\node[final] (s0) at (-3, -8) {$s_0$};

		\draw (w1) -- node[anchor=south west] {\coffeeerror} (s1);
		\draw (w1) -- node[anchor=south east] {\coffeeok} (r);

		\draw (r) -- (w2);

		\draw (w2) -- node[anchor=east] {\coffeeok} (b);

		\draw (b) -- node[anchor=south west] {\coffeecup} (s2);
		\draw (b) -- node[anchor=south east] {\coffeeerror} (s0);
	\end{tikzpicture}
}
	\caption{Tentative hypothesis and adaptive discrimination tree after the second counterexample analysis}
	\label{fig:learning_ex3}
\end{figure}

As defined in \Cref{alg:adtref} the refinement step is finished by a call to the \textsc{closeTransitions} procedure.
By checking the posed symbol queries during the sifting operations, one can see that the \textsc{closeTransitions} procedure will not discover any new equivalence classes.
Therefore, after the internal refinement procedure finishes, the tentative hypothesis will be a three state hypothesis.
However the current counterexample $\hat c = pod \cdot water \cdot pod \cdot water \cdot button$ still poses a valid counterexample, as even the refined hypothesis will output \coffeeok $\cdot$ \coffeeok $\cdot$ \coffeeok $\cdot$ \coffeeok $\cdot$ \coffeeerror.
Hence a second refinement step will be triggered.

In the second iteration, the counterexample will decompose into the triplet \linebreak $\langle pod, water, button \rangle$, resulting in the splitting of state $s_2$ and the creation of state $s_3$.
The remaining refinement steps follow the patterns described above.
The next section will show, that this approach will eventually terminate and return the correct behavioral model of the target system.

\subsection{Termination \& Correctness}

Given the base algorithm, termination and correctness can be proved for the learning approach with adaptive discrimination trees.
A useful property for the two proofs is the canonicity of the intermediate hypotheses, formalized by the following lemma.

\begin{lemma}[Canonicity of intermediate hypotheses]\label{lem:canonicity}
After each phase of the algorithm (i.e. after the initialization and each refinement step) the tentative hypothesis $\mathcal{H}$ is canonical.
\end{lemma}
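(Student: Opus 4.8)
The plan is to prove canonicity by establishing two properties of the tentative hypothesis $\mathcal{H}$ after each phase: (1) $\mathcal{H}$ is complete and deterministic — which is essentially immediate from the construction, since \textsc{closeTransitions} defines exactly one output and one successor for every pair $\langle s, i \rangle$ via a single symbol query and a single \textsc{sift} — and (2) $\mathcal{H}$ is minimal, i.e. no two distinct hypothesis states are behaviorally equivalent. Property (2) is the heart of the argument and is where I would spend essentially all the effort. The natural approach is to reduce it to \Cref{thm:adt}: if I can show that after each phase the adaptive discrimination tree is \emph{verified}, and that every hypothesis state is referenced by exactly one final node of the ADT, then \Cref{thm:adt} immediately gives that distinct states correspond to distinct equivalence classes of the SUL, which is exactly minimality (no equivalent states can be collapsed).

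So the real work is an invariant-maintenance argument. First I would set up the induction: after \textsc{initialize}, the ADT is a single final node referencing $s_0$, which is trivially verified (the trace set of the root is empty), and there is one state referenced by one leaf — base case done, and $\mathcal{H}$ is a single-state complete deterministic automaton, hence canonical. For the inductive step I would examine each operation that can alter $\mathcal{H}$ or the ADT during a refinement phase: the leaf split in \textsc{refineHypothesisInternal} (replacing the leaf $\delta(ua)$ by a reset node followed by the symbol nodes spelling out the discriminator $v$, capped by two final nodes for $\delta(ua)$ and the new state $n$), and the new-leaf creation inside \textsc{closeTransition} (line~\ref{lst:adt_newequiv}). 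For the split, I would invoke the discussion already in the text: because $v$ is a genuine discriminator for $[\text{accessSequences}[\delta(ua)]]_{\mathcal{M}}$ and $[\text{accessSequences}[n]]_{\mathcal{M}}$, the two membership queries $mq(\text{accessSequences}[\delta(ua)], v)$ and $mq(\text{accessSequences}[n], v)$ diverge at some position, so the replacement subtree branches at exactly that symbol node, and every new final node's trace is witnessed by one of those two membership queries — hence the ADT remains verified. For the new leaf created during sifting, I would argue (as the text does) that the sift path the access sequence $lp$ took was itself realized by actual symbol queries to the SUL, so the trace of the freshly added final node is automatically satisfied by $mq(lp, \cdot)$; the ADT stays verified and the new state is referenced by exactly this one leaf. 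Finally I would note that \textsc{closeTransitions} only reassigns $\delta$ and $\lambda$ values via sifts through the (verified) ADT, so it preserves completeness and determinism and introduces no equivalent states — indeed every state remains tied to its unique ADT leaf. Since each phase ends with \textsc{closeTransitions} having emptied \textbf{openTransitions}, $\mathcal{H}$ at that point is complete, deterministic, and, by \Cref{thm:adt}, minimal; hence canonical.

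The main obstacle I anticipate is bookkeeping around the bijection between hypothesis states and ADT final nodes — in particular verifying that \emph{every} state always has a reference and that splitting/creating leaves never orphans a state or produces two leaves pointing to the same state. One has to check that when a leaf $\delta(ua)$ is split, the old state $\delta(ua)$ is re-attached to one of the two new final nodes (so it is not lost), that the new state $n$ gets the other, and that the subsequent re-sifting of the affected transitions (incoming to $\delta(ua)$, outgoing from $n$) is consistent with these leaf references rather than contradicting them. A secondary subtlety is that \textsc{refineHypothesisInternal} is called repeatedly on the \emph{same} counterexample and \textsc{ensureADTConsistency} may inject further counterexamples — but \Cref{lem:canonicity} only claims canonicity \emph{after} a completed phase (i.e. once all these inner loops have drained), so I would simply make sure the invariant is stated as holding at those quiescent points and is re-established after each individual refinement, which the argument above already gives.
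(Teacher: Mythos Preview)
Your proposal is correct and follows essentially the same approach as the paper: both arguments walk through the three phases (initialization, refinement, closing), check that each maintains a verified ADT with a one-to-one correspondence between hypothesis states and final nodes, and appeal to \Cref{thm:adt} to conclude that distinct hypothesis states represent distinct SUL equivalence classes. Your write-up is more explicitly structured (framing canonicity as completeness + determinism + minimality, setting up a formal induction, and flagging the state--leaf bijection as the bookkeeping invariant), whereas the paper's proof is a terser walkthrough that argues each phase conforms to the canonical-automaton construction of \Cref{def:canonical}; but the substance is the same.
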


\begin{proof}
\textit{Initialization}:
The hypothesis is initialized with a single state, which is the only valid approximation possible, when no behavior is observed.
The empty word is selected as the representative for this equivalence class and before termination, all possible transitions are closed.

\textit{Refinement}:
Recall, that the decomposition yields input-words/-symbols such that \linebreak $mq(accessSequences[\delta(u)] \cdot a, v) \neq mq(accessSequences[\delta(ua)], v)$.
This means, that the input sequence $accessSequences[\delta(u)] \cdot a$ is wrongly attributed to the equivalence class represented by $\delta(ua)$ and hence $\delta(ua)$ is too coarse and needs refinement.
Therefore a new state -- representing $[accessSequences[\delta(u)] \cdot a]_{\mathcal{M}}$ -- is added to the hypothesis and $accessSequences[\delta(u)] \cdot a$ is set as its correct representative.
The adaptive discrimination tree is updated to incorporate the new discriminator ($v$) and the outputs and successors of all affected transitions (outgoing transitions of the new state and incoming transitions of the split one) are determined according to the definition of the canonical automaton.

\textit{Closing}:
For each pending transition $\langle s, i \rangle$, the output is determined by bringing the target system into state $s$ (by means of its access sequence) and executing the symbol query $i$.
The successor is determined by sifting the access sequence of the successor ($accessSequences[s] \cdot a$) which returns the state for the corresponding equivalence class (cf. \Cref{thm:adt}).
It may however be possible that the sifting operation yields a new final node $f$, in which case a new hypothesis state is added.
This does not break the canonicity but rather serves as an implicit counterexample, because $accessSequences[s] \cdot a$ yields unexpected output behavior.
Moreover, the adaptive discrimination tree is still verified, because the sifting operation is performed by means of reset and symbol queries, so it reflects the behavior of the true target systems.
One can see, that the input components of $traces_{adt}(f)$ resemble input sequences that truly distinguish between the new state referenced in $f$ and the (remaining) final nodes for the adaptive discrimination tree.
\end{proof}

The canonicity of intermediate hypotheses can then be used, to prove the termination of the learning algorithm, as shown in \Cref{thm:termination}.

\begin{theorem}[Termination of the base algorithm]\label{thm:termination}
The base algorithm terminates after at most $n-1$ equivalence queries, where $n$ denotes the size (i.e. the number of states) of the (minimized) target system.
\end{theorem}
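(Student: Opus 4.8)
The plan is to argue that every equivalence query which does not terminate the algorithm strictly increases the number of states of the tentative hypothesis, and that this count is bounded by $n$. First I would invoke \Cref{lem:canonicity}: after the initialization and after every refinement step the hypothesis $\mathcal{H}$ is canonical, so in particular all of its states represent pairwise distinct equivalence classes of the SUL (this also follows from \Cref{thm:adt}, since the final nodes of the verified ADT are in bijection with the hypothesis states). Hence at every point where an equivalence query is posed, $\vert S_{\mathcal{H}} \vert \le n$, because $\mathcal{M}$ has exactly $n$ distinct equivalence classes and a canonical hypothesis cannot have more.

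Next I would show that a non-terminating equivalence query increases $\vert S_{\mathcal{H}} \vert$ by at least one. If the equivalence oracle returns a counterexample $ce = \langle in, out\rangle$ with $\lambda_{\mathcal{H}}(in) \neq out$, then \textsc{refineHypothesisInternal} does not return \False\ on its first invocation: it decomposes $ce$ via \textsc{decomposeCounterExample} into $\langle u, a, v\rangle$ (the Rivest--Schapire decomposition guaranteed for canonical hypotheses, as recalled in \Cref{sec:dt}) and adds a fresh state $n = \vert S\vert$ to $S$. Thus each counterexample that the equivalence oracle produces causes at least one new state to be created; the subsequent \textsc{closeTransitions} and \textsc{ensureADTConsistency} calls, as argued in the proof of \Cref{lem:canonicity}, can only add further states, never remove any. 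Therefore between two consecutive equivalence queries the state count strictly increases.

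Combining the two observations: the hypothesis starts (after \textsc{initialize}) with $1$ state and is canonical; each equivalence query that yields a counterexample raises the state count by at least one while keeping the hypothesis canonical and hence of size at most $n$. After at most $n-1$ such counterexample-producing queries the hypothesis has $n$ states, at which point (being canonical with the maximal number of equivalence classes) it is isomorphic to $\mathcal{M}$, so the next equivalence query must report equivalence and the algorithm terminates. This gives the bound of at most $n-1$ equivalence queries.

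The main obstacle I anticipate is the bookkeeping around the inner loops of \textsc{refineHypothesis}: one must make sure that the \textbf{while} loops over \textbf{openCounterExamples} and the \textsc{ensureADTConsistency} calls terminate \emph{between} equivalence queries, i.e. that no infinite sequence of internal refinements occurs without a new equivalence query. This reduces to the same monotonicity argument --- every internal \textsc{refineHypothesisInternal} that returns \True\ adds a state, and the state count is capped by $n$ --- together with the fact that once a counterexample $ce$ satisfies $\lambda_{\mathcal{H}}(in) = out$ the call returns \False, so each popped counterexample is processed finitely often. One also needs that \textsc{ensureADTConsistency} can only enqueue finitely many new counterexamples before the hypothesis stabilizes, which again follows because each genuinely new counterexample forces a state addition. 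I would state these finiteness facts explicitly as part of the argument, relying on \Cref{lem:canonicity} for the canonicity that makes the decomposition (and hence the state increment) available at every step.
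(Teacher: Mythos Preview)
Your proposal is correct and follows essentially the same invariant-plus-progress argument as the paper: the canonical hypothesis never exceeds $n$ states (via \Cref{lem:canonicity}), and every equivalence query that returns a counterexample adds at least one new state, so at most $n-1$ such queries can occur. Your additional discussion of why the inner \textsc{refineHypothesis} loops terminate between equivalence queries is a welcome elaboration that the paper's proof leaves implicit.
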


\begin{proof}
The proof is similar to termination proofs of other active learning algorithms and is based on the principles of invariance and progress.

\textit{Invariant}:
The number of states of the intermediate hypothesis $\mathcal{H}$ never exceeds the number of states of the target system.
This is due to the fact that the tentative hypothesis is canonical (i.e. minimal with respect to the discovered equivalence classes, cf. \Cref{lem:canonicity}) and states are only added when a distinguishable behavior is observed.

\textit{Progress}:
Each equivalence query yields a counterexample that refines the existing partition by adding an additional state to the hypothesis. 
Given the invariant, this can only happen at most $n-1$ times.
\end{proof}

Applying \Cref{lem:canonicity} to the hypothesis after termination, allows to deduce the correctness of the final hypothesis.

\begin{theorem}[Correctness of the base algorithm]\label{thm:correctnes}
Upon termination, the base algorithm returns a hypothesis that is equivalent (up to isomorphism) to the target automaton.
\end{theorem}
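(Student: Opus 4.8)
The plan is to combine the two previously established facts: \Cref{thm:termination}, which guarantees that the base algorithm halts after finitely many (at most $n-1$) equivalence queries, and \Cref{lem:canonicity}, which guarantees that the hypothesis $\mathcal{H}$ is canonical after every phase of the algorithm, including the final one. The key observation is what it means for the algorithm to terminate: the loop in \textsc{refineHypothesis} exits only when \textbf{openCounterExamples} is empty \emph{and} \textsc{ensureADTConsistency} produces no new counterexample, which — given that equivalence queries are assumed to be strong — can only happen when the equivalence oracle reports true equivalence. So the first step is simply to note that upon termination $\lambda_{\mathcal{H}}(w) = \lambda_{\mathcal{M}}(w)$ for all $w \in I^*$, i.e. $\mathcal{H}$ and $\mathcal{M}$ are behaviorally equivalent in the sense of the \emph{Minimality} definition.

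The second step is to upgrade behavioral equivalence to isomorphism. Here I would invoke \Cref{lem:canonicity}: the final $\mathcal{H}$ is canonical, meaning it is constructed as (an isomorphic copy of) the canonical automaton $\mathcal{M}^c$ of \Cref{def:canonical} restricted to the discovered equivalence classes — each hypothesis state corresponds to a distinct class of $\equiv_P$, transitions are defined by sifting (hence by $[wi]_{\equiv_P}$), and outputs by $P(wi)$. Since $\mathcal{M}$ is assumed finite, deterministic, complete, and minimal, the Myhill–Nerode argument sketched in \Cref{sec:aal} applies: $\mathcal{M}$ is itself isomorphic to $\mathcal{M}^c$, with each of its states corresponding to exactly one equivalence class of $\equiv_P$. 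Thus it suffices to show that at termination \emph{every} equivalence class of $\equiv_P$ has been discovered — i.e. $\mathcal{H}$ has exactly $n$ states.

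For this final step I would argue by contradiction using the counterexample-decomposition machinery from \Cref{sec:dt}. Suppose $\mathcal{H}$ has fewer than $n$ states. Then some state $\delta_{\mathcal{H}}(w)$ of $\mathcal{H}$ conflates at least two classes of $\equiv_P$, so there is a word on which $\mathcal{H}$ and $\mathcal{M}$ disagree (by minimality of $\mathcal{M}$ and the construction of $\mathcal{M}^c$), contradicting behavioral equivalence established in step one. Hence $|S_{\mathcal{H}}| = n$, and combined with canonicity we get that the map sending each hypothesis state to its $\equiv_P$-class is a bijection respecting $s_0$, $\delta$, and $\lambda$ — an isomorphism $\mathcal{H} \cong \mathcal{M}^c \cong \mathcal{M}$.

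I expect the main obstacle to be making the "termination implies behavioral equivalence" step airtight: one must be careful that the algorithm cannot terminate in a state where it has simply stopped receiving counterexamples for a reason other than true equivalence (e.g. an exhausted but inexact approximate oracle). This is why the excerpt is careful to posit \emph{strong} equivalence queries; I would make that assumption explicit in the proof. Everything after that is essentially bookkeeping on top of \Cref{lem:canonicity} and the standard Myhill–Nerode correspondence, so no heavy computation is needed.
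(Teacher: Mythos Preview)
Your proposal is correct and follows essentially the same approach as the paper: invoke termination (\Cref{thm:termination}) to conclude that all equivalence classes of $\equiv_P$ have been discovered, then invoke canonicity (\Cref{lem:canonicity}) together with the Myhill--Nerode correspondence to obtain the isomorphism with the target. The paper's own proof is just a terse two-sentence version of exactly this; your extra step of explicitly arguing ``termination $\Rightarrow$ behavioral equivalence $\Rightarrow$ all classes discovered'' is a sound elaboration of what the paper leaves implicit.
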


\begin{proof}
The termination of the algorithm guarantees, that all true equivalence classes have been discovered.
By construction, building the canonical automaton (cf. \Cref{lem:canonicity}) from these information yields an equivalent (up to isomorphism) automaton with regard to the target system.
\end{proof}

An important aspect to note is that throughout the proofs, no assumptions about the structure of the adaptive discrimination tree were made (except being verified).
This opens the way to integrate heuristics that may improve performance without losing the properties of termination and correctness.

	\chapter{Embedding Adaptive Distinguishing Sequences}
\label{cha:replacements}

The previous chapter presented the base algorithm, which introduced the core concepts of active automata learning in an adaptive environment.
However, the base algorithm does not utilize adaptive distinguishing sequences: 
The extracted discriminators are maintained in their original form and their behavioral information is arranged by reset nodes.
Therefore, this chapter presents the main approach by which adaptive distinguishing sequences will be integrated into the learning process.
It describes what influence the usage of ADSs has on the learning process and shows what steps are necessary to successfully benefit from their potential.

\section{Subtree Replacements}

The core concept to integrate adaptive distinguishing sequences is by replacing nodes -- specifically subtrees -- of the adaptive discrimination tree.
During the default execution of the base algorithm, the adaptive discrimination tree may aggregate a considerable amount of reset nodes.
Replacing subtrees with many reset nodes or even the complete adaptive discrimination tree with a single, reset-free adaptive distinguishing sequence, reduces the number of reset queries of succeeding refinement steps, as the number of encountered reset nodes during the sifting operations is reduced.

\subsection{Replacement Validation}

Different replacements may be considered at different stages of the learning algorithm (cf. \Cref{cha:heuristics}), but all replacements have a common characteristic:
For the construction of the adaptive distinguishing sequence, the current tentative hypothesis is used as a reference point for the behavior of the target system.
However, until termination, the hypothesis only approximates the behavior of the target system, meaning that any extracted information may not be valid.
To ensure that the adaptive discrimination tree is still verified after the replacement occurred and therefore guarantee the correctness and termination of the learning process, the proposed replacements need to be validated.
The main validation process is depicted in \Cref{alg:repl_validate}.

\begin{algorithm}[!t]
	\caption{ADS Replacements: Replacement validation \label{alg:repl_validate}}
	\begin{algorithmic}[1]
	\Function{validate}{$ntr, repl, cutout$}
	\Let{$\langle pi, po \rangle$}{$trace_{adt}(ntr)$}
	\Let{$result$}{\Nil}
	\ForAll{$f \in \Call{collectLeaves}{repl}$}
	\Let{$\langle i, o \rangle$}{$trace_{repl}(f)$}
	\Let{$as$}{$accessSequences[f.reference]$}
	\State{$sqo.reset()$}
	\State{$sqo.query(as \cdot pi)$}
	\Let{$equal$}{\True}
	\Let{$k$}{$1$}
	\Let{$output$}{$\varepsilon$}
	\While{$equal \And k \leq \vert i \vert$}
	\Let{$output$}{$output \cdot sqo.query(i_k)$}
	\If{$output_k \neq o_k$}
	\Let{$equal$}{\False}
	\Else
	\Let{$k$}{$k + 1$}
	\EndIf
	\EndWhile
	\If{$\Not ~equal$}
	\State{$openCounterExamples.add(\langle as \cdot pi \cdot i_{1:k}, \lambda(as) \cdot po \cdot output \rangle)$}
	\EndIf
	\Let{$trace$}{$\Call{buildADS}{i_{1:k}, output, f.reference}$}
	\If{$result = \Nil$}
	\Let{$result$}{$trace$}
	\Else
	\If{$\Not~ \Call{mergeADS}{result, trace}$}
	\State{$\Call{resolveAmbiguities}{ntr, result, f.reference}$}
	\EndIf
	\EndIf
	\EndFor
	\ForAll{$c \in cutout$}
	\State{$\Call{resolveAmbiguities}{ntr, result, c}$}
	\EndFor
	\State \Return $result$
	\EndFunction
\end{algorithmic}

\end{algorithm}

The \textsc{validate} function receives three input parameters: $ntr$, the node of the current adaptive discrimination tree to be replaced; $repl$, the (start of the) adaptive distinguishing sequence, that is proposed to replace $ntr$; and $cutout$, a set of states, that may not be covered by the replacement, but are referenced in the subtree of $ntr$.
For the parameters, basic sanity properties are assumed, for example that all hypothesis states referenced in the subtree of $ntr$ are covered by the union of $repl$'s leaves and $cutout$.

Intuitively, in order to verify the replacement, one has to assure that the predicted input/output behavior defined by the replacement matches the real input/output behavior of the target system.
Therefore, the main loop of the function iterates over every referenced state of the replacement and verifies the suggested input/output trace.
The \textsc{collectLeaves} function collects every final node of the replacement, whereas \textit{trace} works as defined in \Cref{def:traces}.
The function continues to transition the target system into the state whose output behavior should be validated, by applying the access sequence of the current state and the potential input trace of the parent node.

Note, that there were no restrictions on the parameter $ntr$:
Certain replacements may aim at replacing reset nodes of the current adaptive discrimination tree.
In this case, the proposed replacement is essentially a continuation of an existing discriminator.
As a result, the complete trace is required to transition the target system in its correct state.
If a replacement seeks to replace a complete subtree (i.e. the parent of $ntr$ is a reset node) the parent trace will be $\varepsilon$.

The next step is to verify the behavior of the proposed replacement.
The input trace is sequentially applied to the target system and the observed output is compared with the expected output given by the adaptive distinguishing sequence.
If the outputs differ, a counterexample is encountered, because the replacement was computed based on the behavior of the tentative hypothesis, which in the case of a mismatch is provably wrong.

However, a potential mismatch between expected and real behavior does not necessarily result in a failure of the validation process.
As long as the observed output behavior distinguishes all states uniquely it is still possible to construct a correctly classifying adaptive distinguishing sequence.
Therefore, independent of the verification result, the function continues to construct a single input/output trace from the observed behavior by calling the \textsc{buildADS} subroutine.
If a result from previous loop iterations exists, the function tries to merge the current trace with the existing distinguishing sequence by a call to the \textsc{mergeADS} subroutine.
This subroutine simultaneously traverses the existing distinguishing sequence ($result$) and the trace to merge ($trace$) by means of the input sequence of $trace$.
If at one point -- under the maintenance of a shared input sequence -- diverging output behavior is observed, the remaining trace of $trace$ can be appended to the corresponding node of the existing distinguishing sequence $result$.

The attempt to merge the two traces may however fail, in which case the subroutine returns $false$.
An example for this situation is given by the two traces $\langle 12, ab \rangle$ and $\langle 1, a \rangle$:
The leaf of the second trace coincides with the symbol node $2$ of the first trace.
For the behavioral information available at this point, the two nodes are not distinguishable by a single adaptive distinguishing sequence.
In these situations, the incurred ambiguities can be resolved by consulting the current adaptive discrimination tree, as realized by the \textsc{resolveAmbiguities} routine displayed in \Cref{alg:repl_ambiguities}.

Aside from the validation of the proposed replacement, the states left out also need to be covered by a valid substitution.
Since for these states, no behavioral information is given by the proposed replacement, only the existing adaptive discrimination tree can be used to distinguish them.
If all validation steps succeed, the final result can be used to substitute the $ntr$ node in the adaptive discrimination tree, while maintaining its property of being verified.

\begin{algorithm}[t]
	\caption{ADS Replacements: Resolving Ambiguities \label{alg:repl_ambiguities}}
	\begin{algorithmic}[1]
	\Function{resolveAmbiguities}{$ntr, repl, s$}
	\Let{$\langle pi, po \rangle$}{$trace_{adt}(ntr)$}
	\State{$sqo.reset()$}
	\State{$sqo.query(accessSequences[s] \cdot pi)$}
	\Let{$leaf$}{$\Call{siftAndReturnConflict}{repl, s}$}
	\If{$leaf = \Nil$}
	\State \Return
	\EndIf
	\Let{$lca$}{$\Call{findLCA}{adt, leaf.reference, s}$}
	\Let{$\langle lcai, lcao \rangle$}{$trace_{adt}(lca)$}
	\State{$\Call{splitLeaf}{repl, leaf.reference, s, lcai \cdot lca.symbol}$}
	\State \Return
	\EndFunction
\end{algorithmic}

\end{algorithm}

For resolving the encountered ambiguities, recall the current situation and the meaning of the parameters:
The ultimate goal is to find a verified replacement for the node to replace ($ntr$).
A mandatory adaptive discrimination tree ($repl$) already exists\footnote{In the first iteration, this parameter references a single adaptive distinguishing sequence.
However, subsequent executions may add additional reset nodes.}, yet the obtained information for the hypothesis state $s$ did not suffice to distinguish it from the states referenced in the leaves of $repl$.

Crucial to resolving the ambiguities is the node to replace, because it decides whether just the state $s$ or the state \textit{$s$ after applying a certain input sequence} needs to be distinguished from the other states.
As a result, the target system first needs to be transitioned into the state accessed by the access sequence of $s$ concatenated with the potential parent input trace.

The function may then continue to sift the respective hypothesis state into the provided adaptive discrimination tree.
The subroutine may either consult the current adaptive discrimination tree or use the symbol query oracle to perform the sift-operation.
Note, that for the sifting operation the information about $s$ suffice.
The target system was transitioned to the correct state before the call to the sift function.
If during the sift operation any reset nodes are encountered, the information about the original parent trace $pi$ become irrelevant and the target system only needs to be transitioned into the state accessed by the access sequence of $s$.

If this operation discovers unexpected behavior, the provided ADS $repl$ may be extended by a corresponding leaf referencing $s$.
If, however, the state is sifted through the complete tree (and the subroutine therefore returns a conflicting final node), the conflicting hypothesis states can be distinguished by the input trace and the symbol of their lowest common ancestor in the current adaptive discrimination tree.
The update operation to split the leaf node follows the same semantics as in the hypothesis refinement step.

A consequence of the extensive error handling done by the \textsc{resolveAmbiguities} function is, that the verified replacement may \enquote{degenerate} to an adaptive discrimination tree.
In certain situations, the actual replacement of the intended node may therefore not necessarily improve the structure of the ADT and the replacement should be reconsidered.
However, this effect may raise the question, if one can directly propose a replacement in form of an adaptive discrimination tree.
One may think of scenarios, where replacing a (possibly larger) subtree with an improved adaptive distinguishing (sub-)tree still reduces the number of total reset nodes.

The concept of replacing subtrees of the current adaptive distinguishing tree may be generalized to arbitrary (adaptive) replacements.
To allow this generalization, the validation procedure would have to validate \emph{all} traces of each final node of the replacement.
This increases the impact of validation errors, because not only the ambiguity of single final nodes but potentially whole subtrees may need to get resolved.
At the same time, the search space for potential replacements grows.
While certain replacement computations may improve because of this -- a reset may be used as a shortcut for distinguishing states -- other computations may experience an increase in complexity -- minimal-size ADS versus minimal-size ADT.

As for the scope of this thesis, this and the remaining chapters will focus on its central theme, the embedding of adaptive distinguishing \emph{sequences}.

\subsection{Hypothesis Update}

If the validation of one or possibly multiple replacements succeeded, they may replace their targeted nodes in the adaptive discrimination tree.
However, there is still the need for a post-processing step:
Transitions that led into states referenced in the leaves of the replaced nodes need to be re-sifted through the new replacement.
This procedure is depicted in \Cref{alg:repl_resift}.

\begin{algorithm}[t]
	\caption{ADS Replacements: Re-sifting transitions \label{alg:repl_resift}}
	\begin{algorithmic}[1]
	\Function{resift}{$replacements$}
	\ForAll{$repl \in replacements$}
	\ForAll{$f \in \Call{collectLeaves}{repl}$}
	\Let{$openTransitions$}{$openTransitions ~\cup$}
	\Statex \hskip5\dimexpr\algorithmicindent $\{ \langle s, i \rangle \vert s \in S, i \in I, \delta(s, i) = f.reference\}$
	\EndFor
	\EndFor
	\State \Call{closeTransitions}{}
	\EndFunction
\end{algorithmic}

\end{algorithm}

The successor of a hypothesis state was determined by the behavior of the target system under certain discriminators.
If these discriminators change, the target system may exhibit different behavior as well, especially if the final hypothesis is not yet reached.
Because of that, a different successor may be selected and hence all incoming transitions need to be reevaluated against the new discriminators.
If this step is omitted, there exists a discrepancy between the output behavior of the current hypothesis states and future hypothesis states, whose successors will be determined by the new discriminators.
It may happen that the counterexample decomposition issues a refinement of a state that is already represented in the hypothesis, which results in the final hypothesis not being minimal.

At first, it might seem sufficient to update the transitions on a per-state granularity.
If a transition is found, that previously led into state $s_1$ and now -- under the new discriminators -- leads into state $s_2$, all remaining transitions that led into state $s_1$ may also be updated to target state $s_2$.
This is however not sufficient, because the learner has no information about which of the states of the tentative hypothesis already represent singleton partitions of the true equivalence classes and which do not.
This approach may only be valid, if the tentative hypothesis is already isomorphic to the target system, which however renders any subtree replacements redundant as the final hypothesis is already computed.
It may in fact happen, that two transitions that previously led into the same state, may lead into two different states under the new discriminators, which renders the individual investigation of every single transition necessary.

Since the closing of transitions may discover new hypothesis states, all replacements are scheduled before the call to \textsc{closeTransitions}.
Otherwise, a replacement may discard hypothesis states, that have been discovered in previous re-sift operations.

\subsection{Counterexample Reactivation}

Besides the additional efforts for validating a replacement and updating the tentative hypothesis, replacing discriminators has another effect on the learning process.
A special focus should be denoted to the semantics of the discriminators:
From an algorithmic point of view, the nodes of the adaptive discrimination tree -- and therefore the discriminators -- serve as decision points when sifting a word through the tree and determining the final node.
However, from a semantic point of view, one should recall that discriminators also define the output behavior of a state after applying the input sequence represented by the discriminator.

In the base algorithm, once a counterexample is decomposed, the extracted distinguishing suffix $v$ is integrated into the adaptive discrimination tree and each hypothesis state referenced in the leaves shows a certain behavior corresponding to the subtree it is contained in.
Since only final nodes are split in a refinement step, the information (or classification) obtained by $v$ is preserved throughout the learning process. 
However, by replacing discriminators (e.g. $v$), the information about $v$ and the expected output behavior upon receiving $v$ as an input sequence is discarded.
Applying the counterexample that yielded the discriminator $v$ to the updated, tentative hypothesis (cf. \Cref{alg:repl_resift}) may result in diverging output behavior again.
This means, counterexamples that have already been used to refine the hypothesis, may become valid counterexamples again.
While this effect does not affect the execution of the learning algorithm, it breaks the consistency of the hypothesis with previous observations from counterexamples.

To handle this problem and integrate the general support for discriminator replacements, the base algorithm is extended to cache every counterexample in a global set of counterexamples.
The \textsc{refineHypothesis} function, after retrieving the current counterexample from the queue of open counterexamples is altered to add the current counterexample to the global cache.
Additionally, after the initial refinement succeeded, every entry of the global cache is reevaluated again.
If thereby a valid counterexample is encountered, a further refinement step is issued.

	\chapter{Replacement Heuristics}
\label{cha:heuristics}

With the previous chapter introducing the necessary mechanics to successfully integrate adaptive distinguishing sequences into the learning process, this chapters continues the idea by presenting several replacement heuristics.
This will cover heuristics, that are solely possible due to the adaptive environment in which the learning process takes place (cf. \Cref{sec:subext}); heuristics that follow the \enquote{classic} approach of replacing subtrees of the adaptive discrimination tree (cf. \Cref{sec:subtreeRepl,sec:immediateRepl}) and data structures that gain new utility because of the altered learning setup (cf. \Cref{sec:ot}).

\section{Subtree Extensions}
\label{sec:subext}

The idea of extending subtrees is a heuristic that does not directly involve the computation of an adaptive distinguishing sequence.
It does, however, make use of the fact that the adaptive environment in which the learning process takes place, provides access to complete behavioral traces.
Recall from \Cref{alg:adtclose} that during the sift operation, unexpected behavior may occur, which leads to the discovery of a new equivalence class.
From a discriminating point of view, a new discriminator is found, that is a prefix of an existing discriminator.
As a result, a minimum of three -- at least two final nodes of the original subtree and the one discovered during the sift operation -- equivalence classes can be distinguished with a single, reset-free discriminating sequence.

The idea of subtree extensions applies the same effect in reverse.
Whenever a final node $f$ of the current adaptive discrimination tree is split in a refinement step, it may happen that the newly obtained discriminator $v$ is a continuation of the input sequence of $trace_{adt}(f)$.
So instead of adding a reset-node succeeded by the complete trace of $v$ to the adaptive discrimination tree, it is sufficient to append the corresponding suffix of $v$ to the ADT, which allows to save an unnecessary reset-node.
An example of the heuristic is shown in \Cref{fig:heu_extension}, which compares the two resulting adaptive distinguishing trees after the second refinement step discussed in \Cref{sec:aalex}.

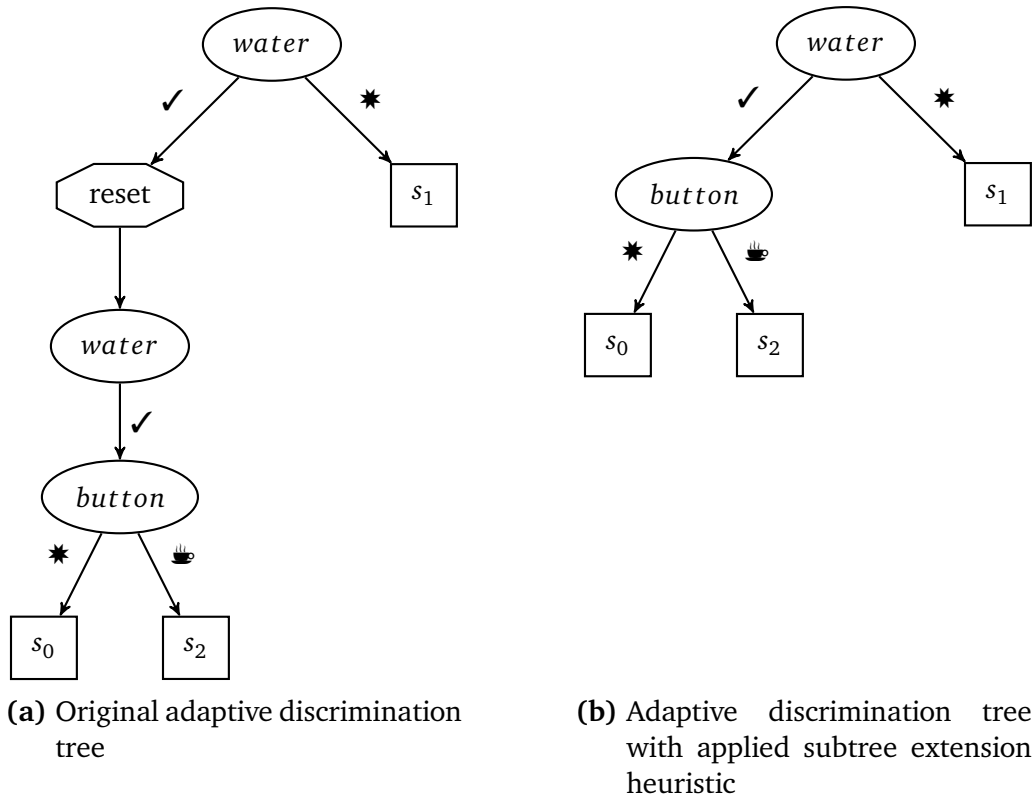
\begin{figure}[ht]
	\centering
	\centering
\subcaptionbox
{Original adaptive discrimination tree}
[0.4\textwidth]
{\begin{tikzpicture}[thick, ->, >=stealth']

	\node[symbol] (w1) at(0,0) {$water$};

	\node[final] (qb) at(2, -2) {$s_1$};
	\node[reset] (r) at(-2, -2) {};

	\node[symbol] (w2) at(-2,-4) {$water$};

	\node[symbol] (b) at (-2,-6) {$button$};

	\node[final] (qp) at (-1, -8) {$s_2$};
	\node[final] (qe) at (-3, -8) {$s_0$};

	\draw (w1) -- node[anchor=south west] {\coffeeerror} (qb);
	\draw (w1) -- node[anchor=south east] {\coffeeok} (r);

	\draw (r) -- (w2);

	\draw (w2) -- node[anchor=west] {\coffeeok} (b);

	\draw (b) -- node[anchor=south west] {\coffeecup} (qp);
	\draw (b) -- node[anchor=south east] {\coffeeerror} (qe);
\end{tikzpicture}}%
\hspace{0.1\textwidth}%
\subcaptionbox
{Adaptive discrimination tree with applied subtree extension heuristic \label{fig:heu_extension_b}}
[0.4\textwidth]
{
	\begin{tikzpicture}[thick, ->, >=stealth']

		\tikzstyle{symbol} = [state, ellipse]
		\tikzstyle{final} = [draw,rectangle, inner sep=8pt]

		\node[symbol] (w) at(0,0) {$water$};

		\node[final] (qb) at(2, -2) {$s_1$};
		\node[symbol] (b) at(-2, -2) {$button$};

		\node[final] (qp) at (-1, -4) {$s_2$};
		\node[final] (qe) at (-3, -4) {$s_0$};

		\draw (w) -- node[anchor=south west] {\coffeeerror} (qb);
		\draw (w) -- node[anchor=south east] {\coffeeok} (b);

		\draw (b) -- node[anchor=south west] {\coffeecup} (qp);
		\draw (b) -- node[anchor=south east] {\coffeeerror} (qe);

		\node[final, draw=none] at (1, -8) {\vphantom{$q_p$}};
	\end{tikzpicture}
}
	\caption{Subtree Extension Heuristic}
	\label{fig:heu_extension}
\end{figure}

Although the original subtree containing the complete discriminator of the second counterexample ($water \cdot button$) is replaced by the corresponding suffix ($button$), these kind of replacements differ from to the ones discussed in \Cref{cha:replacements}.
It is clear to see, that this heuristic does not need any additional steps to maintain the properties of termination and correctness of the learning algorithm.
The behavioral information stored in both adaptive discrimination trees is identical, as the second one is merely a more compact representation of the discovered knowledge.
While this heuristic is therefore optimal with regard to the required verification costs, its applicability highly depends on the decomposed discriminator -- a factor the learning algorithm usually has no control over.
A more self-determined way for saving reset nodes is pursued by the initial idea of active subtree replacements, as discussed in the next section.

\section{Subtree Replacements}
\label{sec:subtreeRepl}

This section presents heuristics that actively seek to replace nodes of the current adaptive discrimination tree by computing adaptive distinguishing sequences based on the current tentative hypothesis.
However, a preliminary question to ask is: \enquote{When do these replacements take place?}

In the developed approach, subtree replacements are preformed prior to internal refinement steps.
That is, if the \textsc{refineHypothesis} procedure (cf. \Cref{alg:adtref}) receives a valid counterexample, one of the heuristics of the following subsections is applied. 
Afterwards, the main loop of iterating over the queue of open counterexample is executed.
While this decision is motivated by the intent to reduce the number of reset nodes for succeeding refinement steps, one may question the state of the current hypothesis.

Given that the encountered counterexample is valid, the behavior exposed by the tentative hypothesis is provably wrong.
Therefore it might be questionable to consult the current hypothesis to compute adaptive distinguishing sequences, because the utilized behavior may not hold in the real target system.
However, a similar situation is encountered, when performing replacements after the hypothesis update.
If the updated hypothesis is assumed to be correct, subtree replacements can be considered redundant, as the final hypothesis is reached and no further refinement steps occur.
If one performs subtree replacements for \emph{future} refinement steps, it is indirectly admitted, that even the updated hypothesis does not exhibit correct behavior either.

Being a heuristic after all, the presented replacement strategies aim utilizing the structural information at hand.
In combination with the gracious validation process, that allows to recover from potentially failed validations, the computed replacements may therefore still pose improvements to the active learning process.

Regarding the validation process, it was mentioned that in the case of a failed validation, the validated replacement may contain reset nodes and therefore not necessarily reduce the number of reset nodes in a subtree.
The presented heuristics therefore calculate an \emph{effective reset count} for both the subtree to replace and the replacement.
That is, the accumulated sum of all reset nodes on $path_{\mathcal{T}}(f)$ over all leaves $f$ of a tree $\mathcal{T}$. 
If the effective reset count of a replacement is higher or equal to the value of the original subtree, the replacement is discarded.

The following three subsections continue to present the elaborated replacement heuristics.

\subsection{Leveled Replacements}
\label{sec:levelrepl}

The leveled replacement heuristic is a greedy replacement strategy that seeks to replace subtrees with reset nodes whenever possible.
Its core approach can be summarized as a breadth-first traversal of the current adaptive discrimination tree and the attempt to replace any encountered reset node.
The approach is formalized in \Cref{alg:leveledreplace}.

\begin{algorithm}[t]
	\caption{Replacement Heuristic: Leveled Replacement Heuristic \label{alg:leveledreplace}}
	\begin{algorithmic}[1]
	\Function{computeLeveledReplacements}{}
	\Let{$result$}{$\emptyset$}
	\Let{$queue$}{$\{adt.root\}$}
	\State{queueLoop:}
	\While{\Not $queue.isEmpty()$}
	\Let{$node$}{$queue.pop()$}
	\Let{$extension$}{$\Call{computeADTExtension}{node}$}
	\If{$extension \neq \Nil$}
	\Let{$result$}{$result \cup \{ \langle node.parent, extension, \emptyset \rangle \}$}
	\State{\textbf{continue} queueLoop}
	\EndIf
	\If{$\Call{collectResetNodes}{node} = \emptyset$}
	\State{\textbf{continue} queueLoop}
	\EndIf
	\Let{$targets$}{$\{ t.reference \vert t \in \Call{collectLeaves}{node}\}$}
	\Let{$replacement$}{$\Call{computeADS}{hypothesis, targets}$}
	\If{$replacement \neq \Nil$}
	\Let{$result$}{$result \cup \{\langle node, replacement, \emptyset \rangle\}$}
	\Else
	\ForAll{$f \in \Call{collectChildADTNodes}{node}$}
	\State{$queue.add(f)$}
	\EndFor
	\EndIf
	\EndWhile
	\State \Return $result$
	\EndFunction
\end{algorithmic}

\end{algorithm}

The function starts with initializing the set of proposed replacements with the empty set and the queue, that is used for the breadth-first iteration, with the root node of the current adaptive discrimination tree.
The main loop then picks a node from the queue and tries to find an extension of a potential parent trace.
The computation of such an extension is formalized in \Cref{alg:extCalc}.
If such an extension is found, it is scheduled as a replacement for the parent (reset-)node and the loop continues to process its remaining elements.
Note, that this heuristic always computes replacements covering all affected nodes -- the potential cutout is therefore always defined by the empty set.
If no such extension is found, it is first checked, if the current sub-tree still contains reset nodes that could potentially be saved by a replacement.
If this is not the case, the loop continues with the investigation of the remaining nodes.
Otherwise, the calculation of an adaptive distinguishing sequence for the nodes covered by the current subtree is issued.
If such a sequence is found, it is proposed as a replacement for the current node and added to the global set of proposed replacements.
If, however, neither an extension nor a replacement could be found, the child-ADTs (i.e. the first child-nodes that succeed a reset node) are added to the queue and investigated in the following iterations of the loop.
Ultimately, the set of all proposed replacements is returned.

Regarding the decision to stop further investigations, it is noteworthy, that subtrees without reset nodes model the stop criterion.
Situations may occur, where the computation of an adaptive distinguishing sequence may still improve the learning process, e.g. if the current subtree has great length and the hypothesis may yield an ADS that can distinguish the states by means of a shorter sequence.
However, with the mindset of saving resets, these replacements would only introduce additional verification costs without improving the structure of the adaptive discrimination tree reset-wise.
Hence, no further computations are issued.

Regarding the result of the heuristic, it should be noted that a set and therefore potentially multiple replacements are returned.
Given that the original data-structure, i.e. the current adaptive discrimination tree, follows a tree structure and its nodes are traversed from top to bottom, it is clear that only distinct sets of nodes are investigated, which ensures that there exist no collisions between two replacements.

\begin{algorithm}[t]
	\caption{Replacement Heuristic: Extension Calculation \label{alg:extCalc}}
	\begin{algorithmic}[1]
	\Function{computeADTExtension}{$node$}
	\If{$node.parent = \Nil$} \Comment{true if node $=$ adt.root}
	\State \Return \Nil
	\EndIf
	\Let{$targets$}{$\{ t.reference \vert t \in \Call{collectLeaves}{node}\}$}
	\Let{$reset$}{$node.parent$}
	\Let{$\langle i, o\rangle$}{$trace_{adt}(reset)$}
	\If{$\exists s_1, s_2 \in targets, s_1 \neq s_2 : \exists k, 1 \leq k \leq \vert i \vert: \delta(s_1, i_{1:k}) = \delta(s_2, i_{1:k})$}
	\State \Return \Nil \Comment{converging states}
	\EndIf
	\Let{$mapping[S]$}{\Nil} \Comment{initialize array for all states}
	\ForAll{$s \in targets$}
	\Let{$mapping[\delta(s, i)]$}{$s$}
	\EndFor
	\Let{$extension$}{$\Call{computeADS}{hypothesis, \{ s \vert s \in S, mapping[s] \neq \Nil\}}$}
	\If{$extension = \Nil$}
	\State \Return \Nil
	\EndIf
	\ForAll{$l \in \Call{collectLeaves}{extension}$}
	\Let{$l.reference$}{$mapping[l.reference]$}
	\EndFor
	\State \Return $extension$
	\EndFunction
\end{algorithmic}

\end{algorithm}

For the computation of potential extensions, it is first checked, if a parent node exists.
This is not the case only if the root node of the current adaptive discrimination tree is passed as parameter.
Hence, unless the current adaptive discrimination tree is already free of reset nodes, \Cref{alg:leveledreplace} always tries to compute an adaptive distinguishing sequence for the complete hypothesis first.

If a parent node -- and therefore an input sequence that can be extended -- exists, its trace in the current ADT is extracted.
Recall that for extensions, one must not consider the target states directly, but the successor states after applying the input sequence of the parent trace.
One may however, similar to the computation of adaptive distinguishing sequences, encounter the problem of converging states:
If for an input symbol, two states emit the same output symbol and transition into the same target state, they cannot be distinguished anymore.

Note, that the convergence test does not explicitly check the output of the hypothesis.
The definition of the \textsc{refineHypothesis} procedure (cf. \Cref{alg:adtref}) ensures that after the termination of a refinement step, the hypothesis is consistent with the behavioral information stored in the current adaptive discrimination tree.
Hence the outputs of all target states correspond to the outputs defined in the parent (output-) trace.
If converging states are detected, the computation of an extension aborts.
Otherwise a mapping is created, which stores the original states of the new current states.

Using the new set of current states, it is then attempted to compute an adaptive distinguishing sequence.
If such a sequence exists, its leaves however, reference the states of the new current set.
As a result, an additional post-processing step is necessary, that updates all referenced hypothesis states to their original initial states.
Afterwards, the computed extension is returned for replacing the parent (reset-)node.

\subsection{Exhaustive Replacements}
\label{sec:exhausrepl}

A potential drawback of the leveled replacement approach is the passive behavior if an adaptive distinguishing sequence does not exist for a certain subtree.
Especially in situations, where discriminators have a big fan-out with many subsequent reset nodes, the absence of an adaptive distinguishing sequence may retain these reset nodes, as the heuristic continues to search for replacements for each child subtree separately.
Therefore, in many situations, the capabilities of an (partial) adaptive distinguishing sequence may not be used to its full potential.
An approach to tackle this situation is described by the exhaustive replacement heuristic, which is formalized in \Cref{alg:repleager}.

\begin{algorithm}[t]
	\caption{Replacement Heuristic: Exhaustive Replacement Heuristic}
	\label{alg:repleager}
	\begin{algorithmic}[1]
	\Function{computeEagerReplacement}{}
	\If{$\Call{collectResetNodes}{adt.root} = \emptyset$}
	\State \Return $\emptyset$
	\EndIf
	\ForAll{$c \in \Call{computeCutouts}{S}$}
	\Let{$result$}{$\Call{computeADS}{hypothesis, S \setminus c}$}
	\If{$result \neq \Nil$}
	\State \Return $\{\langle adt.root, result, c \rangle\}$
	\EndIf
	\EndFor
	\State \Return{$\emptyset$}
	\EndFunction
\end{algorithmic}

\end{algorithm}

The main approach of this heuristic is to always compute a replacement for the root node of the current adaptive discrimination tree.
As a result, to not compute a replacement if the current adaptive discrimination tree is already optimal with respect to the amount of reset nodes, a redundancy check is modeled explicitly.
Core to the execution of this heuristic is the realization of the \textsc{computeCutout} method, which returns a sequence of cutouts -- a sequence of sets of nodes to remove from the set of all hypothesis states.

For instance, this method may be realized by enumerating over the elements of the powerset $2^S$, in ascending order of their size.
This would allow to find an adaptive distinguishing sequence that would cover the maximum number of nodes possible.
However, it is easy to see that the exponential nature of the powerset would dominate the runtime even for remotely sized systems.

A more sophisticated approach may be achieved by the following idea:
For the initial computation of an adaptive distinguishing sequence for the complete hypothesis, the algorithm of Lee and Yannakakis is used.
The algorithm can be slightly modified, so that instead of returning \textit{nil}, the set of indistinguishable states is returned.
This \enquote{problem-oriented} set may then be used to propose reasonable cutouts.
However, during the development of this heuristic, several tests showed that it is often already the initial partition of states for which finding a splitting input fails.
Therefore this approach would often degenerate to the classic powerset scenario.

Ultimately, this method was realized by proposing cutouts based on the structure of the current adaptive discrimination tree.
For all possible sub-ADTs, the set of hypothesis states referenced in their leaves are constructed.
These sets are then (ascending in their size) proposed as cutouts.

\subsection{Single Replacements}

The previous two heuristics aim at replacing subtrees with reset nodes whenever possible.
While their greedy approach contributes to the reduction of reset nodes, it neglects the fact that each proposed replacement results in validation costs and -- in case of a successful validation -- requires an update of the tentative hypothesis.
A more passive approach is pursued by the single replacement heuristic, which -- as the name suggest -- only proposes a single replacement.
The heuristic is formalized  in \Cref{alg:replsingle}.

\begin{algorithm}[t]
	\caption{Replacement Heuristic: Single Replacement Heuristic}
	\label{alg:replsingle}
	\begin{algorithmic}[1]
	\Function{computeSingleReplacement}{}
	\Let{$subtrees$}{$\Call{collectAllSubADTs}{adt.root}$}
	\State{stLoop:}
	\ForAll{$st \in \Call{sortByRFScoreDesc}{subtrees}$}
	\Let{$extension$}{$\Call{computeADTExtension}{st}$}
	\If{$extension \neq \Nil$}
	\State \Return $\{\langle st.parent, extension, \emptyset \rangle\}$
	\EndIf
	\If{$\Call{collectResetNodes}{st} = \emptyset$}
	\State \textbf{continue} stLoop 
	\EndIf
	\Let{$targets$}{$\{t.reference \vert t \in \Call{collectLeaves}{st}$\}}
	\Let{$replacement$}{$\Call{computeADS}{hypothesis, targets}$}
	\If{$replacement \neq \Nil$}
	\State \Return $\{\langle st, replacement, \emptyset \rangle\}$
	\EndIf
	\EndFor
	\State \Return $\emptyset$
	\EndFunction
\end{algorithmic}

\end{algorithm}

The heuristic starts by collecting all (sub-) ADTs of the root node, i.e. all nodes, that succeed a reset node.
Recall, if the current adaptive discrimination tree contains no reset nodes, this method returns the empty set and therefore no replacements are proposed.
It then sorts the subtrees according to the \emph{reset/final score} in descending order.
The $rf$ score is defined as follows:

$$rf(n) = \dfrac{1 + \vert \textsc{collectResetNodes}(n) \vert}{\vert \textsc{collectLeaves}(n) \vert}$$

The motivation behind this score is, that it values subtrees with a high amount of reset nodes and a low amount of final nodes.
Replacements for these trees improve the structure of the resulting adaptive discrimination tree while coming at relatively low validation costs.
The additional $+1$ comes from the fact, that the collection of (sub-) reset nodes misses the additional reset node given by the parent of the current subtree's root.

Iterating over the sorted subtrees, the search for a replacement follows the structure of the leveled replacement heuristic.
At first, it is attempted to compute an extension for the parent trace of the currently inspected node.
If this attempt is successful, a singleton set containing the replacement is returned.
Otherwise, if the subtree still contains reset nodes, it is checked if the hypothesis states referenced in the current subtree can be distinguished by means of a single adaptive distinguishing sequence.
Again, if such a sequence exists, it is returned as the single proposed replacement.
Otherwise, the subtree with the next lower $rf$ score is investigated.
If no valuable distinguishing sequences have been found, the heuristic proposes no replacement.

\section{Immediate Replacements}
\label{sec:immediateRepl}

A trait the previously presented replacement heuristics have in common, is the clear separation from the internal procedures of the learning algorithm.
The replacements and validations take place at a distinct point of time, leaving procedures such as \textsc{refineHypothesisInternal} (cf. \Cref{alg:adtref}) essentially atomic.
However the approach of utilizing adaptive distinguishing sequences may also be applied in a more fine-grained manner as realized by the immediate replacement heuristic.

The heuristic intervenes the internal refinement procedure (cf. \Cref{alg:adtref}) after the states of the hypothesis and the final nodes of the adaptive discrimination tree have been refined, but before the open transitions are closed.
The key idea of the heuristic revolves around the distinction between temporary and finalized discriminators:
In the base algorithm, the counterexample decomposition yields a distinguishing suffix $v$ which is directly integrated into the adaptive discrimination tree by replacing the final node referencing the too coarse equivalence class with a reset node followed by the sequence of symbol nodes resembling $v$.
The immediate replacement heuristic however seeks for an extension to the previous trace\footnote{Note, that this heuristic is not applicable for the first discriminator, as no previous trace exists yet.} (i.e. $trace_{adt}(f')$ for the too coarse final node $f'$ in the former adaptive discrimination tree), that distinguishes the newly discovered equivalence classes.
It is similar to the approach described in \Cref{sec:subext}, although it does not rely on the previous trace being a prefix of $v$, since the heuristic actively computes a potential extension based on the hypothesis.
To determine such an extension, it will use the original discriminator $v$ as a temporary discriminator for only a small amount of transitions and proposes, in case of success, a finalized discriminator that does not use an additional reset node.
The approach is formalized in \Cref{alg:replimmed}.

\begin{algorithm}[tp]
	\caption{Replacement Heuristic: Immediate Replacements}
	\label{alg:replimmed}
	\begin{algorithmic}[1]
	\Function{computeFinalDiscriminator}{$tempDiscr$}
	\Loop{: outer}
	\Try
	\Let{$targets$}{$\Call{collectLeaves}{tempDiscr}$}
	\Let{$\langle i, o \rangle$}{$trace_{adt}(tempDiscr.parent)$}
	\Let{$mapping[S]$}{\Nil} \Comment{initialize empty array}
	\ForAll{$l \in targets$}
	\Let{$mapping[l.reference]$}{$l.reference$}
	\EndFor
	\For{$k = 1$ \textbf{to} $\vert i \vert$}
	\Let{$nextMapping[S]$}{\Nil} \Comment{initialize empty array}
	\ForAll{$s \in \{ t \vert t \in S, mapping[t] \neq \Nil\}$}
	\If{$\langle s, i_k \rangle \in openTransitions$}
	\State{$\Call{closeTransition}{\langle s, i_k \rangle}$}
	\EndIf
	\If{$\lambda(s, i_k) \neq o_k$} \Comment{inconsistency}
	\Let{$as$}{$accessSequences[mapping[s]]$}
	\State{$openCounterexamples.add(\langle as \cdot i_{i:k}, \lambda(as) \cdot o_{1:k} \rangle)$}
	\State \Return{$tempDiscr$}
	\EndIf
	\Let{$succ$}{$\delta(s, i_k)$}
	\If{$nextMapping[succ] \neq \Nil$} \Comment{converging states}
	\State \Return{$tempDiscr$}
	\EndIf
	\Let{$nextMapping[succ]$}{$mapping[succ]$}
	\EndFor
	\Let{$mapping$}{$nextMapping$}
	\EndFor
	\Let{$result$}{$\Call{computeDefensiveADS}{hypothesis, \{ s \vert s \in S, mapping[s] \neq \Nil\}}$}
	\If{$result \neq \Nil$}
	\ForAll{$l \in \Call{collectLeaves}{result}$}
	\Let{$l.reference$}{$mapping[l.reference]$}
	\EndFor
	\Else
	\State \Return{$result$}
	\EndIf
	\State \Return{$tempDiscr$}
	\EndTry
	\Catch{ModificationException}
	\State \textbf{continue} outer
	\EndCatch
	\EndLoop
	\EndFunction
\end{algorithmic}

\end{algorithm}

The heuristic receives the subtree that has been added to the adaptive discrimination tree in the preceding refinement step and resembles the discriminator $v$ as an input parameter.
The work of the heuristic is embedded in a potential infinite loop, whose explanation will follow shortly.
Within said loop, the heuristic starts with initializing a set of auxiliary variables:
First, the final nodes of the temporary discriminator are collected.
In the first loop iteration, the set will contain the two final nodes referencing the recently added hypothesis state as well as the hypothesis state to be refined.
For future reference, these two nodes may be referred to as $f$ and $f'$.
Second, the trace that leads into the temporary discriminator is computed by invoking the trace function on the reset node that precedes the given subtree.
Third, a mapping of hypothesis states is defined that allows to keep track of the initial and current set of states by storing array entries in the form of $mapping[current] = initial$.
It is initialized with the identical mapping.

The heuristic continues with reapplying the input sequence of the parent trace to the current set of nodes.
Therefore it iterates over all affected states and first checks if the transition for the current state and the current input symbol is well-defined.
This check is required, because for certain states -- one may think of the most recently added state -- the outgoing transitions have not yet been defined or -- in the case of transitions that led into the refined state -- need to be refined.
If necessary, the transition in context will be closed by a call to the known \textsc{closeTransition} procedure (cf. \Cref{alg:adtclose}).

The resulting sift operations (of the \textsc{closeTransition} invocation) may at one point come to a situation, where the target state of the passed transition is either $f$ or $f'$.
Although the current execution of the replacement heuristic is still seeking for an input sequence that distinguishes the two states, there already exists a distinguishing sequence: the current (temporary) discriminator $v$.
The motivation behind calling $v$ a \emph{temporary} discriminator is, that $v$ will only be used to distinguish between $f$ and $f'$ in these specific cases, unlike previous scenarios, where $v$ would have been used to close all transitions in general.
Only if the heuristic does not manage to provide a valid extension for the previous trace, $v$ will also be retained as a finalized discriminator.

Additionally, the sifting operation may hold another exceptional behavior.
During a sift operation using the current temporary discriminator, another equivalence class may be discovered due to previously unobserved behavior.
The situation that ensues is that the current iteration of the replacement heuristic computes an extension for $m$ final nodes, although the temporary discriminator, which should be replaced, distinguishes $m+1$ states.
To not discard this information, the \textsc{closeTransition} procedure may signal a modification exception that interrupts the further execution of the current replacement computation.
Encountering this situation may however be handled by simply restarting the computation, which explains the infinite loop wrapping the heuristic.
It can however be assured, that the computation does not end in a true infinite loop, since only a finite amount of modification exceptions can be raised, as the target system is assumed to be finite.

Continuing the computation of the replacement, the heuristic first checks if the output of the currently iterated state matches the expected output of the parent trace.
Since the hypothesis has just been refined, the consistency with the behavioral information of the adaptive discrimination tree cannot be guaranteed.
If the outputs differ, a counterexample is logged and the computation of an immediate replacement is aborted by returning the temporary discriminator as the final one.
Otherwise, it is checked if the successor of the current state is already reached by another state.
This tackles the same problem as mentioned in \Cref{sec:levelrepl}:
If two distinct states produce the same output and transition into the same target state, no further input sequence is able to distinguish the two states.
If all validation checks pass, the mapping from the current to initial states is updated and the next input symbol is investigated.

Once the set of current nodes is determined, the computation of an adaptive distinguishing sequence is attempted.
Note, due to the scenario described above, a slightly modified version of the ADS-computation procedure is invoked, that respects potentially undefined transitions.
At first, undefined transitions are skipped, since determining their output/successor requires an additional sifting operation.
If, however, for the current hypothesis no adaptive distinguishing sequence can be found, the undefined transitions are selectively closed and the attempt to find an adaptive distinguishing sequence is repeated.
A more detailed explanation of the defensive calculation approach is presented in \Cref{cha:ads}.

If an adaptive distinguishing sequence is found, it is still a final post-processing step required.
The adaptive distinguishing sequence was computed to distinguish between the states reached after applying the input sequence of the parent trace.
In order to return a valid discriminator for the initial states, the referenced hypothesis state of every final node of the computed adaptive distinguishing sequence is updated using the previously computed mapping array.
The updated ADS is then returned as the finalized discriminator.

After the termination of the \textsc{computeFinalDiscriminator} procedure, the returned (finalized) discriminator $fd$ is investigated.
If the result is equal to the temporary discriminator (i.e. the finalization procedure was not able to compute an extension), no additional steps are necessary, as the temporary discriminator is already part of the adaptive discrimination tree.
The refinement step may finish by closing the remaining open transitions.
Otherwise, the replacement $\langle tempDiscr.parent, fd, \emptyset \rangle$ is issued.
Similar to the subtree replacement heuristic, the validated (finalized) discriminator is discarded, if its effective reset count does not improve the effective reset count of the temporary discriminator.
Afterwards, for closing the remaining open transitions, the updated adaptive discrimination tree may be utilized.

\subsection{Example}
\label{sec:imreplex}

To summarize the functionality of the immediate replacement heuristic and give an example of the potential savings the heuristic may offer, this paragraph shows an exemplary execution for the coffee machine use case.
Therefore, recall the situation depicted in \Cref{fig:learning_ex3}:
The decomposed counterexample resulted in the creation of state $s_2$ as a refinement of state $s_0$ and yielded the discriminator $water \cdot button$ to distinguish between the two states.
In order to complete the refinement step, the remaining open transitions need to be closed.

Contrary to the base algorithm, which uses the adaptive discrimination tree as-is, the immediate replacement heuristic seeks for an adaptive distinguishing sequence that extends the previous trace to distinguish between $s_2$ and $s_0$.
To do so, the heuristic first applies the previous (input) trace $water$ to both states $s_2$ and $s_0$.
However, due to the refinement step, both the $\langle s_0, water \rangle$ and $\langle s_2, water \rangle$ transitions are open.
This problem can be circumvented by using the obtained discriminator $water \cdot button$ as a temporary discriminator.
Similar to the base algorithm, the queries $mq(pod \cdot water, water)$ and $mq(pod \cdot water, water \cdot button)$ for the state $s_2$ and $mq(\varepsilon, water \cdot water \cdot button)$ for the state $s_0$ are used to close the required transitions.
The situation is displayed in \Cref{fig:immedrepl}:

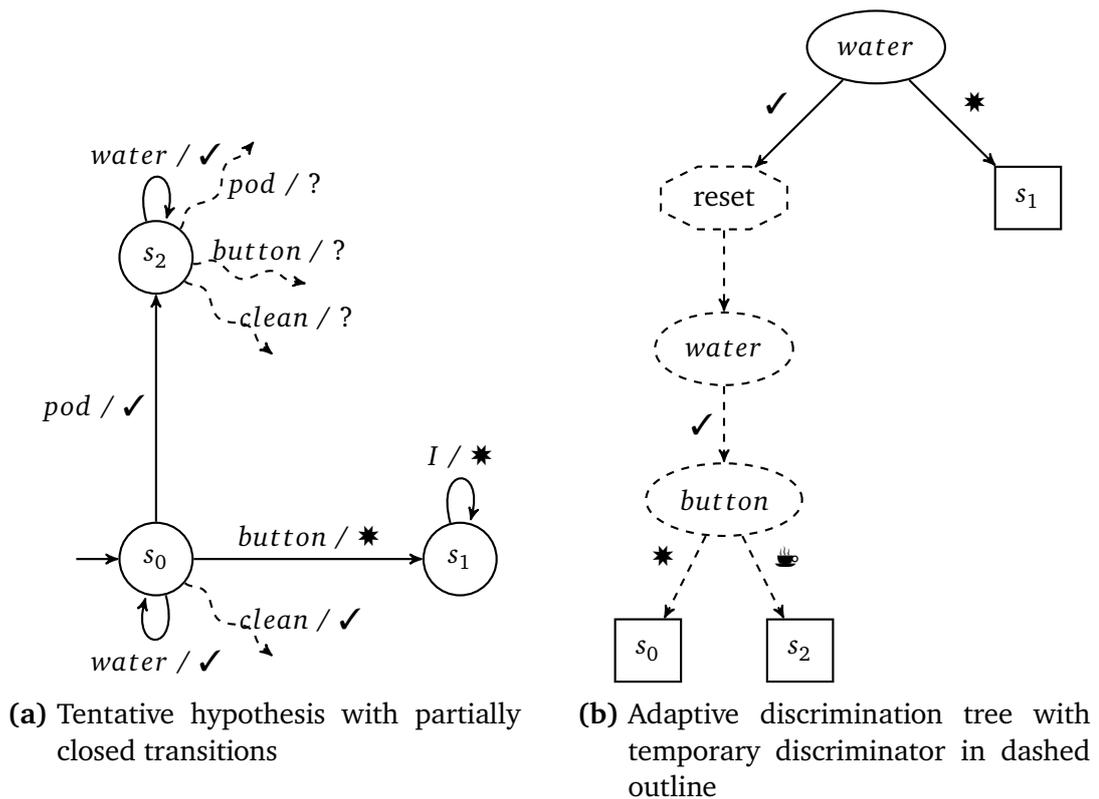
\begin{figure}[ht]
	\centering
	\begin{subfigure}[t]{0.5\textwidth}
	\centering
	\subcaptionbox
	{Tentative hypothesis with partially closed transitions}[0.9\textwidth]
	{
		\begin{tikzpicture}[thick, ->, >=stealth']
			\tikzstyle{edge} = [dashed, decorate, decoration={snake, segment length=8mm}]
			\node[initial, initial text={}, state] (s0) at(0,0) {$s_0$};
			\node[state] (s1) at(4,0) {$s_1$};
			\node[state] (s2) at(0,4) {$s_2$};

			\draw (s0) -- node[anchor=east] {$pod$ / \coffeeok} (s2);
			\draw (s0) -- node[anchor=south] {$button$ / \coffeeerror} (s1);
			\draw (s0) edge[loop below] node[anchor=north] {$water$ / \coffeeok} (s0);

			\draw[edge] (s0) -- node[anchor=west] {$clean$ / \coffeeok} ++(-40:2cm);

			\draw (s1) edge[loop above] node[anchor=south] {$I$ / \coffeeerror} (s1);

			\draw (s2) edge[loop above] node[anchor=south] {$water$ / \coffeeok} (s2);

			\draw[edge] (s2) -- node[anchor=west] {$pod$ / ?} ++(50:2cm);
			\draw[edge] (s2) -- node[anchor=south, xshift=1em] {$button$ / ?} ++(-10:2cm);
			\draw[edge] (s2) -- node[anchor=west] {$clean$ / ?} ++(-40:2cm);

		\end{tikzpicture}
	}
\end{subfigure}%
\begin{subfigure}[t]{0.5\textwidth}
	\centering
	\subcaptionbox
	{Adaptive discrimination tree with temporary discriminator in dashed outline}[0.9\textwidth]
	{
		\begin{tikzpicture}[thick, ->, >=stealth']

			\node[symbol] (w1) at(0,0) {$water$};

			\node[final] (s1) at(2, -2) {$s_1$};
			\node[reset, dashed] (r) at(-2, -2) {};

			\node[symbol, dashed] (w2) at(-2,-4) {$water$};

			\node[symbol, dashed] (b) at (-2,-6) {$button$};

			\node[final] (s2) at (-1, -8) {$s_2$};
			\node[final] (s0) at (-3, -8) {$s_0$};

			\draw (w1) -- node[anchor=south west] {\coffeeerror} (s1);
			\draw (w1) -- node[anchor=south east] {\coffeeok} (r);

			\draw[dashed] (r) -- (w2);

			\draw[dashed] (w2) -- node[anchor=east] {\coffeeok} (b);

			\draw[dashed] (b) -- node[anchor=south west] {\coffeecup} (s2);
			\draw[dashed] (b) -- node[anchor=south east] {\coffeeerror} (s0);
		\end{tikzpicture}
	}
\end{subfigure}
	\caption{Tentative hypothesis and adaptive discrimination tree after decomposing the counterexample}
	\label{fig:immedrepl}
\end{figure}

After applying $water$ to both states $s_0$ and $s_2$, the hypothesis remains in the same states, as both transitions are self-loops.
The heuristic continues to defensively compute an adaptive distinguishing sequence for the two states.
The first iteration of this computation however fails:
Upon receiving the input $water$ both states emit the same output symbol (\coffeeok) and remain in the same states, so that $water$ cannot be an adaptive distinguishing sequence.
Moreover all remaining outgoing transitions are undefined for $s_2$.

The defensive ADS calculation continues with closing open transitions using the temporary discriminator.
In the given example it may decide for the $\langle s_2, button \rangle$ transition, because the $button$-successor for $s_0$ is already defined and closed.
For the sake of this example, let us assume that the transition $\langle s_2, button \rangle$ outputs $\coffeecup$ (in reality, the determined system output is $\coffeeerror$ and consequently no adaptive distinguishing sequence based on the hypothesis is found).
Restarting the computation of an adaptive distinguishing sequence, the defensive ADS calculation may now return the input sequence $button$, as the outputs for $s_0$ (\coffeeerror) and $s_2$ (\coffeecup) differ.
Therefore the symbol node referencing the input symbol $button$ is proposed as a replacement for the subtree starting at the reset node.

One can see, that the verification will succeed and ultimately an adaptive discrimination tree similar to the one of \Cref{fig:heu_extension_b} will be constructed.
To finalize the replacement, transitions leading into $s_0$ or $s_2$ need to be re-sifted, as the classification may change under the new discrimination tree.
It is noteworthy, that the remaining transitions, namely $\langle s_2, pod \rangle$ $\langle s_2, clean \rangle$ and $\langle s_0, clean \rangle$ will be closed using the updated adaptive discrimination tree.
For all three transitions, the successor will either be $s_0$ or $s_2$, meaning the replacement computed by the heuristic saved a reset query for each transition.

\section{Observation Tree}
\label{sec:ot}

The previous chapters and sections introduced the concept of subtree replacements and discussed the impact they have on the learning process.
Yet, there are further areas whose potential to improve the learning process increases when exposed to replacements of discriminators.
The component discussed in this section is the \emph{observation tree}.

The observation tree is a secondary automaton that is linked with the symbol query oracle of the learning algorithm and tracks every posed sequence of symbols including the target system's response.
In its core, the observation tree resembles a tree cache for queries.
However, for classic learning algorithms this cache often only aids the learning process, if the currently posed query is a prefix of a previously posed query, i.e. the current query can be cached.
In many situations, the behavioral information the observation tree holds is already represented in the primary data structure (e.g. the discrimination tree) of the learning algorithm.

When replacing discriminators however, the observation tree maintains the behavioral information of the target system that is discarded in the main algorithm.
From the learners perspective, the observation tree gains two beneficial properties:
First, it holds structural information about unrepresented behavior.
This allows to find diverging behavioral information at low costs, because it is present in the form of an automaton and does not require reset or symbol queries.
Second, the information stored in the observation tree is verified, as only traces executed on the target system are stored.
This means, any information extracted from the observation tree, does not need additional verification steps.

One possible point of execution, where the information of the observation tree can be used, is after the decomposition of the counterexample into the tuple $\langle u, a, v \rangle$.
In most occasions presented so far, key aspect of the counterexample decomposition was the extraction of a discriminator $v$.
This changes for the observation tree, where determining the state to split $s_{sp} = \delta_{\mathcal{H}}(ua)$ and the new state $s_n$ is of key interest.
Their corresponding access sequences ($as_x$ for $s_x$) may be used to transition the observation tree automaton into states $ot_{sp} = \delta_{OT}(as_{sp})$ and $ot_n = \delta_{OT}(as_n)$.
For these two states, one can now compute separating words:

In a first variation, one may continue to apply the input sequence of the old parent trace, i.e. compute a separating word for the states $\delta_{OT}(ot_{sp}, i)$ and $\delta_{OT}(ot_n, i)$ where $i$ represents the input component of $trace_{adt}(f)$ for the final node $f$ referencing $s_{sp}$ before incorporating the new discriminator $v$.
A separating word may not always exists for these states due to either undefined transitions in the observation tree automaton or simply the absence of a separating word. 
In case of success however, the computed separating word represents a distinguishing extension to the parent adaptive distinguishing sequence.
In essence, the observation tree provided a result similar to the immediate replacement heuristic (cf. \Cref{sec:immediateRepl}) without the need to use $v$ as a temporary discriminator or the need to validate a proposed replacement.
This allows to finish the refinement step with a low amount of reset queries, as only the outgoing transitions of $s_{n}$ and the incoming transitions of $s_{sp}$ need to be closed.

If such a separating word does not exist, one may still continue to try to compute a separating word for the states $ot_{sp}$ and $ot_n$.
While again, its existence is not guaranteed, any input sequence that separates $ot_{sp}$ and $ot_n$ is also able to separate $\delta_{\mathcal{M}}(as_{sp})$ and $\delta_{\mathcal{M}}(as_n)$ in the true target system $\mathcal{M}$.
Depending on the counterexample, the computed separating word may be significantly shorter than the extracted discriminator $v$.
In this case, replacing the initial discriminator $v$ with the computed separating word, the refinement process may continue as presented, but use less symbol queries compared to its original execution.

If, after all, the observation tree does not find any separating words, the learning process may continue as presented.
This case does not worsen the performance of the learning process, since all computations revolving around the observation tree work on cached values and do not pose additional reset or symbol queries.

Regarding the potential impact on the learning process, the observation tree poses a counter part to the subtree replacement costs.
Since each subtree replacement introduces query overhead for validating traces and re-sifting transitions, a high amount of replacements results in a high amount of additional queries.
However, the more replacements take place during the learning process, the more alternative knowledge will be stored in the observation tree, potentially offering improvements more often.
The influence of these two properties on each other and the performance of the learning process is depicted in the collected data in \autoref{cha:appendix}.

With regard to comparing the performance of the ADTLearner to other learning algorithms, it is noteworthy, that consulting the observation tree gives the presented algorithm an unfair advantage, because it grants access to information, that is not currently part of the main data structures, at no costs.
To create a common ground between the presented and competing algorithms, the costs of accessing old information may be nullified by a query cache.
For the developed approaches, the observation tree will also be used as a query cache, whereas the membership oracles of competing algorithm will be wrapped in a tree cache.

	\chapter{On the Computation of Adaptive Distinguishing Sequences}
\label{cha:ads}

The previous chapters presented techniques to successfully integrate adaptive distinguishing sequences in the learning process and heuristics that actively employ adaptive distinguishing sequences.
However, it was always abstracted from their actual computation.
This chapter briefly discusses what different kinds of adaptive distinguishing sequences are elaborated and how the defensive computation of adaptive distinguishing sequences in the case of immediate replacements (cf. \Cref{sec:immediateRepl}) is realized.

\section{Adaptive Distinguishing Sequences}

As stated in \Cref{sec:ads}, Lee and Yannakakis proposed an algorithm (henceforth LY-al\allowbreak-gorithm) that computes -- if existent -- a quadratically bound\footnote{Bound in its length.} adaptive distinguishing sequence in polynomial time.
However, many heuristics compute adaptive distinguishing sequences only for a subset of states of the hypothesis, which is (unless $P = PSPACE$) a much harder problem and for which the LY-algorithm is generally not applicable.
Additionally, the adaptive distinguishing sequences returned by the LY-algorithm are not optimal, which is a property that again increases the complexity of the computation.

In order to allow the elaboration of different settings, a second approach to compute adaptive distinguishing sequences is utilized, that is based on the analysis of the successor tree \cite{gill1961state}.
However, the above mentioned complexity measures indicate, that certain computation strategies may highly impact the learning process runtime-wise.
Therefore, three \enquote{profiles} are considered for the evaluation:

\begin{description}

\item[Best Effort] describes the approach where the quality of the computed result is traded for its computational costs.
The best effort strategy utilizes three different (sub-) algorithms to compute an adaptive distinguishing sequence depending on the size $m$ of the target states:

\begin{compactitem}

\item for $m = n$, where $n$ denotes the size of the hypothesis (i.e. an adaptive distinguishing sequence for the complete automaton) the LY-algorithm is used.

\item for $m = 2$, an adaption of the Hopcroft-Karp algorithm \cite{hoka71} for equivalence checks of automata is used, which allows to compute a separating word -- and therefore an adaptive distinguishing sequence -- in near linear time.

\item for $2 < m < n$, a leveled breadth-first search on the successor tree is used.

\end{compactitem}

The leveled BFS is realized by iterating over the nodes of the successor tree in a breadth-first manner.
Whenever a node $k$ (and therefore a corresponding input sequence) is found that splits the current set of states e.g. in partitions $p_1$ and $p_2$, the current search is paused and new computations of adaptive distinguishing sequences for target sets $p_1$ and $p_2$ are started.
If these recursive calls return successfully (i.e. with an adaptive distinguishing sequence for the partitions), their corresponding ADS is appended to the input trace leading into node $k$.
Otherwise the search is continued at node $k+1$.
Note, that the recursion steps end with singleton partitions, which simply return the current state.

\item[Minimal Length] describes the approach of performing a breadth-first minimal cost search on the (adaptive) successor tree.
In contrast to the (classic) successor tree, does the adaptive extension allow to investigate resulting partitions independently from each other.
The costs $c(k)$ for a node $k$ are computed as follows:

$$c(k) = 1 + \max_{1 \leq l \leq m}{c(l)}$$

where $c(1), ..., c(m)$ denote the minimal costs for the $m$ child partitions of node $k$.
If a node represents a singleton state, its costs are defined as $0$.

The minimal length adaptive distinguishing sequence can then be extracted by following the path in the (adaptive) successor tree with minimal costs.

\item[Minimal Size] describes the approach of finding an adaptive distinguishing sequence of minimal size, i.e. with the minimum amount of symbol nodes.
Similar to the minimal length approach, this computation is realized by performing a breadth-first minimum cost search of the (adaptive) successor tree.
However, instead of using the maximum costs of all subtrees to compute the costs for the current node in the successor tree, this approach uses the sum of all minimal costs of child nodes.

\end{description}

\section{Defensive Adaptive Distinguishing Sequences}

The major difference between the traditional scenario for computing adaptive distinguishing sequences and the scenario encountered during the immediate replacement heuristic, is the potential undefinedness of transitions.
To be able handle undefined transitions, the previously discussed approaches can easily be extended to check for the existence of a transition and discard further analysis of an input symbol (or sequence) if necessary.
However, by skipping certain investigations, it may also happen that the potential finding of an adaptive distinguishing sequence is missed.
In general, the undefinedness of a transition is not a property of the target system -- since it is assumed to be complete -- but rather the result of a transition not being closed yet.
Thus, by using the available temporary discriminator to close a transition if necessary, the previously discussed approaches may after all find an adaptive distinguishing sequence.
This approach is formalized in \Cref{alg:defensiveads}.

\begin{algorithm}[t]
	\caption{Adaptive Distinguishing Sequences: Defensive ADS Computation \label{alg:defensiveads}}
	\begin{algorithmic}[1]
	\Function{computeDefensiveADS}{$hypothesis, targets$}
	\Let{$result$}{$\Call{computeADS}{hypothesis, targets}$}
	\While{$result = \Nil$}
	\If{$openStates \neq \emptyset \And openSymbol \neq \Nil$}
	\ForAll{$s \in openStates$}
	\State{$\Call{closeTransition}{\langle s, openSymbol \rangle}$}
	\EndFor
	\Let{$openStates$}{$\emptyset$}
	\Let{$openSymbol$}{$\Nil$}
	\Let{$result$}{$\Call{computeADS}{hypothesis, targets}$}
	\Else
	\State \Return \Nil
	\EndIf
	\EndWhile
	\State \Return $result$
	\EndFunction
\end{algorithmic}

\end{algorithm}

Note, that defensive computations only occur for the immediate replacement heuristic and therefore the parameter $targets$ is always a true subset of all hypothesis states.
Hence the computation of a defensive adaptive distinguishing sequence is always based on the traversal of the (adaptive) successor tree.

As indicated before, the approaches to compute a regular adaptive distinguishing sequence (i.e. during the call to \textsc{computeADS}) are extended to handle undefined transitions.
When a node (and therefore a set of current states and an input symbol) in the successor tree is encountered, for which an undefined transition exists in the hypothesis, the procedure interrupts for a special exception handling:
The symbol associated with the current successor tree node is stored in the global variable $openSymbol$.
Furthermore, for every state $s$ of the associated current-set it is checked, if the transition $\langle s, openSymbol \rangle$ is defined (there has to exists at least one state, for which this check fails).
All states, for which this check fails, are stored in the global $openStates$ variable.
If these variables were already defined, because the current traversal of the successor tree already visited a node with undefined transitions, the variables are overridden only if the current set of open states is smaller than the existent global one.
Afterwards, the current node (and its subtree) is discarded and the traversal of the successor tree is continued.

If, after the termination of the \textsc{computeADS} call, no adaptive distinguishing sequence is found, it is checked if there exist open transitions, which may have prevented the successful finding of an ADS.
Note, that by construction, the minimal amount of transitions is closed, to ensure progress for the next traversal of the successor tree.
If all encountered transitions are closed (i.e. the successor tree traversal has not defined any open states or symbol) the absence of a result corresponds to the absence of an adaptive distinguishing sequence for the complete hypothesis.

	\chapter{Evaluation}
\label{cha:eval}

This chapter presents the evaluation of the developed approaches of this thesis.
It will analyze key characteristics of the base algorithm, the impact of the proposed heuristics and compare their performance to other state-of-the-art learning algorithms.
\Cref{sec:theoanal} focuses on the theoretical analysis, presenting worst-case boundaries for certain properties of the algorithm.
However, due to the nature of the heuristics, a fine-grained analysis is cumbersome and requires a certain set of assumptions.
In order to give a more practical view on the performance, \Cref{sec:empanal} additionally presents the results of several empirical analyses.
A set of synthetic benchmarks is used to point out certain effects and characteristics of the developed approaches, whereas two real-life systems are used to show the performance in realistic environments.

\section{Theoretical Analysis}
\label{sec:theoanal}

In computer science, algorithms are often analyzed with regard to certain complexity measures to give an indication about their performance.
In many cases, the property of interest is time complexity:
Given the size of the input of an algorithm, it provides an estimate -- in most cases an upper bound -- on the number of steps the algorithm executes before terminating.
However, as already stated in \Cref{sec:problem}, for active learning algorithms this complexity measure has the tendency to be meaningless, as different execution steps may require a highly varying amount of time.
In fact, in many real-life applications, the performance of the target system is the dominating factor for the runtime performance.
As a result, rather than analyzing the time complexity, the active learning community often analyzes the query complexity of learning algorithms, giving estimates of the maximum number of posed membership queries and their maximum length.

Adjusting to the adaptive scenario, the following sections will provide asymptotic upper boundaries for the number of equivalence, reset and symbol queries for the base algorithm and discuss the impact of the presented heuristics.

\subsection{Base Algorithm}

For determining upper bounds on the various types of queries, one should recall the properties of a worst-case scenario:
Each refinement step only leads to the discovery of single new equivalence class, reducing the impact of a counterexample to its minimum.
Furthermore does the adaptive distinguishing tree yield its worst performance, when it degenerates to a linear list and each sifting operation requires the traversal of the complete adaptive discrimination tree.

This allows to give the following bound on the different types of queries:

\begin{theorem}[Boundaries for the number of symbol, reset and equivalence queries of the base algorithm]\label{thm:baseanal}
Let $n$ denote the size of the target system, $k$ the size of the input alphabet and $m$ the size of the longest counterexample.
The base algorithm (cf. \Cref{sec:base}) requires at most

\begin{compactitem}

\item $\mathcal{O}(n)$ equivalence queries,
\item $\mathcal{O}(kn^2 + n \log_2 m)$ reset queries and
\item $\mathcal{O}(kn^2m + nm \log_2 m)$ symbol queries.

\end{compactitem}
\end{theorem}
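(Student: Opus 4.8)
The plan is to handle the three query types separately: the equivalence-query count comes straight from the termination argument, and the reset and symbol-query counts come from a careful bookkeeping of the two places where queries are actually issued, namely the \textsc{closeTransition} subroutine (together with the sift operations it triggers) and the decomposition of counterexamples.

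First, the equivalence-query bound is immediate: \Cref{thm:termination} already shows the base algorithm performs at most $n-1$ equivalence queries. For the rest I would fix the running facts that will be used repeatedly: by \Cref{lem:canonicity} the hypothesis is canonical throughout and grows by exactly one state per refinement step, so there are at most $n-1$ refinement steps; and the set of access sequences is prefix-closed and of size at most $n$, hence every access sequence has length $\mathcal{O}(n)$, which I will treat as $\mathcal{O}(m)$ since a counterexample must be long enough to force the discovery of every state.

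Second, I would pin down the structure and number of the sift operations. Each refinement step splits one ADT leaf by inserting a single reset node followed by a chain of at most $|v| = \mathcal{O}(m)$ symbol nodes, so after all refinements the ADT has $\mathcal{O}(n)$ reset nodes and $\mathcal{O}(nm)$ symbol nodes, and in the worst case it degenerates into a path so that a sift traverses essentially the whole tree. The number of \textsc{closeTransition} calls is $\mathcal{O}(k)$ from initialization, plus per refinement step the $k$ outgoing transitions of the new state and the incoming transitions of the split state (at most $\mathcal{O}(kn)$), plus $k$ further transitions for each new equivalence class discovered during sifting ($\mathcal{O}(kn)$ in total), summing to $\mathcal{O}(kn^2)$ sifts. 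Counterexample decomposition is done by a binary search on the counterexample, costing $\mathcal{O}(\log m)$ membership queries of length $\mathcal{O}(m)$ each, hence $\mathcal{O}(n\log m)$ resets and $\mathcal{O}(nm\log m)$ symbol queries over all $\mathcal{O}(n)$ counterexamples (the two membership queries inside \textsc{splitLeaf} add only $\mathcal{O}(n)$ resets and $\mathcal{O}(nm)$ symbols and are absorbed); this is where the $\log_2 m$ terms originate.

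Third, I would cost one sift — one explicit reset, then along the traversed root-to-leaf path a reset per reset node and a symbol query per symbol node, each reset additionally forcing a replay of the sifted word (an access sequence plus one symbol, length $\mathcal{O}(m)$) — and combine it with the $\mathcal{O}(kn^2)$ sift count and the decomposition contribution to obtain $\mathcal{O}(kn^2 + n\log_2 m)$ resets and $\mathcal{O}(kn^2 m + nm\log_2 m)$ symbol queries. The step I expect to be the main obstacle is precisely this final combination: naively multiplying the worst-case ADT depth ($\mathcal{O}(n)$ reset nodes on a path) by the worst-case number of sifts over-counts badly, so the argument must amortize — charging each reset-node traversal to the specific transition being closed and observing that a state cannot at once collect $\Theta(kn)$ incoming transitions and sit at depth $\Theta(n)$ for all of them — so that the total reset cost contributed by sifting collapses back to $\mathcal{O}(kn^2)$ rather than $\mathcal{O}(kn^3)$. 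Making that amortization tight, and then attaching the $\mathcal{O}(m)$ replay/discriminator length to each counted reset to lift it to the symbol bound, is where the real work of the proof lies.
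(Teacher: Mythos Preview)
Your overall decomposition is sound and matches the paper's: equivalence queries via \Cref{thm:termination}, the $\log_2 m$ terms from Rivest--Schapire binary search, and the bulk of the work charged to \textsc{closeTransition} calls. The place where your argument breaks down is exactly the one you flag as ``the main obstacle,'' and your proposed resolution does not go through.

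Your amortization hinges on the claim that ``a state cannot at once collect $\Theta(kn)$ incoming transitions and sit at depth $\Theta(n)$.'' That claim is false in the worst case. Take the linear-list ADT (each refinement splits the currently deepest leaf) and let that deepest state be a sink that absorbs most transitions of the hypothesis; then in step $j$ the split leaf sits behind $j-1$ reset nodes \emph{and} has $\Theta(kj)$ incoming transitions. If every re-sift really paid the full root-to-leaf cost, the total would be $\sum_j \Theta(kj)\cdot\Theta(j)=\Theta(kn^3)$ resets, and no amount of per-transition charging saves you.

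The paper does not amortize at all; it uses a structural observation you are missing. When a leaf $f$ is split, the only change to the ADT is that $f$ is replaced by a reset node followed by the new discriminator subtree. Every transition that previously sifted to $f$ already has a verified path from the root to the old position of $f$; re-sifting such an ``old'' transition therefore needs to traverse \emph{only the freshly inserted subtree}, costing $\mathcal{O}(1)$ resets and $\mathcal{O}(m)$ symbols. Only the $k$ outgoing transitions of the brand-new state must sift from the root through the full ADT (at most $j-1$ reset nodes in step $j$). This two-case split gives $\sum_{j}(k\cdot j + k\cdot j\cdot 1)=\mathcal{O}(kn^2)$ resets and, attaching the $\mathcal{O}(m)$ length of ``access sequence $\cdot$ symbol $\cdot$ discriminator'' to each, $\mathcal{O}(kn^2 m)$ symbol queries---no amortization needed. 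You should replace your third paragraph with this case distinction.
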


\begin{proof}

\textit{Equivalence queries}:
In analogy to \Cref{thm:termination}, a maximum of $n-1$ equivalence queries can be posed, after which the learning algorithm has detected all distinct equivalence classes.
An additional equivalence query -- indicating equivalence -- is posed to detect the termination, which results in a total of $n$ equivalence queries.

For determining an upper bound for the amount of reset and symbol queries, it is reasonable to split the analysis into two parts:
The costs for analyzing a counterexample and the costs for refining the hypothesis.

\textit{Reset queries}:
The decomposition of a counterexample is done by performing a binary search on the counterexample and computing the output for the extracted discriminator, which results in a maximum of $2 + \log_2 m$ queries per counterexample.
This is done in every single of the possible $n-1$ possible refinement steps, which accumulates the impact of the counterexample decomposition to a total of $\mathcal{O}(n \log_2 m)$ reset queries.

The costs for refining the hypothesis can be analyzed for each refinement step independently:
In refinement step $j$ -- after the $j+1$\textsuperscript{st} state is added to the hypothesis -- the adaptive distinguishing tree has at most $j-1$ reset nodes.
All $k$ outgoing transitions ($N$) of the new ($j+1$\textsuperscript{st}) state may lead into one of the two states referenced in the lowest subtree, resulting in a sift operation that traverses the complete discrimination tree.
Additionally, all $k \cdot j$ existing transitions ($O$) may require a sift operation through the lowest subtree, as they could have led into the split state.
The amount of reset queries during the initialization ($I$) is bounded by the number of input symbols.

Across the possible $n-1$ refinement steps, the number of reset queries then computes as follows:

\begin{align*}
\#rq_{ref} &\leq \underbrace{k}_{I} + \sum_{i = 1}^{n-1} \underbrace{k \cdot i}_{N} + \underbrace{k \cdot i}_{O}\\
&= k + 2k \cdot \sum_{i = 1}^{n-1} i\\
&= k + 2k \cdot \dfrac{n(n-1)}{2}\\
&= kn^2 - kn + k\\
&\in \mathcal{O}(kn^2)
\end{align*}

\textit{Symbol queries}:
While the number of queries posed during the counterexample decomposition is limited by $\log_2 m$, the queries themselves consist partly of the stored access sequences of hypothesis states (i.e. representatives of equivalence classes) and subsequences of the actual counterexample.
However, no stored access sequence can be longer than the input sequence leading into the respective state.
As a result, the maximum query length is bounded by $m$.
Therefore the amount of symbol queries caused by the counterexample decomposition computes as:

\begin{align*}
\#sq_{cd} &\leq \sum_{i = 1}^{n-1} m \log_2 m\\
&= m \log_2 m + \sum_{i = 1}^{n-1} 1\\
&= m \log_2 m \cdot (n-1)\\
&= nm \log_2 m - m \log_2 m\\
&\in \mathcal{O}(nm \log_2 m)
\end{align*}

Similar to the case of reset queries, in a worst-case scenario, the outgoing transitions of the most recently added hypothesis state ($N$) sift through the complete adaptive discrimination tree and every other transition ($O$) needs to be updated using the latest obtained discriminator.
During the sift operation, the queried input sequences have the form \enquote{access sequence $\cdot$ transition symbol $\cdot$ discriminator}.
The length of these sequences is bounded by $2m$, because every access sequence is a prefix of one counterexample and every discriminator is a suffix of one counterexample.
The initialization costs ($I$) are bounded by $k$, because for every transition only its output is determined.
The number of symbol queries then computes as follows:

\begin{align*}
\#sq_{ref} &\leq \underbrace{k}_{I} + \sum_{i = 1}^{n-1} \underbrace{2m \cdot i \cdot k}_{N} + \underbrace{2m \cdot i \cdot k}_{O}\\
&= k + 4mk \sum_{i = 1}^{n-1} i\\
&= k + 4mk \dfrac{n(n-1)}{2}\\
&= k + 2mk \cdot (n^2 -n)\\
&= 2kn^2m - 2knm + k\\
&\in \mathcal{O}(kn^2m) \qedhere
\end{align*}

\end{proof}

The bounds of the base algorithm coincide with the boundaries of the classic discrimination tree algorithm, which can be attributed to the worst-case evaluation.
However, the base algorithm does not utilize any replacement heuristics whose impact is analyzed in the subsequent sections.
As the costs of the counterexample decomposition will remain the same among all heuristics, their explicit mentioning will be omitted in the following proofs.

\subsection{Subtree Extensions}

For analyzing the impact of the presented heuristics, one faces the inherent problem of them: their potential non-applicability.
An accurate analysis is therefore highly problem dependent, as not only the target system is responsible for the structure of intermediate hypotheses, but also which counterexamples -- as they provide discriminators -- are encountered.
This makes a general analysis hard, if not impossible.

It is however possible to sketch the impact of the presented heuristics, by assuming certain conditions.
For example, if it is assumed, that the subtree extension is not applicable in any refinement step, it is easy to see, that the resulting query complexity coincides with the complexity of the base algorithm.
For the subtree extension heuristic, it is interesting to see its impact, if applicable in every refinement step.
This scenario is formulated in \Cref{thm:extanal}:

\begin{theorem}[Boundaries for the number of symbol, reset and equivalence queries with a successful subtree extension heuristic]\label{thm:extanal}
Let $n$ denote the size of the target system, $k$ the size of the input alphabet and $m$ the size of the longest counterexample.
Under the assumption of a successful application in each refinement step, the subtree extension heuristic (cf. \Cref{sec:subext}) requires at most

\begin{compactitem}

\item $\mathcal{O}(n)$ equivalence queries,
\item $\mathcal{O}(kn^2 + n \log_2 m)$ reset queries and
\item $\mathcal{O}(kn^2m + nm \log_2 m)$ symbol queries.

\end{compactitem}
\end{theorem}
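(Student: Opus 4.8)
The plan is to show that a successful subtree extension can never make the worst case worse than that of the base algorithm, so that the bounds of \Cref{thm:baseanal} transfer verbatim. The key observation is that the heuristic changes nothing about the control flow of the learner: it uses the same counterexample-handling loop (\Cref{alg:adtref}), performs the same decomposition, adds the same new hypothesis state, and re-closes exactly the same set of transitions — the outgoing transitions of the new state and the incoming transitions of the split state — in every refinement step. The only difference is the shape of the adaptive discrimination tree: instead of inserting a reset node followed by the full symbol-node chain for the discriminator $v$, the heuristic attaches the appropriate suffix of $v$ to the path already ending at the split leaf $f$. As noted in \Cref{sec:subext}, this replacement needs no validation, and the labels of the freshly appended symbol nodes come from exactly the membership queries the base algorithm would have posed against $v$ for the split state and the new state. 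Hence, in every phase, the heuristic poses no more equivalence, reset, or symbol queries than the base algorithm does in the corresponding phase.

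From here I would treat the three query types in turn. Equivalence queries are immediate: the refinement loop is still driven purely by counterexamples exactly as in \Cref{sec:base}, so \Cref{thm:termination} still caps their number by $n$, giving $\mathcal{O}(n)$. For reset and symbol queries I would argue that, under the assumption of a successful application in each step, the resulting ADT is in fact \emph{reset-free}: by induction on the refinement steps, whenever the heuristic extends leaf $f$ — whose trace input we may call $i_f$, with $v = i_f\,w$ — the two newly created leaves have trace input exactly $v$ (of length at most $m$), and the number of reset nodes on the path above them equals the number above $f$, which is $0$ after the (specially handled, just as in the base algorithm) first split and stays $0$ thereafter. Any new equivalence class discovered while closing transitions (\Cref{alg:adtclose}) only attaches a fresh final node under an existing symbol node, so it does not introduce a reset node either.

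With a reset-free ADT, every \textsc{sift} during \textsc{closeTransition} encounters no reset nodes, so each \textsc{closeTransition} call costs exactly one reset and at most $2m+1$ symbol queries (access sequence $\cdot$ transition symbol $\cdot$ the at-most-$m$-long residual discriminator chain) — for any transition, no worse than the per-transition figure used in the proof of \Cref{thm:baseanal}. Since the number of \textsc{closeTransition} calls per refinement step is unchanged ($k$ outgoing transitions of the new state plus $\mathcal{O}(ki)$ incoming transitions of the split state in step $i$), summing exactly as in \Cref{thm:baseanal}, together with the unchanged counterexample-decomposition cost of $\mathcal{O}(n\log_2 m)$ resets and $\mathcal{O}(nm\log_2 m)$ symbol queries, yields $\mathcal{O}(kn^2 + n\log_2 m)$ resets and $\mathcal{O}(kn^2 m + nm\log_2 m)$ symbol queries.

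The one place that needs genuine care — and which I expect to be the main obstacle — is ruling out a hidden blow-up in the length of the sifting work: attaching suffix after suffix to the \emph{same} path could a priori let a leaf's trace grow past the $\mathcal{O}(m)$ ``access sequence $\cdot$ symbol $\cdot$ discriminator'' envelope that \Cref{thm:baseanal} relies on. The inductive claim above is precisely what rules this out — each leaf's trace input is a single discriminator $v$, hence of length at most $m$, no matter how many extensions preceded it — but it has to be stated and verified explicitly, ideally alongside the canonicity invariant of \Cref{lem:canonicity}, since this is the step where the heuristically built ADT genuinely diverges in structure from the base algorithm's. Everything else is a re-run of the base-algorithm arithmetic, and in fact the bounds are not merely preserved but attained with strictly smaller constants (no reset nodes, no replayed prefixes), which matches the intuition that a successful subtree extension is a pure win.
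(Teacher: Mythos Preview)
Your proposal is correct and follows essentially the same approach as the paper: both arguments hinge on the observation that under the assumption the ADT stays reset-free, so every \textsc{closeTransition} costs one reset and at most $\mathcal{O}(m)$ symbols, after which the same per-step summation as in \Cref{thm:baseanal} goes through. Your explicit inductive justification that each leaf's trace input equals a single discriminator $v$ (hence has length $\leq m$) is more careful than the paper, which simply asserts the $2m$ bound without spelling out why repeated suffix-appends cannot accumulate.
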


\begin{proof}

\textit{Equivalence queries}:
See \Cref{thm:baseanal}.

\textit{Reset queries}:
Since the heuristic only reorganizes the current adaptive discrimination tree, no additional costs are introduced by the heuristic.
In the $j$-th refinement step, solely the $k$ outgoing transitions of the new state and the (up to) $jk$ incoming transitions of the node to split need to be updated.
The adaptive discrimination tree does not contain reset nodes at any time during the execution.
The number of reset queries therefore computes as follows:

\begin{align*}
\#rq_{ref} &\leq \underbrace{k}_{I} + \sum_{i = 1}^{n-1} \underbrace{k}_{N} + \underbrace{k \cdot i}_{O}\\
&= k + k(n-1) + k\sum_{i = 1}^{n-1} i\\
&= k + k(n-1) + k \dfrac{n(n-1)}{2}\\
&= \dfrac{kn^2}{2} + \dfrac{kn}{2}\\
&\in \mathcal{O}(kn^2)
\end{align*}

\textit{Symbol queries}:
Similar to the base algorithm, an upper bound for the length of each sifted word is given by $2m$.
However, the outgoing transitions of the new state only require a single sequence of input symbols as they do not encounter any reset nodes.

\begin{align*}
\#sq_{ref} &\leq \underbrace{k}_{I} + \sum_{i = 1}^{n-1} \underbrace{2m \cdot k}_{N} + \underbrace{2m \cdot i \cdot k}_{O}\\
&= k + 2mk(n-1) + 2mk \sum_{i = 1}^{n-1} i\\
&= k + 2mk(n-1) + 2mk \dfrac{n(n-1)}{2}\\
&= kn^2m + knm - 2km + k\\
&\in \mathcal{O}(kn^2m) \qedhere
\end{align*}

\end{proof}

While the exact bounds show a slight improvement, the asymptotic bounds remain equal.
Although, the possible improvements should be taken with a grain of salt:
The assumptions essentially enforce the existence of a single (and potentially long) discriminator, that is able to distinguish every state of the target system.
Additionally, each counterexample has to decompose in a way that allows to gradually construct the adaptive discrimination tree.
Encountering this scenario in a real system seems unlikely.

\subsection{Subtree Replacements}

Similar to the previous section, the applicability of subtree replacements is highly problem-dependent and therefore hard to analyze for the general case.
However, contrary to the previous heuristic, subtree replacements introduce costs for verifying replacements and updating the hypothesis.
To sketch the impact of these costs, the following theorem analyzes the query complexity in case of a successful replacement of the complete adaptive discrimination tree in every refinement step, which corresponds to the behavior of the leveled (cf. \Cref{sec:levelrepl}) and exhaustive (cf. \Cref{sec:exhausrepl}) subtree replacement heuristic.

\begin{theorem}[Boundaries for the number of symbol, reset and equivalence queries with a successful subtree replacement heuristic]\label{thm:levelanal}
Let $n$ denote the size of the target system, $k$ the size of the input alphabet and $m$ the size of the longest counterexample.
Under the assumption of a successful replacement of the complete adaptive discrimination tree with a single adaptive distinguishing sequence in each refinement step, the learning process requires at most

\begin{compactitem}

\item $\mathcal{O}(n)$ equivalence queries,
\item $\mathcal{O}(kn^2 + n \log_2 m)$ reset queries and
\item $\mathcal{O}((n^2 + m) kn^2 + nm \log_2 m)$ symbol queries.

\end{compactitem}
\end{theorem}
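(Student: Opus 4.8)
The plan is to mirror the worst-case accounting of \Cref{thm:baseanal}, tracking the places where the replacement machinery introduces overhead on top of the base algorithm. The equivalence-query bound is immediate: the termination argument of \Cref{thm:termination} is unaffected by discriminator replacements (the correctness/termination proofs of \Cref{cha:aal} only assume a \emph{verified} ADT), so at most $n-1$ refining equivalence queries plus one confirming query are posed, giving $\mathcal{O}(n)$.

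For the reset queries I would split the cost into counterexample decomposition, hypothesis refinement, replacement validation, and re-sifting. Decomposition contributes $\mathcal{O}(n\log_2 m)$ exactly as in \Cref{thm:baseanal}. Under the stated assumption, after every refinement step the ADT is a single reset-free adaptive distinguishing sequence, so — as in the subtree-extension analysis of \Cref{thm:extanal} — every sift during transition closing needs exactly one reset; since the whole ADT is replaced, all $\mathcal{O}(ki)$ transitions of the hypothesis must be re-sifted in the $i$-th step, and validating a replacement with at most $n$ leaves costs one reset per leaf. Summing $\sum_{i=1}^{n-1}\big(\mathcal{O}(n) + \mathcal{O}(ki) + \mathcal{O}(k)\big)$ yields $\mathcal{O}(kn^2)$, so the total is $\mathcal{O}(kn^2 + n\log_2 m)$.

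The symbol-query bound is where the extra $n^2$ factor appears, and the key ingredient is Rystsov's quadratic bound (\cite{rystsov76}, cited in \Cref{sec:ads}): the adaptive distinguishing sequence used as a replacement has depth $\mathcal{O}(n^2)$. Every query issued during validation or re-sifting then has the shape \enquote{access sequence $\cdot$ (transition symbol $\cdot$) discriminator}, whose length is bounded by $m + \mathcal{O}(n^2)$, since access sequences are prefixes of counterexamples (length $\le m$) and the discriminator is a path through the replacement (length $\mathcal{O}(n^2)$). In the $i$-th refinement step, validation touches $\le n$ leaves and re-sifting touches $\mathcal{O}(ki)$ transitions, so the step costs $\mathcal{O}\big((n + ki)(m + n^2)\big)$ symbol queries; summing over the $n-1$ steps gives $\mathcal{O}\big((kn^2)(m+n^2)\big) = \mathcal{O}\big((n^2+m)kn^2\big)$. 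Adding the decomposition cost $\mathcal{O}(nm\log_2 m)$ from \Cref{thm:baseanal} produces the claimed $\mathcal{O}\big((n^2+m)kn^2 + nm\log_2 m\big)$.

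The main obstacle is justifying the length bound on the replacement and pinning down precisely what \enquote{a successful replacement of the complete adaptive discrimination tree ... in each refinement step} buys us. Concretely I need: (i) that the computed ADS is one of the quadratically bounded ones — this follows because the candidate distinguishes all $n$ hypothesis states and Rystsov's bound is on the minimal-length such sequence, which the relevant computation profiles of \Cref{cha:ads} respect; (ii) that a \enquote{successful} validation really produces a reset-free tree, i.e. \textsc{resolveAmbiguities} is not invoked (otherwise the replacement could \enquote{degenerate} and re-acquire reset nodes, as noted in \Cref{cha:replacements}); and (iii) that the bookkeeping side effects — counterexamples queued during validation, counterexample reactivation, and \textsc{ensureADTConsistency} — do not blow up the count: each of these is evaluated against the hypothesis rather than the SUL (costing no queries) or discovers a new equivalence class (hence occurs at most $n-1$ times, already absorbed into the refinement-step sum). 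Once these points are dispatched, the summations above are routine.
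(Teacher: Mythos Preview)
Your proposal is correct and follows essentially the same route as the paper: split the per-step cost into validation, re-sifting, and transition closing; invoke Rystsov's quadratic bound on ADS depth for the discriminator length; and sum over the $n-1$ refinement steps. The only cosmetic difference is that the paper bounds access-sequence length by the current hypothesis size $j$ (via the spanning-tree structure) rather than by $m$, which gives slightly tighter explicit constants in the intermediate sums but the same asymptotic $\mathcal{O}((n^2+m)kn^2)$; your additional discussion of assumptions (i)--(iii) is more explicit than the paper's own treatment, which simply takes \enquote{successful replacement} to subsume them.
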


\begin{proof}

\textit{Equivalence queries}:
See \Cref{thm:baseanal}.

\textit{Reset queries}:
The heuristic does not change the initialization step and the decomposition of the counterexample, so the costs remain identical to the ones described in \Cref{thm:baseanal}.
However, the refinement step introduces a complete and successful replacement of the discrimination tree.
This means prior to the $j$-th refinement step, a verification ($V$) of $j$ traces and a re-sifting of (up to) $jk$ transitions is required.
After splitting the state of the hypothesis the adaptive discrimination tree has at most one reset node and only the $k$ outgoing transitions ($N$) of the most recently added hypothesis state may be affected by this reset node, while all other transitions ($O$) only need to sift through the new, lowest subtree.
The number of reset queries then computes as follows:

\begin{align*}
\#rq_{ref} &\leq \underbrace{k}_{I} + \sum_{i = 1}^{n-1} \underbrace{i + i \cdot k}_{V} + \underbrace{2k}_{N} + \underbrace{k \cdot i}_{O}\\
&= k + 2k(n-1) + \sum_{i = 1}^{n-1} i + 2k \cdot \sum_{i = 1}^{n-1} i\\
&= k + 2k(n-1) + \dfrac{n(n-1)}{2} + 2k \cdot \dfrac{n(n-1)}{2}\\
&= kn^2 + kn - k +\dfrac{n^2 - n}{2}\\
&\in \mathcal{O}(kn^2)
\end{align*}

\textit{Symbol queries}:
The initialization ($I$) and decomposition costs remain identical to the base scenario.
The depth of an adaptive distinguishing sequence for an automaton of size $n$ is bounded by $(n^2 - n)/2$ \cite{rystsov76}.
Hence an upper bound for the number of symbol queries during the verification ($V$) of a trace in the $j$-th refinement step is given by $j + j^2$, since the length of an access sequence is bounded by $j$.
A similar bound holds for the costs of updating ($U$) the (up to) $jk$ transitions after the adaptive discrimination tree has been replaced.
After decomposing the counterexample, the succeeding update of the new transitions ($N$) may cause at most a total of $k \cdot (j + j^2 + 2m)$ symbol queries, whereas the old transitions ($O$) have costs similar to the previous scenarios.
In total, the number of symbol queries computes as follows:

\begin{align*}
\#sq_{ref} &\leq \underbrace{k}_{I} + \sum_{i = 1}^{n-1} \underbrace{(i + i^2) \cdot i}_{V} + \underbrace{(i + i^2) \cdot i \cdot k}_{U} + \underbrace{(i + i^2 + 2m) \cdot k}_{N} + \underbrace{2m \cdot i \cdot k}_{O}\\
&= k + \sum_{i = 1}^{n-1} i^2 + i^3 + i^2k + i^3k + ik + i^2k + 2mk + 2mik\\
&= k + (k+1) \sum_{i = 1}^{n-1} i^3 + (2k + 1) \sum_{i = 1}^{n-1} i^2 + (k + 2mk)\sum_{i = 1}^{n-1} i + (n-1)\cdot 2mk\\
&= k + (k+1) \frac{(n-1)^4 + 2(n-1)^3 + (n-1)^2}{4} + (2k+1) \dfrac{2(n-1)^3 + 3(n-1)^2 + n - 1}{6}\\
& \quad + (k + 2mk) \cdot \dfrac{n(n-1)}{2} + (n-1)\cdot 2mk\\
&\in \mathcal{O}(kn^4 + mkn^2)
\end{align*}%

For the transformation of the sum of the p-th power of the first $n-1$ integers in this and the following proofs, see Faulhaber's formula \cite{citeulike:2879324}.
\end{proof}

The bounds show, that the additional costs for validation outweigh the potential benefits gained by them.
In case of symbol queries, this even affects the asymptotic bound.

\subsection{Immediate Replacements}

Similar to previous heuristics, the applicability of the immediate replacement heuristic highly depends on the encountered situation.
To sketch the potential impact, the following theorem gives a bound on the query complexity in case of a repeatedly successful application of the heuristic.

\begin{theorem}[Boundaries for the number of symbol, reset and equivalence queries with a successful immediate replacement heuristic]\label{thm:immediateanal}
Let $n$ denote the size of the target size, $k$ the size of the input alphabet and $m$ the size of the longest counterexample.
Under the assumption of a successful application in each refinement step, the immediate replacement heuristic for two nodes (cf. \Cref{sec:immediateRepl}) requires at most

\begin{compactitem}

\item $\mathcal{O}(n)$ equivalence queries,
\item $\mathcal{O}(kn^2 + n \log_2 m)$ reset queries and
\item $\mathcal{O}((n^2 + m)kn^2 + nm \log_2 m)$ reset queries and

\end{compactitem}
\end{theorem}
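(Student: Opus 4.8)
The plan is to mirror the argument of \Cref{thm:levelanal}, since the immediate replacement heuristic incurs the same two kinds of overhead relative to the base algorithm — validating a finalized discriminator and re-sifting the transitions that led into the refined states — only that here everything is restricted to the \emph{two} nodes ($f$ and $f'$) that are actually split, rather than the whole adaptive discrimination tree. As in the earlier proofs I would split the count into the cost of decomposing a counterexample (identical to \Cref{thm:baseanal}, hence $\mathcal{O}(n\log_2 m)$ resets and $\mathcal{O}(nm\log_2 m)$ symbols, and I would only mention it in passing) and the cost of refining the hypothesis. I would also note at the outset that the first refinement step is handled exactly as in the base algorithm (the heuristic needs a previous trace), contributing only $\mathcal{O}(1)$ reset nodes and $\mathcal{O}(k(n^2+m))$ symbols, which is absorbed into the bounds.

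\textbf{Equivalence queries.} Identical to \Cref{thm:baseanal}: at most $n-1$ refining equivalence queries plus one confirming query, hence $\mathcal{O}(n)$.

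\textbf{Reset and symbol queries.} For the refinement step that adds the $(j{+}1)$-st state, \textsc{computeFinalDiscriminator} re-applies the parent trace to the two refined nodes, closing at most $\mathcal{O}(jk)$ previously-undefined or stale transitions via \textsc{closeTransition}; \textsc{validate} then checks a finalized discriminator whose leaves reference only the two refined states, i.e.\ $\mathcal{O}(1)$ traces; and \textsc{resift}/\textsc{closeTransitions} re-sift the $\mathcal{O}(jk)$ transitions that led into the split state and close the $k$ outgoing transitions of the new state. Since a successful extension removes the fresh reset node, the adaptive discrimination tree carries only $\mathcal{O}(1)$ reset nodes throughout, so each sift costs $\mathcal{O}(1)$ resets; summing $\mathcal{O}(jk)$ over $j=1,\dots,n-1$ gives $\mathcal{O}(kn^2)$, and with the decomposition cost this yields $\mathcal{O}(kn^2+n\log_2 m)$ resets. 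For the symbol count the one change from the base algorithm is the length of a sifted word: a finalized discriminator is an adaptive distinguishing sequence, whose depth is bounded by $(n^2-n)/2$ by Rystsov's bound \cite{rystsov76}, and together with an access sequence of length $\mathcal{O}(m)$ this bounds every sifted word by $\mathcal{O}(n^2+m)$. Charging $\mathcal{O}(n^2+m)$ symbols to each of the $\mathcal{O}(kn^2)$ transition sifts/closings and re-sifts across the $n-1$ steps, adding the bounded validation cost and the $\mathcal{O}(nm\log_2 m)$ decomposition cost, and carrying out the Faulhaber summation \cite{citeulike:2879324} as in \Cref{thm:levelanal}, gives $\mathcal{O}((n^2+m)kn^2+nm\log_2 m)$ symbol queries.

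\textbf{Main obstacle.} The delicate part is the reset-query bookkeeping in the refinement step: one must argue that under \emph{repeated successful} application the adaptive discrimination tree really does stay (essentially) reset-free — in particular that the \textsc{resolveAmbiguities} calls inside \textsc{validate} do not reintroduce reset nodes on the relevant paths, which is where the effective-reset-count guard (a degenerate replacement is discarded unless it strictly improves) has to be invoked to justify the ``successful'' case — and that each transition-closing is charged exactly once. A secondary subtlety is the \textsc{ModificationException} restart loop in \textsc{computeFinalDiscriminator}: it needs one sentence observing that it triggers at most once per newly discovered equivalence class, hence at most $\mathcal{O}(n)$ times overall, so the amortized per-step cost is not inflated.
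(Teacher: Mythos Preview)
Your overall decomposition (equivalence queries as in \Cref{thm:baseanal}; refinement cost split into closing, validation, and update/re-sift; the ADT staying essentially reset-free under repeated success) matches the paper and leads to the same asymptotic bounds. There is, however, one concrete misattribution in your symbol-query argument.

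You bound the depth of the adaptive discrimination tree by invoking Rystsov's bound \cite{rystsov76} on adaptive distinguishing sequences. That bound is existential: it says a minimal $n$-state machine \emph{admits} an ADS of depth at most $n(n{-}1)/2$; it does not bound the depth of an \emph{arbitrary} ADS, and the tree produced by repeated immediate replacement is a specific ADS, not a minimum-depth one. The paper instead exploits that the immediate replacement heuristic only ever separates \emph{two} states per step. By Moore's bound \cite{Moore56}, a separating word for two states in an automaton of size $l{+}1$ has length at most $l$, so the extension added in refinement step $l$ contributes at most $l$ symbol nodes. Summing over $l=1,\dots,j$ gives an ADT depth of at most $\sum_{l=1}^{j} l \le j^2$ after the $j$-th step. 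This is what yields the $\mathcal{O}(n^2)$ depth term you need, and it is both tighter in spirit and actually justified. The paper then combines this $j^2$ depth with the $2m$ cost of sifting through the temporary discriminator (not with an $\mathcal{O}(m)$ access sequence --- access sequences are bounded by $j$ here, since they are spanning-tree paths) to obtain the per-transition closing cost $j+j^2+2m$, and carries out the Faulhaber summation exactly as you anticipate.

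Your remarks on the \textsc{ModificationException} restart and the effective-reset-count guard are reasonable side observations, but the paper's proof does not need them explicitly under the blanket ``successful application'' assumption.
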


\begin{proof}

\textit{Equivalence queries}:
See \Cref{thm:baseanal}.

\textit{Reset queries}:
Each defensive computation may require to close all transitions ($C$) using the temporary discriminator.
Since all immediate replacements are assumed to be successful, the adaptive discrimination tree with the temporary discriminator only contains one reset node.
The replacement only contains two traces to validate ($V$).
After the successful replacement, all transitions may need to be updated ($U$), because they were closed using the old (now obsolete) temporary discriminator.

\begin{align*}
\#rq_{ref} &\leq \underbrace{k}_{I} + \sum_{i = 1}^{n-1} \underbrace{2k + i \cdot k}_{C} + \underbrace{2}_{V} + \underbrace{(i+1) \cdot k}_{U}\\
&= k + 2(n-1) + 3k(n-1) + 2k\sum_{i = 1}^{n-1} i\\
&= k + 2(n-1) + 3k(n-1) + 2k \dfrac{n(n-1)}{2}\\
&= kn^2 + 2kn -2k + 2(n-1)\\
&\in \mathcal{O}(kn^2)
\end{align*}

\textit{Symbol Queries}:
Closing ($C$) the transitions of the new state, requires a sifting operation through the existing adaptive discrimination tree and the temporary discriminator.
After the $j$-th refinement step finishes, the ADT has a maximum depth of $\sum_{i=1}^{j} j \leq j^2$:
In every previous refinement step $l$ ($l \leq j$) the immediate replacement extends the ADT by a sequence of at most $l$ symbols, because the separating word between two states in an automaton of size $l+1$ is at most $l$ \cite{Moore56}.
During the $j$-th refinement step, the complete costs for closing a transition of the newly added state therefore consist of $j + j^2$ (\enquote{old} ADT) + $2m$ (temporary discriminator).
The existing transitions only sift through the temporary discriminator.
The length of the intermediate replacement in the $j$-th refinement step is bounded by $j$, with the same reasoning about the length of a separating word.
Updating ($U$) potentially all transitions requires a sift operation through the updated ADT of maximum depth $(j+1)^2$.
Each access sequence sequence is bounded by the size of the current hypothesis.

\begin{align*}
\#sq_{ref} &\leq \underbrace{k}_{I} + \sum_{i = 1}^{n-1} \underbrace{k \cdot (i + i^2) + 2m \cdot (i+1) \cdot k}_{C} + \underbrace{2i}_{V} + \underbrace{(i + (i+1)^2) \cdot (i+1) \cdot k}_{U}\\
&= k + \sum_{i = 1}^{n-1} ki + ki^2 + 2mik + 2mk + 2i + i^3k + 3i^2k + ik + i^2k + 3ik +k\\
&= k + k\sum_{i = 1}^{n-1} i^3 + 5k\sum_{i = 1}^{n-1} i^2 + (5k + 2mk + 2) \sum_{i = 1}^{n-1} i + (n-1)(2mk + k)\\
&= k + k \dfrac{(n-1)^4 + 2(n-1)^3 + (n-1)^2}{4} + 5k \dfrac{2(n-1)^3 + 3(n-1)^2 + (n-1)}{6}\\
&\quad + (5k+ 2mk + 2) \frac{n(n-1)}{2} + (n-1)(2mk + k)\\
&\in \mathcal{O}(kn^4 + mkn^2) \qedhere
\end{align*}

\end{proof}

Even though, the immediate replacement heuristic proposes replacements on a smaller scale, the (asymptotic) worst-case performance coincides with the one of the subtree replacement heuristic.

\subsection{Summary}

A summary of the obtained worst-case bounds is given in \Cref{tab:compl}:

\begin{table}[h]
	\caption{Asymptotic worst-case query complexity of certain heuristics}
	\label{tab:compl}
	\centering
	\begin{tabular}{lll}
		\textbf{Heuristic} & \textbf{Reset Complexity} & \textbf{Symbol Complexity} \\  
		\midrule 
		Base Algorithm & $\mathcal{O}(kn^2 + n \log_2 m)$ & $\mathcal{O}(kn^2m + nm \log_2 m)$\\ 
		Subtree Extensions & $\mathcal{O}(kn^2 + n \log_2 m)$ & $\mathcal{O}(kn^2m + nm \log_2 m)$\\ 
		Leveled Subtree Replacements & $\mathcal{O}(kn^2 + n \log_2 m)$ & $\mathcal{O}((n^2 + m)kn^2 + nm \log_2 m)$\\ 
		Immediate Replacements & $\mathcal{O}(kn^2 + n \log_2 m)$ & $\mathcal{O}((n^2 + m)kn^2 + nm \log_2 m)$\\ 
	\end{tabular}
\end{table}

The theoretical analysis has shown, that the developed approaches in general do not improve the worst-case asymptotic query performance compared to the base algorithm.
In fact, for the number of reset queries, the explicit bounds are considerably higher when applying the presented replacement heuristics.
For the number of symbol queries, this even affects the asymptotic bound.
Discarding validated replacements or using (exponentially bound) partial adaptive distinguishing sequences may worsen this situation even further.

These results may discourage the use of the presented heuristic at first.
However, as stated in the previous section, a comprehensive analysis is hard due to the inherent problem dependent behavior of the heuristics.
Hence their actual performance, when applied to certain systems, may differ from the theoretic results.

\section{Empirical Analysis}
\label{sec:empanal}

To give an impression of the practical impact of the presented heuristics, this section presents the empirical evaluation.
A series of benchmarks for artificial and realistic systems were run to inspect characteristics of certain algorithms and compare their overall performance.
The following algorithms were evaluated:

\vspace{1ex}
\begin{compactitem}

\item \textbf{ADT}: the presented ADTLearner and its heuristics,
\item \textbf{KV}: the original discrimination tree algorithm by Kearns \& Vazirani (Mealy version) \cite{Kearns:1994:ICL:200548},
\item \textbf{DT}: the \enquote{discrimination tree} algorithm\footnote{Also known as \enquote{Observation Pack} algorithm.} by Howar \cite{howar}, 
\item \textbf{LStarM}: the $L^*$ algorithm by Angluin (Mealy version) \cite{DBLP:phd/de/Niese2003},
\item \textbf{TTT}: the TTT algorithm by Isberner et al. (Mealy version) \cite{ttt}.

\end{compactitem}
\vspace{1ex}

\noindent
Each benchmark has been executed on an Intel\textregistered~Core\textsuperscript{TM} i7-4790 system and was assigned $8$ gigabyte of memory ($24$ gigabyte for the ESM benchmark).

The subsequent sections introduce the different test setups and present an excerpt of the measured data.
For the full set of data, see \autoref{cha:appendix}.
Recall, that for the measurement of the reset and symbol performance, all algorithms used a query cache.
The ADTLearner (including its heuristics) used the integrated observation tree, while the membership oracles of the competing algorithms were wrapped in a tree cache.
Hence the collected number of reset and symbol queries represent the number of \emph{unique} reset and symbol queries.

\subsection{Synthetic Benchmarks}

For the synthetic benchmarks, each target system $\mathcal{T}$ was obtained by creating a random Mealy machine:
At first, for a given size, the set of states $S_{\mathcal{T}}$ was constructed.
Then, for all tuples $\langle s, i \rangle \in S_{\mathcal{T}} \times I_{\mathcal{T}}$, the successor (output) for each transition was determined locally by uniformly sampling an element from the set of states (outputs).

In each benchmark run, the dimensions of target system were chosen as follows: $\vert S \vert = 1000$, $\vert I \vert = 25$, $\vert O \vert = 10$.
A total of two test series, each containing 100 runs, were benchmarked.
The series differed in the way counterexamples were obtained:

\begin{itemize}

\item For the first series, the counterexamples were generated by finding separating words \cite{hoka71} between the current hypothesis and the true target system.
Therefore the counterexamples had near perfect length.

\item For the second series, each randomly generated target system was altered in the following way:
Let $s_0, ..., s_x, ..., s_{\vert S \vert - 1}$ denote the (ordered) sequence of states and $i_0, ..., i_y, ..., y_{\vert I \vert - 1}$ denote the (ordered) sequence of input symbols.
For each state $s_x$, the transition $\langle s_x, i_{x \mod \vert I \vert} \rangle$ was turned into a self-loop, i.e. $\delta(s_x, i_{x \mod \vert I \vert}) = s_x ~\forall 0 \leq x < \vert S \vert$.

Similar to the first series, counterexamples were initially determined by finding separating words between the current hypothesis and the altered target automaton.
Each obtained separating word $sep$ was then expanded to the counterexample 

$$\hat c = sep_{1:\vert sep \vert - 1} \cdot i_{y}^k \cdot sep_{\vert sep \vert}$$

\noindent
where $y = x \mod \vert I \vert$ for $s_x = \delta(sep_{1: \vert sep \vert - 1})$ and $k = 500 - \vert sep \vert$.
This means, after applying the first $\vert sep \vert - 1$ symbols of the separating word to the target system, the input symbol corresponding to the looped transition was applied until the counterexample reached a length of $499$.
Afterwards the last input symbol of the separating word was appended.
If the expanded counterexample $\hat c$ still posed a valid counterexample to the hypothesis (this is e.g. not the case if $i_y = sep_{\vert sep \vert}$), it was used for the hypothesis refinement step.
Otherwise, the original separating word $sep$ was passed to the refinement function.

\end{itemize}

\paragraph*{Separating Word}

\Cref{fig:bench_rand_heu} shows the reset and symbol query performance of the ADT-Learner for a selected set of heuristics.
Displayed are the averaged values with the standard deviation as error bars.

\begin{figure}[!t]
	\centering
	\subcaptionbox{Reset query performance}[\linewidth]{
		\includegraphics[width=\linewidth]{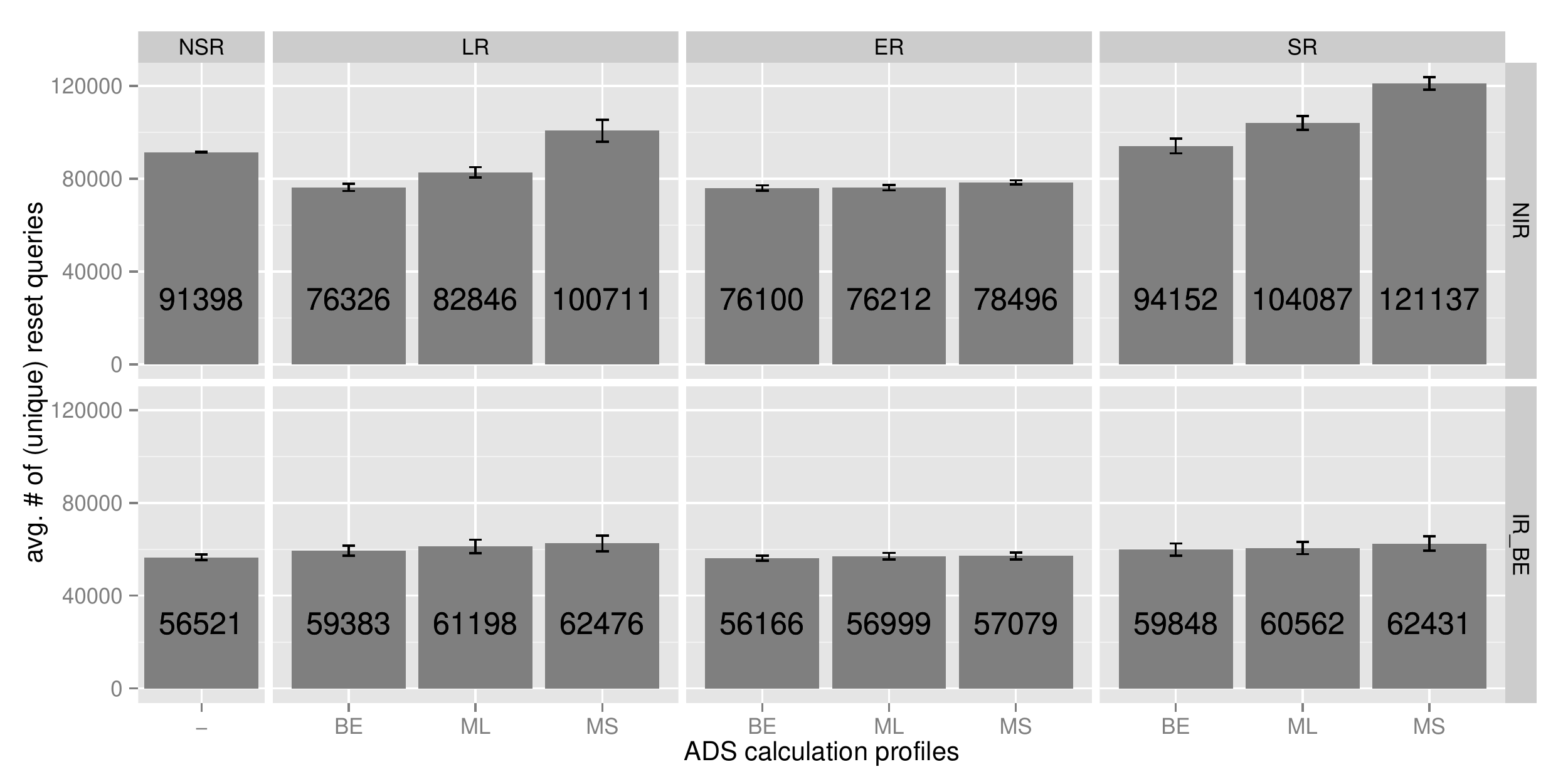}
	}
	\subcaptionbox{Symbol query performance}[\linewidth]{
		\includegraphics[width=\linewidth]{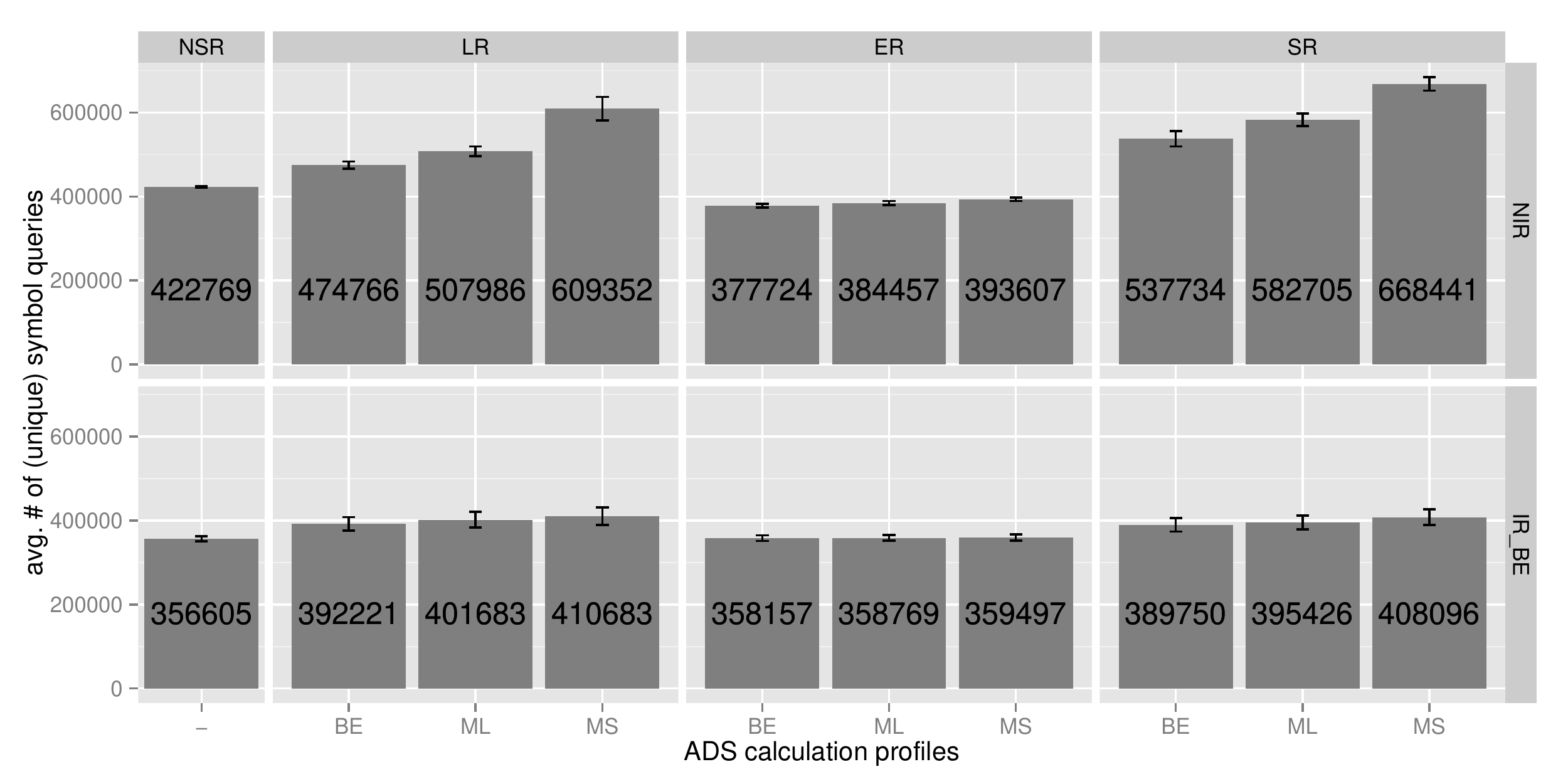}
	}
	\caption{Benchmark: Query performance of selected heuristics (random Mealy, separating word)}
	\label{fig:bench_rand_heu}
\end{figure}

The plots are organized as follows:
Each of the four vertical blocks represents a subtree replacement heuristic, with

\begin{compactitem}

\item \textbf{NSR} not subtree replacement heuristic,
\item \textbf{LR} leveled replacement heuristic,
\item \textbf{ER} exhaustive replacement heuristic and
\item \textbf{SR} single replacement heuristic.

\end{compactitem}

\noindent
Each block that covers a replacement heuristic, contains measurements for the presented ADS calculation profiles

\begin{compactitem}

\item \textbf{BE} best effort,
\item \textbf{ML} minimum length and
\item \textbf{MS} minimum size.

\end{compactitem}

\noindent
The two vertical blocks differentiate between the additional usage of the immediate replacement heuristic, with

\begin{compactitem}

\item \textbf{NIR} no immediate replacement heuristic and
\item \textbf{IR\_BE} immediate replacement heuristic using the best effort profile.

\end{compactitem}

\noindent
In all displayed configurations of the synthetic benchmarks, no subtree extension heuristic was used, since it posed no significant improvement.

Regarding reset complexity, one can see, that different heuristics have different impact on the total amount of (unique) reset queries:
Similar for all subtree replacement heuristics is, that especially the minimum size profile for computing adaptive distinguishing sequences increases the amount of executed reset queries.
The initial motivation behind computing a minimal (e.g. in size) ADS, was to improve the validation process, since fewer symbols have to be validated.
While the measurements show (cf. \autoref{cha:appendix}) that the minimal size profile indeed proposes fewer symbols to validate, the validation process encounters more errors and validated (and accepted) replacements have more reset nodes compared to the other profiles.
The condensed structure of minimal ADSs seems to work against the error-tolerating validation mechanism.

The exhaustive replacement heuristic seems to be least affected by this effect.
This may however be explained by the fact, that throughout the learning process this heuristic often only proposed a single replacement.

The single replacement heuristic shows the worst performance, even resulting in a higher amount of executed resets than the base algorithm.
The selection strategy of this heuristic seems to propose replacements that affect the most hypothesis states (cf. \autoref{cha:appendix}) among the other replacement heuristics.
Thus, accepting such a replacement results in high costs for updating the hypothesis afterwards.

For the sole employment of a subtree replacement heuristic, the \enquote{best effort} ADS profile and the leveled and exhaustive subtree replacement heuristic seem to yield the biggest benefit.
However, the immediate replacement heuristic allows even more improvements.
As shown, every configuration involving the immediate replacement heuristic outperforms the base algorithm and the best subtree replacement heuristics.
Combining the immediate replacement heuristic with the regular subtree replacement heuristics shows, that in this scenario the additional validation and update costs introduced by the subtree replacements generally worsen the overall performance.
Only the rather defensive exhaustive subtree replacement heuristic allows a slight improvement. 
When used in combination with the immediate replacement heuristic, the subtree replacement heuristics seem to propose fewer replacements, which may explain the reduced variability among the difference replacement heuristics and ADS calculation profiles.

Additional combinations of other ADS calculation profiles or the subtree extension heuristic show similar results to the combinations shown above.
As a result, they are not presented in detail.

Regarding symbol complexity, slightly different results can be observed:
While the leveled subtree replacements heuristic improves the reset complexity compared to the base algorithm, it worsens the symbol complexity.
An indication of this effect was already given by the higher theoretical worst-case query complexity (cf. \Cref{sec:theoanal}).
However, the exhaustive subtree replacement heuristic -- which often only proposed a single replacement -- shows, that a conservative utilization of subtree replacements may still improve the query performance.

Similar to the reset performance, the immediate replacement heuristic clearly poses an improvement to the query performance overall.
Again, combining subtree replacement heuristics with an immediate replacement heuristic reduces the amount of subtree replacements and therefore retains their variability.
Yet the subtree replacement heuristics seem to, again, decrease the performance when used in combination with the immediate replacement heuristic.

After all, the presented heuristic have to compete against other active learning algorithms.
A comparison of the performance of selected heuristics and other algorithms is shown in \Cref{fig:bench_rand_comp}:

\begin{figure}[ht]
	\centering
	\subcaptionbox{Reset query performance}[\linewidth]{
		\includegraphics[width=\linewidth]{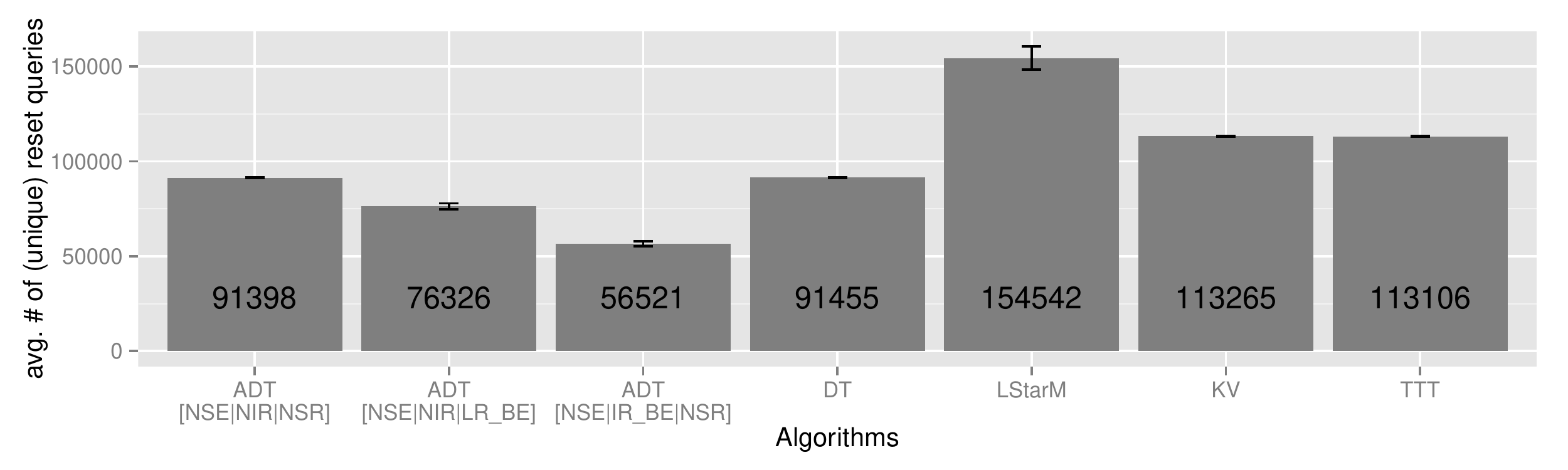}
	}
	\subcaptionbox{Symbol query performance}[\linewidth]{
		\includegraphics[width=\linewidth]{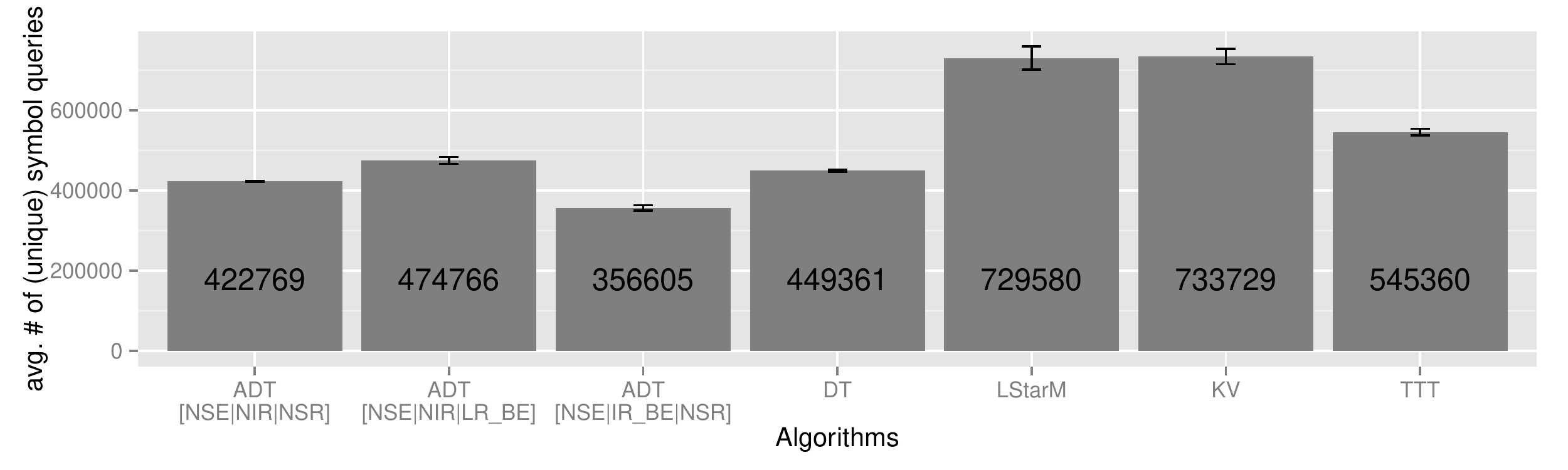}
	}
	\caption{Benchmark: Query performance of selected heuristics compared to other algorithms (random Mealy, separating word)}
	\label{fig:bench_rand_comp}
\end{figure}

The direct comparison with competing learning algorithm shows, that for the case of random Mealy machines, the developed approaches are able to achieve the set goal of reducing the number of required reset queries during the learning process.
Additionally, certain heuristics are also able to provide a better symbol query performance, compared to other algorithms.
However, some of the results may be attributed to the fact, that the provided counterexamples had near minimal length.
The next paragraph compares the heuristics and algorithms, for a scenario where redundancy is encountered.

\paragraph*{Expanded Separating Word}

Similar to the previous paragraph, \Cref{fig:bench_redundancy_heu}  shows the reset and symbol query performance of the ADTLearner for the same set of heuristics.
Displayed are the averaged values with the standard deviation as error bars.

\begin{figure}[!t]
	\centering
	\subcaptionbox{Reset query performance}[\linewidth]{
		\includegraphics[width=\linewidth]{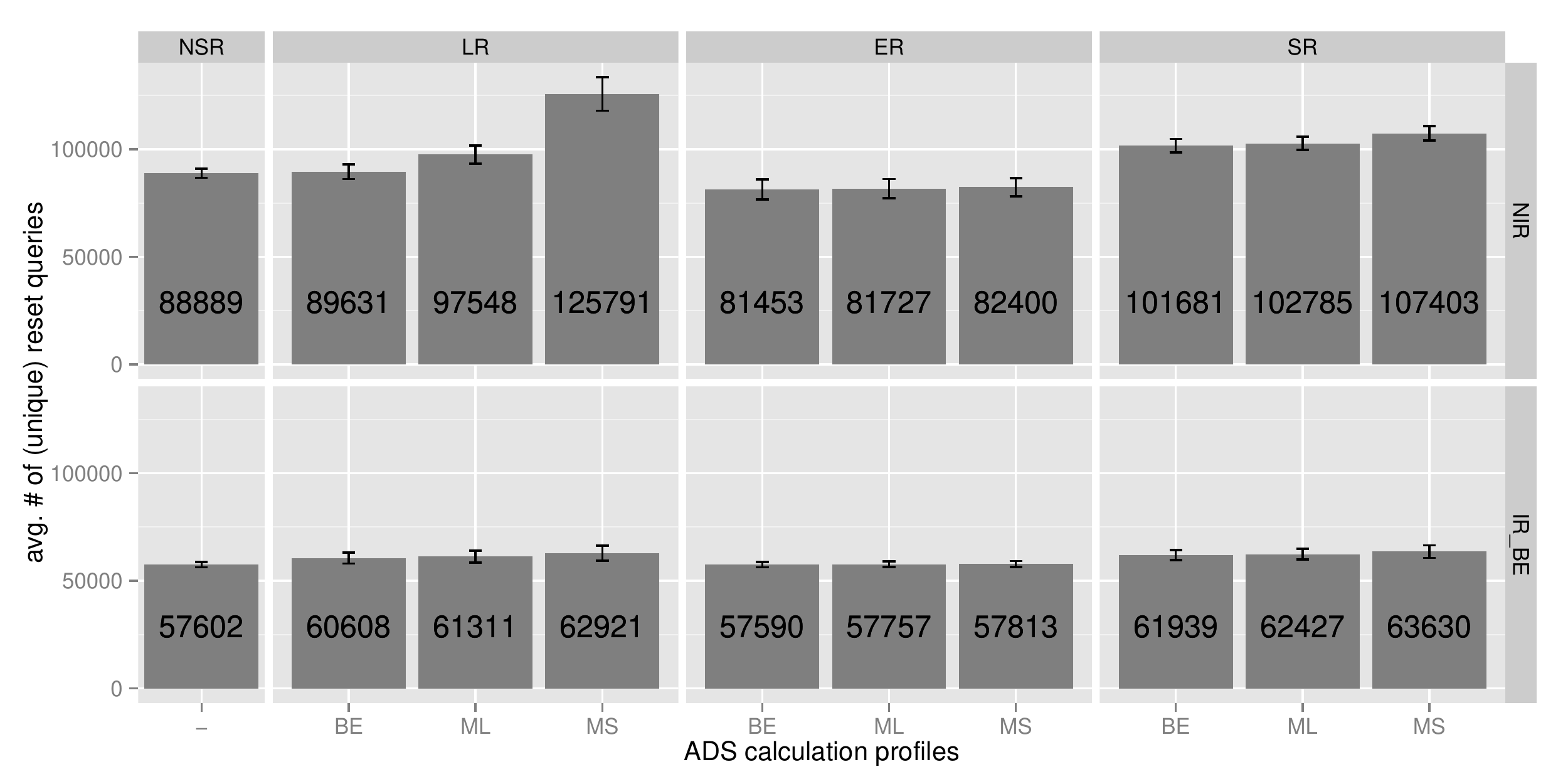}
	}
	\subcaptionbox{Symbol query performance}[\linewidth]{
		\includegraphics[width=\linewidth]{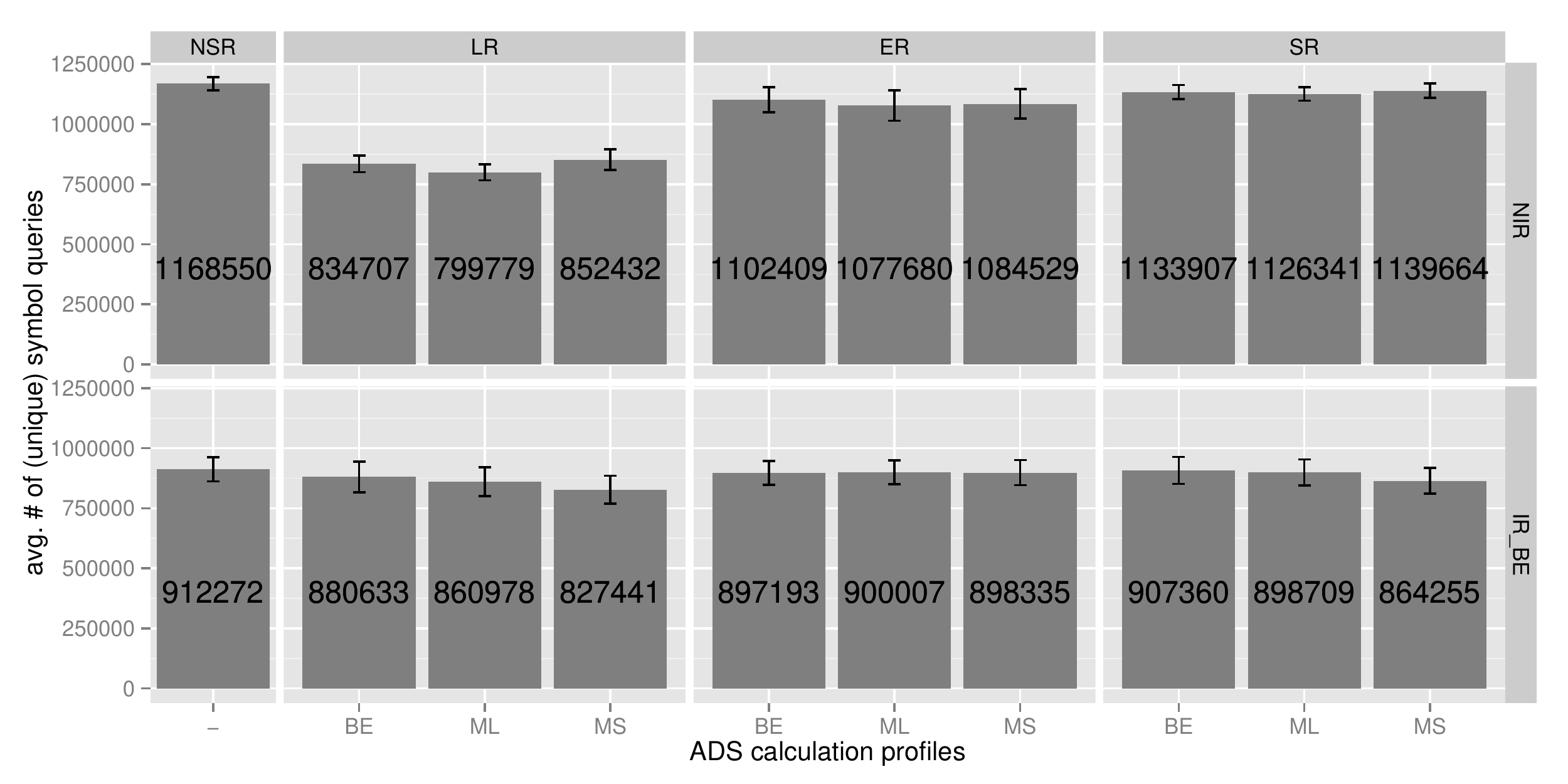}
	}
	\caption{Benchmark: Query performance of selected heuristics (random Mealy, expanded separating word)}
	\label{fig:bench_redundancy_heu}
\end{figure}

While the reset query performance remains somewhat similar to the scenario of regular separating words, the symbol query performance shows interesting new results:
In the previous case, using a subtree replacement heuristic generally increased the amount of (unique) symbol queries posed during the learning process, both with and without the combination of the immediate replacement heuristic.
In case of redundant counterexamples, the exact opposite can be observed.

Every (of the presented) subtree replacement heuristic is able to improve the symbol query performance.
The leveled subtree replacement heuristic, which by observation replaces subtrees the most aggressive way, is thereby the most successful one.
Additionally, the \enquote{minimal size} ADS calculation profile seems to better counteract the redundancy of the counterexamples, while previously -- when exposed to near optimal counterexamples -- decreasing performance.
While again, the sole utilization of the immediate replacement heuristic improves the symbol query performance, this time, additionally utilizing a subtree replacement heuristic further benefits the performance.
For comparing the heuristics with other algorithms, \Cref{fig:bench_redundancy_comp} displays the measured data:

\begin{figure}[ht]
	\centering
	\subcaptionbox{Reset query performance}[\linewidth]{
		\includegraphics[width=\linewidth]{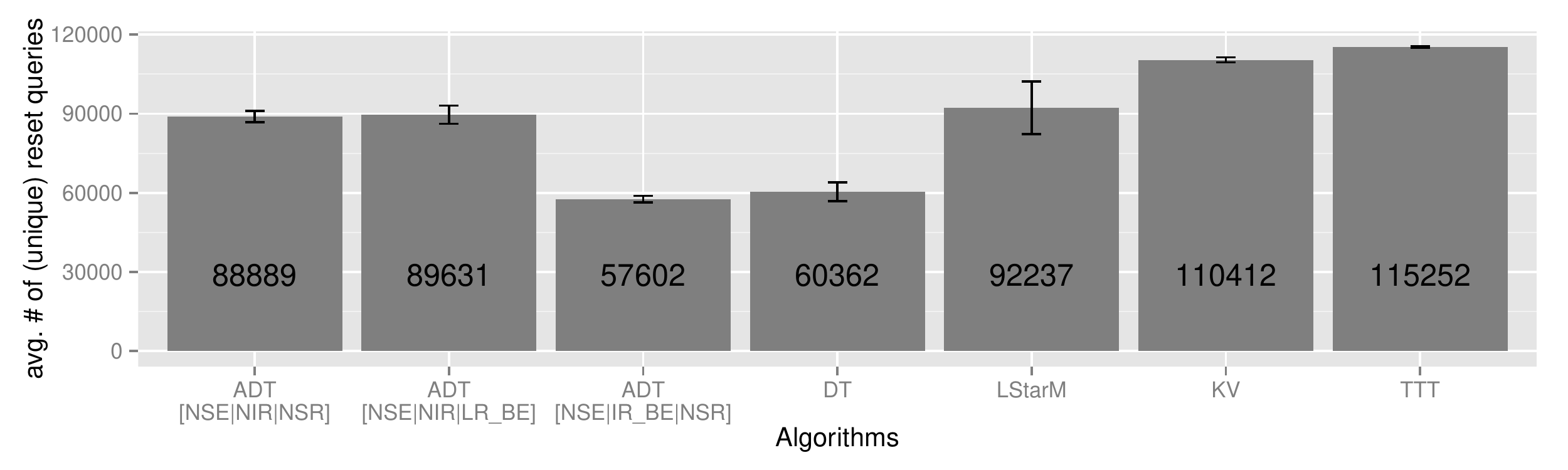}
	}
	\subcaptionbox{Symbol query performance}[\linewidth]{
		\includegraphics[width=\linewidth]{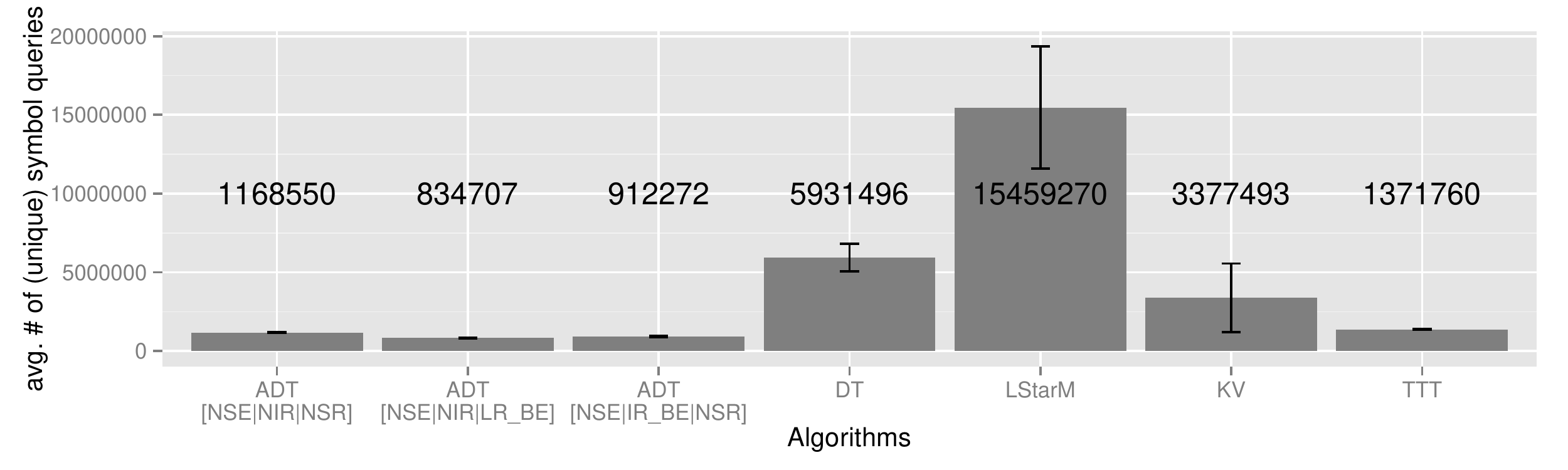}
	}
	\caption{Benchmark: Query performance of selected heuristics compared to other algorithms (random Mealy, expanded separating word)}
	\label{fig:bench_redundancy_comp}
\end{figure}

Regarding the reset performance, the sole application of the immediate replacement heuristic yields the best performance.
Though, in comparison to the results of the first series, the performance of the DT algorithm is much closer to the best heuristic.
This may be due to fact, that long discriminators (see the query performance of the DT algorithm) allow to distinguish between more states.
However, regarding reset- and symbol query performance, the elaborated approaches are still able to pose an improvement to the learning performance.

\subsection{Real-life Benchmarks}

Although synthetic benchmarks easily allow to test a plethora of configurations of target systems, they often lack characteristics of real-world applications.
Especially uniformly sampled random Mealy machines do not follow any specific structure (contrary to real-world applications), which may bias the measured data in a certain way. 
Therefore, to give an additional view on the performance of the algorithms, this section will present benchmark results for two real-world use cases.

\begin{itemize}

\item
For the first use case, the target system is a simulated version of the Online Conference Service (OCS), a web-based conference management service currently developed at the Chair of Programming Systems in Dortmund and used in production by Springer Verlag \cite{ocs}.
Although the simulator (as presented in \cite{windmuel}) adds a certain level of abstraction, such as a discretized input alphabet, it still resembles the core workflow and structure of the original system.
The realistic nature is supported by the fact, that the simulator itself is a piece of executable code for which no formal specification exists.

The simulator provides an interface that allows to input one of $17$ predefined input symbols.
For each input symbol the simulator emits a binary output symbol either indicating success or failure of the input action.
Counterexamples were obtained using an equivalence oracle chain that (in order) consults: a cache consistency oracle; a random word oracle posing $200$ queries of random length $l \in [20, 400]$; and a conformance check using the partial W-Method \cite{Khendek91testselection} with search depth $1$.
A total of $10$ runs were measured.

For the ADTLearner, no configuration for the exhaustive subtree replacement heuristic was tested, since a single run had not finished after $40$ hours.

\item
The target system of the second benchmark is an Engine Status Manager (ESM) \cite{Smeenk_applyingautomata}, a software component used in industrial printers and copiers.
In this benchmark, access to the formal model was available.
The model holds $3410$ states, an input alphabet with $77$ elements and an output alphabet with $151$ elements.

While initially, the usage of an random word equivalence oracle was intended, this approach ran into out-of-memory errors.
As a result, the counterexamples were obtained using separating words and a single run was measured.

No data has been collected for the LStarM algorithm, because even with $32$ gigabyte of memory, the algorithm ran into out-of-memory errors.
Furthermore, regarding the ADTLearner, only the \enquote{best effort} ADS calculation profile and the leveled subtree replacement heuristic were able to terminate in a reasonable ($\leq 60$ hours) amount of time.
\end{itemize}

\paragraph*{OCS}

The first notable observation is already given in the presentation of the benchmark setup:
The computational impact of adaptive distinguishing sequences.
To a slight extend, this effect was already visible for the synthetic benchmarks.
However, the random structure of the target systems seems to improve the process of computing an adaptive distinguishing sequence, because distinguishing behavior can be observed early and often.
In the case of \enquote{structured} target systems, the complexity of computing adaptive distinguishing sequences has a bigger impact on the learning process.

The differences between the individual replacement heuristics are similar to the differences of the synthetic benchmark using expanded separating words, which is why a detailed visualization is omitted.
The sole utilization of the leveled subtree replacement heuristic poses a slight improvement to the reset- and symbol query performance.
However, the immediate replacement heuristic -- again -- allows the biggest improvement with regard to the reset performance.
While the combination of a subtree replacement heuristic and an immediate replacement heuristic worsened the performance reset-wise, it showed an improvement for the query-performance.

A comparison of the competing algorithms is shown in \Cref{fig:bench_ocs_comp}:

\begin{figure}[ht]
	\centering
	\subcaptionbox{Reset query performance}[\linewidth]{
		\includegraphics[width=\linewidth]{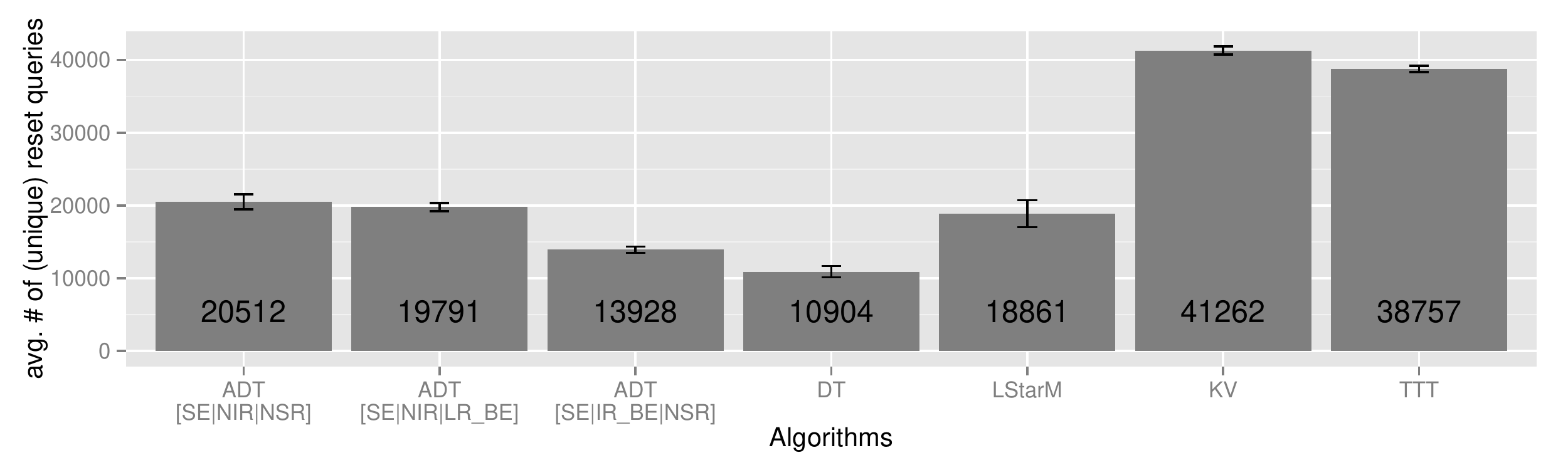}
	}
	\subcaptionbox{Symbol query performance}[\linewidth]{
		\includegraphics[width=\linewidth]{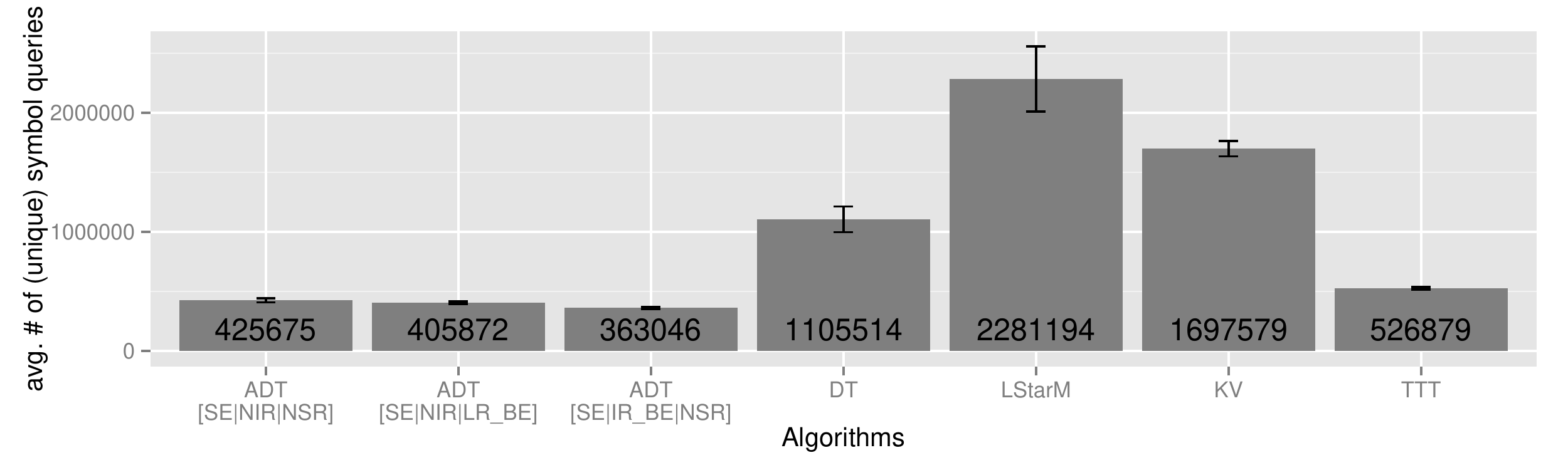}
	}
	\caption{Benchmark: Query performance of selected heuristics compared to other algorithms (OCS)}
	\label{fig:bench_ocs_comp}
\end{figure}

The most notable result is the DT algorithm which showed a better reset-performance than any of the elaborated heuristics.
This may be due to two reasons:

First, the structure of the OCS system does not seem to be suitable for the computation of adaptive distinguishing sequences.
Having only two output symbols, a lot of ADS computations potentially encounter converging states, which does not allow to utilize the full potential of reset-free adaptive distinguishing sequences.
An indication for this is given by the fact, that the ratio between the number of counterexamples (i.e. points in time, where a subtree replacement can be issued) and the number of proposed replacements (and their affected nodes) is lower than e.g. in the random Mealy benchmark (cf. \autoref{cha:appendix}).

Second, the utilization of a query cache may favor the DT algorithm.
The query performance shows that the DT algorithm posed significantly more symbol queries.
Once a long sequence of symbols is queried, every sequence that is a prefix of a previously posed query can be answered by the cache.
However, if only short sequences are queried (as indicated by the query performance of the elaborated approaches), every extension of a cached sequence needs to consult the target system and therefore causes an additional reset.

With respect to the sole reset performance, the ADTLearner was beaten by the DT algorithm.
However, regarding the overall query performance, especially in comparison to the other state-of-the-art algorithms, the elaborated approaches remain competitive.

\paragraph*{ESM}

The increased complexity of the target system outlines the problems of \enquote{ambitioned} heuristics that seek to compute \textit{optimal} replacements.
Even if these heuristics potentially propose optimal replacements, they introduce such high computational (and therefore runtime) costs, that only a very few use cases may profit from it.

Since only the evaluation of the leveled subtree replacement heuristic and the best effort ADS calculation profile was possible, this paragraph directly presents the comparison with other algorithms, which is  shown in \Cref{fig:bench_esm_comp}

\begin{figure}[ht]
	\centering
	\subcaptionbox{Reset query performance}[\linewidth]{
		\includegraphics[width=\linewidth]{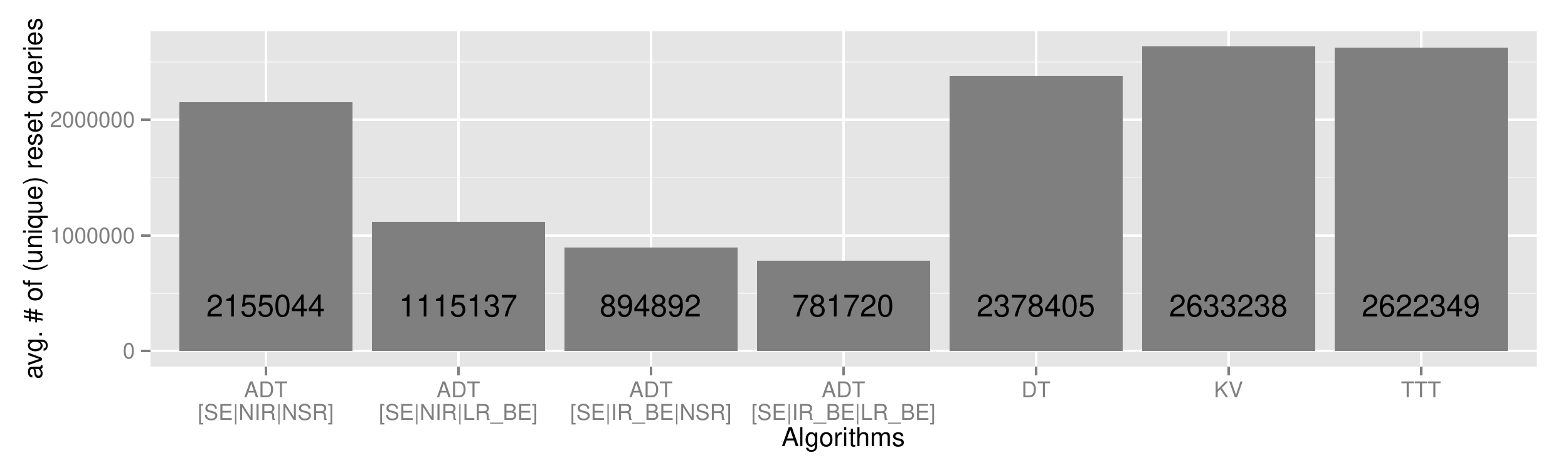}
	}
	\subcaptionbox{Symbol query performance}[\linewidth]{
		\includegraphics[width=\linewidth]{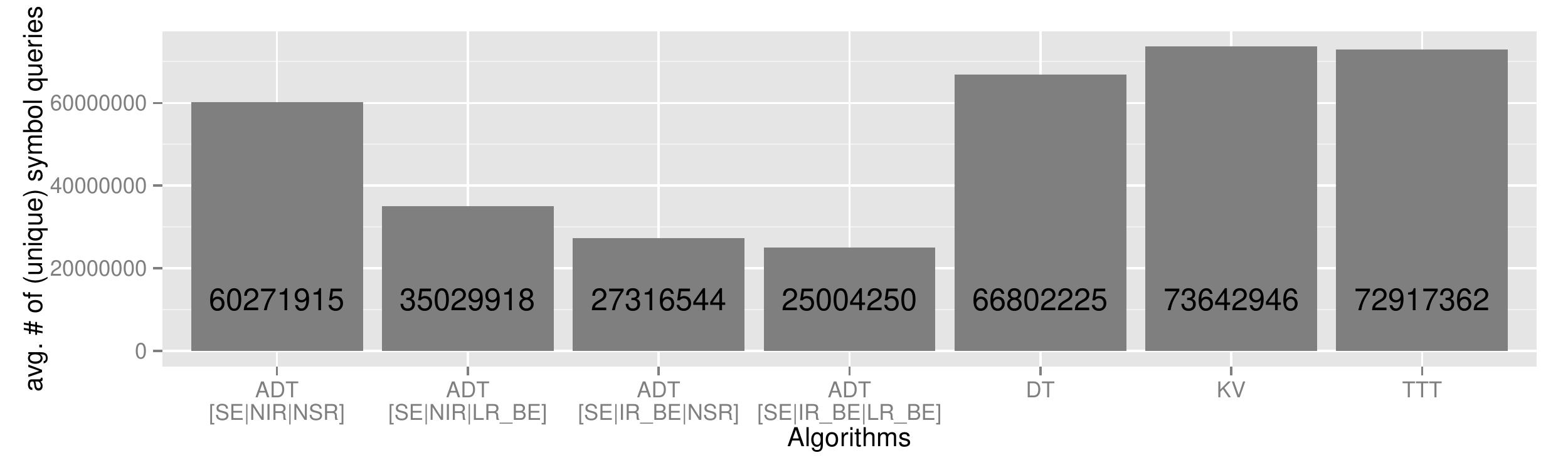}
	}
	\caption{Benchmark: Query performance of selected heuristics compared to other algorithms (ESM)}
	\label{fig:bench_esm_comp}
\end{figure}

As shown by the benchmark results (cf. \autoref{cha:appendix}), the ESM system allowed a much better utilization of adaptive distinguishing sequences.
With respect to the size of the target system, the achieved improvements therefore turn out much more drastic.
While the base algorithm somewhat resembles the performance of the other algorithms, the different heuristics allow a significant improvement regarding both reset and query performance.
Additionally, unlike previous results, the combination of the immediate replacement heuristics and the leveled subtree replacement heuristic performs better than the respective heuristics alone.

However, one has to attribute the fact, that separating words were used as counterexamples.
The overall query performance may vary if exposed to counterexamples containing redundancy, as indicated by the OCS benchmark.

\subsection{Summary}

The empirical analysis has shown that -- albeit the results of the theoretical analysis -- the elaborated approaches often affect the performance of the learning process in a positive way by reducing the amount of (unique) symbol and reset queries.

However, the case-studies further showed that the computational overhead of computing adaptive distinguishing sequences and subtree replacements clearly impacts the learning process.
Promising results were given by the \emph{best effort} ADS calculation profile, the \emph{immediate replacement} heuristic and the \emph{leveled subtree replacement} heuristic.
While in certain situations being the only realistically applicable heuristics, they often yielded the best (query complexity) results.

With regard to the potential improvement of the duration of the active learning process, the data of \autoref{cha:appendix} has to be taken with a grain of salt:
For all benchmarks, the reset mechanism of the target system was a fast operation.
Therefore, the measured duration of the learning algorithm mainly represents the time required for computing subtree replacements and adaptive distinguishing sequences.
The data therefore does not represent potential time savings of algorithms when exposed to a system with an expensive reset mechanism.

As for generality, the potential benefit of the elaborated approaches depends on the structure of the target system.
The OCS use-case showed, that there exist certain situations, in which the different heuristics are not able to outperform existing learning algorithms with regard to the reset performance.
Nevertheless, even in these situations the developed approaches remain competitive.
And, while the structure of random Mealy machines may not necessarily be representative for real-life applications, the ESM case-study showed, that for certain configurations the developed concepts of this thesis allow a significant improvement in query performance.

	\chapter{Summary and Future Work}
\label{cha:future}

This chapter concludes the thesis.
It gives a summary about the goals of this thesis, the developed approaches for achieving these goals and the obtained results.
Moreover, it presents an outlook on possible further research that may be based on the work presented in this thesis.

\section{Summary}

The motivation for this thesis was to improve the active automata learning experience for a set of real-life applications with certain specifics:
Many extensions have been proposed to the original active automata learning framework developed by Angluin, to improve its applicability to real-life applications.
However, for guaranteeing on of its core requirements -- the requirement of independent communication (i.e. independent membership queries) -- it is often resorted to a somewhat artificial reset mechanism.
Since many applications not necessarily include a (reliable) reset mechanism in their original design, it may be realized by an expensive external operation (e.g. restarting a simulator), which may drastically reduce the performance of active automata learning and therefore reduce the will to employ active automate learning.

For tackling the above problem and reducing the negative impact of potentially time-consuming resets, this thesis elaborated the integration of adaptive distinguishing sequences -- a well-studied concept from the field of model-based testing -- in the active learning process.
For achieving this goal, this thesis has first presented a fully self-contained learning algorithm that lifts the active learning process to the adaptive environment required for embedding the proposed concepts.
On the foundation of this base algorithm, a generic framework for \enquote{improving} the performance of the learning algorithm by means of subtree replacements was presented.
For utilizing this elaborated framework and actively employing adaptive distinguishing sequences in the active learning process, a set of replacement heuristics and a ADS calculation profiles was presented.

While the theoretical analysis of the developed approaches has shown, that they -- in a worst-case scenario -- may not necessarily reduce the number of executed resets and may even increase the number of executed symbol queries, the empirical evaluation has shown that in many situations they outperform other active learning algorithms with regard to executed reset queries.
Even if not being able to beat other learning algorithms, the developed approaches remain on a competitive level.

\section{Future Work}

However, the possibilities of this field of research do not end with this thesis.
As stated in \Cref{cha:preliminaries}, this thesis focuses on finite, deterministic Mealy machines as the level of abstraction for the target system.
While there exist many success stories, where this level of abstraction yields good formal specifications for (deterministic) reactive input/output systems, one can easily find scenarios where this model only poorly covers the essential behavior of the target system or may not be practicably applicable at all.

With regard to determinism, references towards inferring non-deterministic automata and computing adaptive distinguishing sequences for non-deterministic systems were already given.
A potential question for future research could be, if non-determinism adds any additional side-effects to the learning process or if the concepts presented in this thesis can be directly carried over and similar improvements are achievable.

As for the general suitability of finite Mealy machines abstracting the target system's behavior, the development of more complex automata kinds -- such as register automata \cite{registerautomata,Cassel2016} -- was mentioned.
There are no algorithms known to the author, that directly target the state identification problem for register Mealy machines by means of an adaptive distinguishing sequence, which raises the question, if the structural extension of register Mealy machines can be exploited to compute \enquote{better} distinguishing sequences. 
Similar to the scenario above, a comparison of the potential benefits is of interest.

The exploration of further improvements is however not limited to the tentative hypothesis.
The usage of adaptive distinguishing sequences was initially motivated by reducing the total amount of executed system resets as these operations were considered most expensive.
Real-world applications may however exhibit a far more complex runtime behavior:
Certain input stimuli may result in irregular high costs if the target system is in a specific state.

The presented approach may be extended to computing minimal-cost adaptive distinguishing sequences.
In certain instances it may even be beneficial to compute multiple partial adaptive distinguishing sequences, i.e. minimal-cost adaptive distinguishing trees, if it helps to omit very expensive input symbols.
While it is generally not possible to compute a true optimal adaptive distinguishing sequence (-tree), since the true states of the target system are not known until termination of the learning process, the costs may be approximated by averaging the costs over all states, i.e. compute the average costs for each input symbol.
The effective runtime may be improved in certain scenarios.

Besides semantical additions, the effective performance of the learning algorithm may also be subject to further research.
As stated several times, the active learning community usually compares algorithms the on basis of their query performance.
While several benchmarks have shown that the developed approaches pose an improvement to the situation, they have also shown that the computational extra work introduced to the learning process impacts its overall runtime.
While this thesis has not directly targeted this issue, the problem is generally of good nature.

Most of the expensive computations (e.g. the computation of replacements) take place in an \emph{offline} scenario, meaning they only require access to local data such as the tentative hypothesis and do not interact with the target system.
This means the critical code paths are not directly related to the active learning process and therefore generic techniques for improving runtime -- such as parallelization -- can be applied:
The computation of an adaptive distinguishing sequence essentially reduces to a search problem.
Since the data of the tentative hypothesis is only accessed in a read-only manner, it can be easily distributed to multiple threads or even clusters and the search can run in parallel.

Even the single-threaded case may allow runtime improvements.
In its submitted state, the computation of adaptive distinguishing sequences, except for the case where the algorithm of Lee and Yannakakis is applicable, is based on constructing state-splitting input sequences by traversing the successor tree.
There may exist more efficient data structures and algorithmic approaches to compute similar results.

Eventually, the positive results presented in this thesis may motivate further investigations on this field of research.

	\listoffigures


	\listofalgorithms
	\cleardoublepage

	\printbibliography 
	\cleardoublepage

	\begin{appendices}

\chapter{Benchmark Results}
\label{cha:appendix}

The parameterizations of the ADTLearner are described by the following scheme:\\\textbf{ADT[a|b\_x|c\_x]}, where

\begin{itemize}

\item \textbf{a $\in$ \{NSE, SE\}} with

\begin{description}

\item[NSE] no subtree extension heuristic and
\item[SE] applied subtree extension heuristic.

\end{description}

\item \textbf{b $\in$ \{NIR, IR\}} with

\begin{description}

\item[NIR] no immediate replacement heuristic and
\item[IR] applied immediate replacement heuristic.

\end{description}

\item \textbf{c $\in$ \{NSR, LR, ER, SR\}} with

\begin{description}

\item[NSR] no subtree replacement heuristic, 
\item[LR] applied leveled replacement heuristic,
\item[ER] applied exhaustive replacement heuristic and
\item[SR] applied single replacement heuristic.

\end{description}

\item and (if applicable)\footnote{Deactivated heuristics do not compute adaptive distinguishing sequences.} \textbf{x $\in$ \{BE, ML, MS\}} with

\begin{description}

\item[BE] best effort: use LY-algorithm or leveled BFS-search to compute ADSs,
\item[ML] minimum length: use successor-tree to compute minimum-length ADSs and
\item[MS] minimum size: use successor-tree to compute minimum-size ADSs.

\end{description}

\end{itemize}

\noindent
The measured values are abbreviated as follows:

\begin{description}

\item[R] Total amount of (unique)\footnote{All algorithms used a symbol query/membership oracle backed by a tree cache.\label{fn:cache}} reset queries posed by the learner.
\item[SQ] Total amount of (unique)\footref{fn:cache} symbol queries posed by the learner.
\item[CE] Total amount of posed equivalence queries.
\item[ADT\_RN] Total amount of reset nodes in the final ADT.
\item[ADT\_RR] Averaged number of reset nodes encountered by a leaf of the final ADT (effective reset costs).
\item[ADT\_PR] Total amount of proposed ADT-subtree replacements.
\item[ADT\_PRAN] Total amount of hypothesis states referenced in proposed replacements.
\item[ADT\_PRS] Total amount of symbol nodes in the proposed replacements.
\item[ADT\_ARS] Total amount of symbol nodes in the accepted replacements.
\item[ADT\_ARR] Total amount of reset nodes in the accepted replacements.
\item[ADT\_ARP] Total amount of perfect (i.e. reset-free) accepted replacements.
\item[ADT\_ARA] Total amount of accepted replacements.
\item[OT\_E] Total amount of successful findings of an extending discriminator in the observation tree.
\item[OT\_S] Total amount of successful finding of a shorter discriminator than provided by the current counterexample.
\item[SIZ] The size of the final hypothesis.
\item[DUR] The time (in milliseconds) taken by the learning algorithm (this excludes the time taken for searching counterexamples).
\end{description}

\newcommand{\plotResult}[3]{
	\begin{longtabu}{l*{#2}{r}}
		\caption{#3}\\
		\toprule
		\rowfont[c]{\bfseries}
		\csvloop{
			file=#1,
			no head,
			no table,
			late after line=\\,
			late after first line=\\\midrule\endhead\bottomrule\endfoot,
			column count=#2,
			command={%
				\csviffirstrow{%
					\csvlinetotablerow\\%
					\midrule\endfirsthead%
					\toprule%
					\rowfont[c]{\bfseries} \csvlinetotablerow%
				}{%
					\csvlinetotablerow
				}%
			},
		}
	\end{longtabu}
}

\begin{landscape}
	\tiny
	\plotResult{data/random.csv}{16}{Averaged results for the random Mealy benchmark with separating words}
\end{landscape}

\clearpage

\begin{landscape}
	\tiny
	\plotResult{data/redundancy.csv}{16}{Averaged results for the random Mealy benchmark with expanded separating words}
\end{landscape}

\begin{landscape}
	\tiny
	\plotResult{data/ocs.csv}{17}{Averaged results for the OCS benchmark}
\end{landscape}

\begin{landscape}
	\tiny
	\plotResult{data/esm.csv}{16}{Results for the ESM benchmark}
\end{landscape}

\end{appendices}

\end{document}